\DeclareMathOperator*{\argmax}{arg\,max}
\newtheorem{prop}{Proposition}
\newcommand{\specialcell}[2][c]{%
  \begin{tabular}[#1]{@{}c@{}}#2\end{tabular}}
\newcommand{\myvec}[1]{\mathbf{#1}}
\newcommand{\va}{\myvec{a}}
\newcommand{\vs}{\myvec{s}}
\newcommand{\supp}{\mathrm{supp}}
\newcommand{\calA}{{\cal A}}
\newcommand{\calB}{{\cal B}}
\newcommand{\calM}{{\cal M}}
\newcommand{\calS}{{\cal S}}
\newcommand{\be}{\begin{equation}}
\newcommand{\ee}{\end{equation}}
\newcommand{\bea}{\begin{eqnarray}}
\newcommand{\eea}{\end{eqnarray}}
\newcommand{\beaa}{\begin{eqnarray*}}
\newcommand{\eeaa}{\end{eqnarray*}}
\DeclareMathAlphabet{\mathpzc}{OT1}{pzc}{m}{n}
\title{Critic Regularized Regression}
\author{
Ziyu Wang\thanks{DeepMind, London, United Kingdom.}~\,\thanks{Google Brain, Toronto, Canada.} \\
\texttt{ziyu@google.com} \And
Alexander Novikov\footnotemark[1] \\
\texttt{anovikov@google.com} \And
Konrad \.Zo\l{}na\footnotemark[1] \\
\texttt{kondiz@google.com} \And
Jost Tobias Springenberg\footnotemark[1] \\
\texttt{springenberg@google.com} \And
Scott Reed\footnotemark[1] \\
\texttt{reedscot@google.com}  \And
Bobak Shahriari\footnotemark[1] \\
\texttt{bshahr@google.com}  \And
Noah Siegel\footnotemark[1] \\
\texttt{siegeln@google.com}  \And
Josh Merel\footnotemark[1] \\
\texttt{jsmerel@google.com}  \And
Caglar Gulcehre\footnotemark[1] \\
\texttt{caglarg@google.com}  \And
Nicolas Heess\footnotemark[1] \\
\texttt{heess@google.com}  \And
Nando de Freitas\footnotemark[1] \\
\texttt{nando@google.com}
}
\begin{document}

\maketitle

\begin{abstract}
  Offline reinforcement learning (RL), also known as batch RL, offers the prospect of policy optimization from large pre-recorded datasets without online environment interaction. It addresses challenges with regard to the cost of data collection and safety, both of which are particularly pertinent to real-world applications of RL. Unfortunately, most off-policy algorithms perform poorly when learning from a fixed dataset. In this paper, we propose a novel offline RL algorithm to learn policies from data using a form of critic-regularized regression (CRR). We find that CRR performs surprisingly well and scales to tasks with high-dimensional state and action spaces -- outperforming several state-of-the-art offline RL algorithms by a significant margin on a wide range of benchmark tasks. 
\end{abstract}

\section{Introduction}
\label{sec:intro}
\begin{figure}[t]
    \centering
    \includegraphics[width=0.75\linewidth]{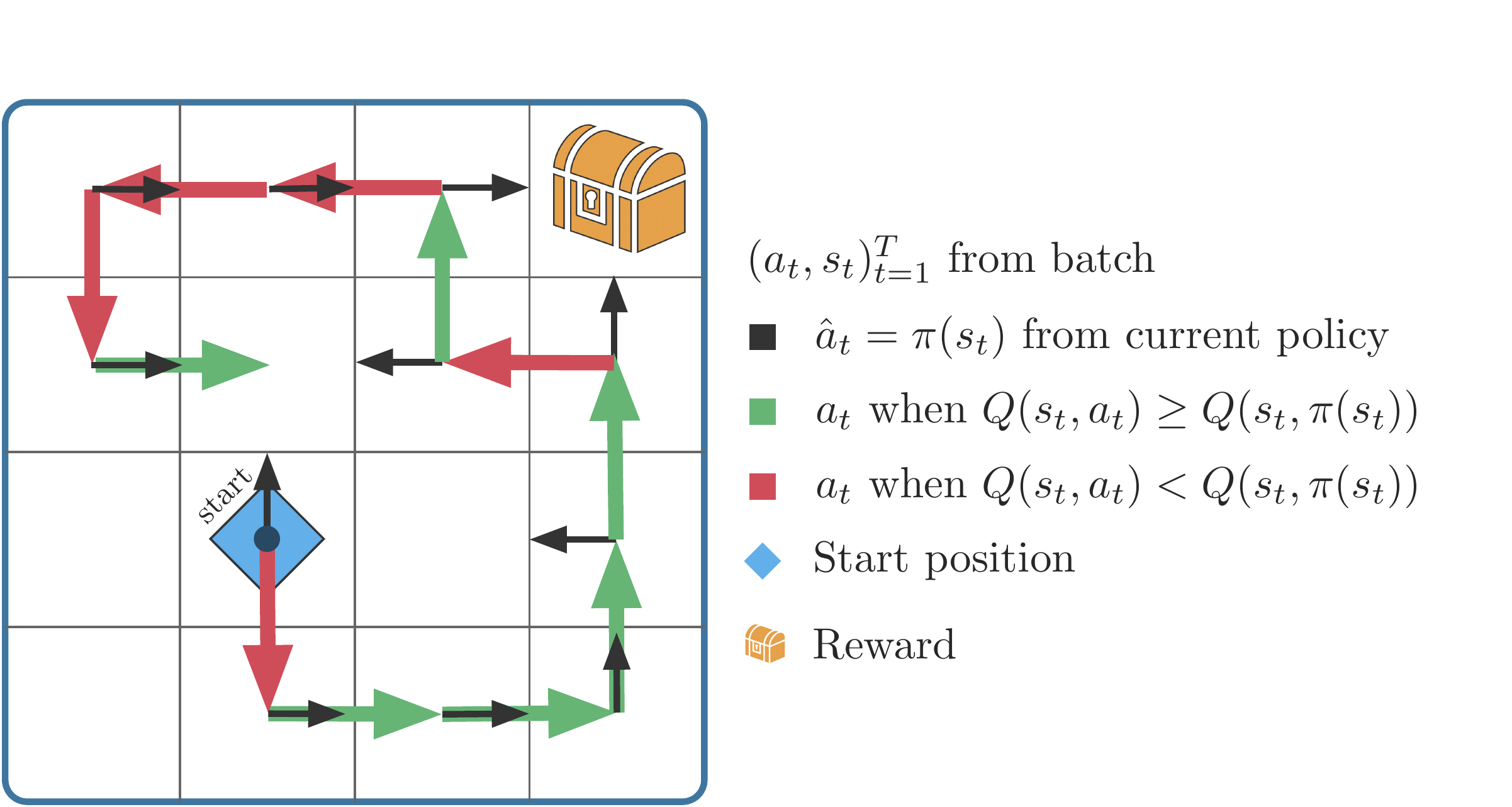}
    \caption{Illustration of the main idea behind CRR. The task is to reach the reward from the starting position as fast as possible. Consider learning a policy from the suboptimal (red/green) trajectory. For every state $s_t$, the action proposed by the current (suboptimal) policy $\pi(s_t)$ is shown with black arrows. CRR compares the critic prediction of the value $Q(s_t, a_t)$ of the action $a_t$ from the trajectory against the value $Q(s_t, \pi(s_t))$ of the action from the policy $\pi$. If $Q(s_t, a_t) \geq Q(s_t, \pi(s_t))$, the corresponding action is marked green and the pair $(s_t, a_t)$ is used to train the policy. If $Q(s_t, a_t) < Q(s_t, \pi(s_t))$, the action is marked red and is not used to train the policy. Thus, CRR filters out bad actions and enables learning better policies from low-quality data. }
    \label{fig:crr}
    \vspace{-0.5cm}
\end{figure}

Deep reinforcement learning (RL) algorithms have succeeded in a number of challenging domains. However, few of these domains have involved real-world decision making. One important reason is that online execution of policies during learning, which we refer to as \emph{online RL}, is often not feasible or desirable because of cost, safety and ethics~\citep{dulac2019real}. This is clearly the case in healthcare, industrial control and robotics. Nevertheless, for many of these domains, large amounts of historical data are available. This has led to a resurgence of interest in \emph{offline RL} methods, also known as batch RL \citep{lange2012batch}, which aim to learn policies from logged data without further interaction with the real system. 

This interest in offline RL is further amplified by the evaluation crisis in RL: RL involves a close interplay between exploration and learning from experiences, which makes it difficult to compare algorithms. By decoupling these two problems and focusing on learning from fixed experiences, it becomes possible to share datasets with benchmarks to improve collaboration and evaluation in the field.

The naive application of off-policy RL algorithms with function approximation to the offline setting has often failed, and has prompted several alternative solutions 
\citep[e.g.][]{fujimoto2019benchmarking,kumar2019stabilizing,siegel2020keep}. A shared conclusion is that failure stems from overly optimistic Q-estimates, as well as inappropriate extrapolation beyond the observed data. This is especially problematic in combination with bootstrapping, where the Q-function is queried for actions that were not observed, and where errors can accumulate.

In this paper, we propose a novel offline RL algorithm to learn policies from data using a form of \emph{critic-regularized regression} (CRR). CRR essentially reduces offline policy optimization to a form of value-filtered regression that requires minimal algorithmic changes to standard actor-critic methods. It is therefore easy to understand and implement. 
Figure~\ref{fig:crr} provides an intuitive explanation of CRR.

Despite the apparent simplicity of CRR, our experiments show that it outperforms several state-of-the-art offline RL algorithms by a significant margin on a wide range of benchmark tasks. Moreover, and very importantly, it scales to tasks with high-dimensional state and action spaces, and does well on datasets of diverse or low-quality data.

\section{Related Work}
\label{sec:rel_work}
For comprehensive in-depth reviews of offline RL, we refer the reader to
\citet{lange2012batch} and
\citet{levine2020offline}. The latter provides and extensive and very recent appraisal of the field.

\emph{Behavior cloning} (BC) \citep{pomerleau1989alvinn} is the simplest form of offline learning. Starting from a dataset of state-action pairs, a policy is trained to map states to actions via a supervised loss. This approach can be surprisingly effective when the dataset contains high-quality data, e.g.\ trajectories generated by an expert for the task of interest; see \citet{merel2018neural} for a large scale application. However, it can easily fail (i) when the dataset contains a large proportion of random or otherwise task irrelevant behavior; or (ii) when the learned policy induces a trajectory distribution that deviates far from that of the dataset under consideration \citep{dagger2011}.

Off-policy deep RL algorithms provide an alternative to BC. Unlike BC, these methods can take advantage of reward functions to outperform the demonstrator, see e.g. \citet{cabi2019scaling}. 
Recently, it was shown that distributional off-policy deep RL techniques \citep{bellemare2017a,barth-maron2018distributional}, unlike their non-distributional counterparts, work well for offline RL in  
 Atari \citep{agarwal2019striving} and robot manipulation \citep{cabi2019scaling}. A recent paper by \citet{fujimoto2019benchmarking} has corroborated  that distributional variants by themselves can be more effective, but under-perform in comparison to an algorithm purposely designed for offline RL, known as Batch-Constrained deep Q-learning (BCQ) \citep{fujimoto2019off}.   

Several methods have been proposed to overcome problems with off-policy deep RL in the offline setting. These failures have mostly been attributed to inappropriate generalization and overly confident Q estimates. 
One line of work focused on constraining the action choices to the support of the training data~\citep{fujimoto2019off,kumar2019stabilizing,siegel2020keep,jaques2019way}.  This can be achieved by first learning a generative model of the data, and then sampling actions from this model for Q-learning \cite{fujimoto2019off}. The generative model can also be used
with constrained optimization to restrict the estimated RL policy
\citep{kumar2019stabilizing,jaques2019way}. Both \citet{fujimoto2019off} and \citet{kumar2019stabilizing} 
further employ multiple Q-function estimates to reduce the optimism bias. 

A second line of attack in offline RL has been to perform weighted behavior cloning.
The idea is to use an estimate of the advantage function to select the best actions in the dataset for BC. This is similar in spirit to the motivation provided in Figure~\ref{fig:crr}.
Examples of this approach include monotonic advantage re-weighted imitation learning (MARWIL) \citep{wang2018exponentially}, best-action imitation learning (BAIL)~\citep{chen2019bail}, advantage-weighted
behavior models (ABM)~\citep{siegel2020keep} and advantage weighted regression~\citep{peng2019advantage}, which has previously been studied in the form of a Fitted Q-iteration algorithm with low-dimensional policy classes \citep{neumannawr}. Our approach differs from most of these in that we do not rely on observed returns for advantage estimation -- as elaborated on in Sec. \ref{sec:crr}.
Additionally, we introduce a technique we call Critic Weighted Policy (CWP) that uses the learned critic to improve results at test time.

A simple (binary) version of the filtering component of CRR was proposed by \citet{nair2017overcoming}, whose results showed that the filter simply accelerated training, but gave mixed results in long horizons.  An earlier version of a similar algorithm can be found in \citet{hasselt2007}. It is important to note that these works perform filtered BC but in online settings, in which further data collection is allowed, and hence filtering may play a lesser role. The online setting is considerably more forgiving than the offline setting, which is the focus of this paper. Indeed, our analysis in Section \ref{sec:experiments} demonstrates that in the offline setting, design elements such as sample estimates for binary and exponential filtering, policy improvement with CWP, carefully designed deep recurrent networks, and the use of distributional value functions make a dramatic difference in the quality of the results. 
Concurrently to our work, \citep{nair2020accelerating} proposed Advantage Weighted Actor Critic (AWAR) for accelerating online RL with offline datasets.
Their formuation is equivalent to CRR with exponential filtering.

\section{Critic Regularized Regression}
\label{sec:crr}
We derive Critic Regularized Regression (CRR), a simple, yet effective, method for offline RL. 

\subsection{Policy Evaluation}
Suppose we are given a fixed dataset $\mathcal{B}$ containing either individual transition tuples $(\vs_t,\va_t,r_t,\vs_{t+1})$ or $K$-step trajectories $(\vs_t,\va_t,r_t,\vs_{t+1},\va_{t+1}, \cdots, \vs_{t+K})$. For brevity we will consider only the first case, but all algorithmic developments directly apply for $K$-step trajectories.
An important step in many off-policy learning approaches is off-policy policy evaluation. This often involves learning an approximation to the Q-function minimizing a temporal difference loss similar to the following
\begin{align}
&\mathbb{E}_{\mathcal{B}}\Big[
    D\big(
        Q_{\theta}(\vs_t, \va_t),
        (r_t + \gamma\mathbb{E}_{\va \sim \pi(\vs_{t+1})} Q_{\theta'}(\vs_{t+1}, \va))
    \big)\Big], \label{eq:distributional}
\end{align}
where the expectation over data is approximated by sampling $(\vs_t, \va_t, r_t, \vs_{t+1}) \sim \mathcal{B}$ and $D$ is some divergence measure; measuring discrepancy between the current estimate of the action-value $Q_{\theta}(\vs_t, \va_t)$ (for policy $\pi)$ and the td-update, based on a target network \citep{mnih2015humanlevel}.
We note that we make use of a distributional Q-function as in \citet{barth-maron2018distributional}, instead of using a squared error for $D$.

As discussed in recent work, \citep[e.g.][]{fujimoto2019off,kumar2019stabilizing,siegel2020keep}, without environment interaction (the offline RL setting) this loss may be problematic due to the ``bootstrapping'' issue mentioned above;
where we evaluate the value of the next state $\vs_{t+1}$ by $\mathbb{E}_{\va \sim \pi(\vs_{t+1})} Q_{\theta'}(\vs_{t+1}, \va)$.
If trained with standard RL, 
a parametric $\pi(\cdot | \vs_{t+1})$ is likely to extrapolate beyond the training data and to propose actions that are not contained in $\mathcal{B}$. For these actions $Q$ will not have been trained and may produce bad estimates.

\subsection{Policy Learning with CRR}
\label{sec:policy_learning}

To mitigate the aforementioned problem, 
we want to avoid evaluating $Q$ for $(\vs,\va) \not\in \mathcal{B}$. Thus, we aim to train $\pi$ by discouraging it from taking actions that are outside the training distribution.
Such a requirement would be hard to achieve with standard policy gradients \citep[e.g][]{silver2014deterministic,heess2015learning,sutton1999policy}. We thus change our objective to match the state-action mapping contained in the training data -- but filtered by the Q-function. Specifically, we optimize
\begin{align}
\argmax_{\pi} \mathbb{E}_{(\vs, \va) \sim \mathcal{B}} \Big[ f(Q_{\theta}, \pi, \vs, \va) \log \pi (\va | \vs) \Big],
\label{eqn:policy_learning}
\end{align}
where $f$ is a non-negative, scalar, function whose value is monotonically increasing in $Q_\theta$.
Fundamentally, Eq.~(\ref{eqn:policy_learning}) tries to copy actions that exist in the data,
thereby restricting $\pi$.
Further properties of the learning objective can be controlled through different choices of $f$, for example when $f := 1$, Eq.~(\ref{eqn:policy_learning}) is equivalent to Behavioral Cloning (BC).
The success of BC is, however, highly dependent on the quality of the dataset $\mathcal{B}$. When $\mathcal{B}$ does not contain enough transitions generated by a policy performing well on the task, 
or the fraction of poor data is too large,
then BC is likely to fail.

Provided $Q$ is sufficiently accurate for $(\vs,\va) \in \mathcal{B}$ (e.g.\ learned using Eq. (\ref{eq:distributional})),
we can consider additional choices of $f$ that enable off-policy learning to overcome this problem:
\begin{align}
&f :=  \mathbbm{1}[\hat{A}_{\theta}(\vs, \va) > 0], \label{eqn:bin}\\
&f := \exp\left(\hat{A}_{\theta}(\vs, \va)/\beta\right), \label{eqn:exp}
\end{align}
where $\beta > 0$ is a hyper-parameter, $\mathbbm{1}$ the indicator function, and $\hat{A}_{\theta}(\vs, \va)$ is an estimated advantage function. Eq.\ (\ref{eqn:policy_learning}) bears similarity to the objective for training the behavior prior in \cite{siegel2020keep}. More precisely, by using $f$ from Eq.~\eqref{eqn:bin} and $\hat{A}_{K}$ (see Table~\ref{tab:advantage_est}) we recover the form of the ABM `prior-policy'. 
Intuitively, Eq.~(\ref{eqn:bin}) entails BC on a filtered dataset
where the filtering increases the average quality of the actions we learn from. The filter is defined in terms of the value of the current policy. 
\begin{algorithm}[t]

\SetAlgoLined
\textbf{Input:} Dataset $\calB$, critic net $Q_{\theta}$, actor net $\pi_{\phi}$, target actor and critic nets: $\pi_{\phi'}$, $Q_{\theta'}$, function $f$ \;
\For{$n_{updates}$}{
  Sample $(\vs_t^i, \va_t^i, r_t^i, \vs_{t+1}^i)_{i=1}^b$ 
  from $\mathcal{B}$. \;
  Update actor (policy) with gradient: $\nabla_{\phi}-\frac{1}{b}\sum_{i}\log \pi_{\phi} (\va_t^i | \vs_t^i) f(Q_{\theta}, \pi_{\phi}, \vs_t^i, \va_t^i)$\;
  Update critic with gradient: $\nabla_{\theta}
  \frac{1}{b}\sum_{i}D\big[
        Q_{\theta}(\vs_t^i, \va_t^i), 
        (r_t^i + \gamma\mathbb{E}_{\va \sim \pi_{\phi'}(\vs_{t+1}^i)} Q_{\theta'}(\vs_{t+1}^i, \va))
    \big]$\;

  Update the target actor/critic nets every $N$ steps by copying parameters: $\theta' \leftarrow  \theta, \ \ \phi' \leftarrow  \phi$.
}
\caption{Critic Regularized Regression \label{alg:learner}}
\end{algorithm}

{\bf Exponential weighting as regularized policy iteration.}
We can gain insight into Eq.~(\ref{eqn:exp}) by realizing that it
approximately 
implements a regularized policy improvement step in a policy iteration scheme. 
Consider
\begin{equation}
    \mathcal{L}(q, \mu_\mathcal{B}) = \mathbb{E}_{\vs \sim \mathcal{B}}\Big[ \mathbb{E}_q [Q_\theta(\vs, \va)] + \beta\mathrm{KL}[q(\cdot | \vs), \mu_\mathcal{B}(\cdot | \vs)] \Big], 
    \label{eqn:kl_objective}
\end{equation}
where $\mu_\mathcal{B}$ is the policy that generated the data. This objective is similar to the off-policy improvement from MPO~\citep{abdolmaleki2018maximum} -- but with $\mu_\mathcal{B}$ instead of the policy prior. The optimal policy for this objective can be written as $q(\va|\vs) = \nicefrac{\exp\left(Q_{\theta}(\vs, \va)/\beta\right) \mu_\mathcal{B}(\va|\vs)}{Z(s)}$.  $Z(s)$ is hard to evaluate in our setting (we only have access to $\mathcal{B}$) so we use $q(\va|\vs) \propto \exp(\hat{A}_{\theta}(\vs, \va)/\beta)$ in practice, thus performing an implicit, approximate per state normalization by subtracting $V(s)$. Projecting this distribution back onto the parametric policy $\pi_\phi$ can be done by minimizing the cross-entropy $H[q(\cdot|\vs), \pi_\phi (\cdot | \vs)]$. Again, to avoid computing the normalizing constant for $q$, we use samples from the dataset $\mathcal{B}$ to estimate this cross-entropy, which yields Eq. \eqref{eqn:policy_learning} with $f$ chosen as in Eq. \eqref{eqn:exp}. Minimizing this cross-entropy corresponds to ``sharpening'' the action distribution $\mu_\mathcal{B}$, giving higher weight to better actions. Further, as $\beta \to \infty$ this objective becomes behavior cloning (BC). 

{\bf Theoretical analysis in the tabular setting.} In the Appendix, we show that in the tabular setting, CRR is safe in the sense that it restricts the action choice to the support of the data, and can be interpreted as implementing a policy iteration scheme that improves upon the behavior policy.

{\bf On the choice of advantage estimators.} Since we are interested in high-dimensional action spaces, we use a sample estimate of the advantage in CRR (Algorithm 1). We contrast different choices for $\hat{A}$ in Table \ref{tab:advantage_est}. From these we consider $\hat{A}_{\text{mean}}$ but found that for small $m$ it may overestimate the advantage due to stochasticity. For $f$ as in Eq.\ (\ref{eqn:bin}) this could lead to sub-optimal actions being included. 
We therefore also consider $\hat{A}_{\max}$; a pessimistic estimate of the advantage.
\begin{table*}[t]
    \centering
    \caption{Advantage estimates considered by different algorithms.}
    \small
    \begin{tabular}{c|c}
        \toprule
         Advantage estimate & Algorithms  \\
         \hline
        $ \hat{A}_{\text{mean}}(\vs_t, \va_t) = Q_{\theta}(\vs_t, \va_t) - \frac{1}{m}\sum_{j=1}^{m}Q_{\theta}(\vs_t, \va^j), \text{with } \ \va^j \sim \pi(\cdot|\vs_t)$ & CRR,FQI-AWR\citep{neumannawr}\\
        $\hat{A}_{\max}(\vs_t, \va_t) = Q_{\theta}(\vs_t, \va_t) - \max_{j=1}^{m}Q_{\theta}(\vs_t, \va^j), \text{with } \ \va^j \sim \pi(\cdot|\vs_t)$ & CRR(max)\\
        $\hat{A}_{k}(\vs_t, \va_t) = \gamma^K V(\vs_{t+K}) + \sum_{t'=t}^{t+K-1}  \gamma^{t'-t} r(\vs_{t'}, \va_{t'}) - V(\vs_t)$  & ABM\citep{siegel2020keep}\\
        $\hat{A}_{MC}(\vs_t, \va_t) = \sum_{t'=t}^{\infty}  \gamma^{t'-t} r(\vs_{t'}, \va_{t'}) - V(\vs_t)$ & MARWIL\citep{wang2018exponentially},AWR\citep{peng2019advantage}
    \end{tabular}
    \label{tab:advantage_est}
\end{table*}

Except for the work on FQI-AWR \citep{neumannawr} -- where the authors considered a simpler setting, learning both Q and V with simple kernel based regression followed by policy learning based on advantage weighting -- the recent literature has mainly considered K-step or Monte-Carlo return based estimates of the advantage ($\hat{A}_{K}, \hat{A}_{MC}$ in the table). Among these, our formulation in Eq. \eqref{eqn:bin} and Eq. \eqref{eqn:exp} is similar to \citep{siegel2020keep} as highlighted above. It also bears some similarity to recent work on advantage weighted regression \citep{wang2018exponentially,peng2019advantage}, which uses $\hat{A}_{K}$, or $\hat{A}_{MC}$ with a learned $V$-function to form advantage estimates. A choice that we find to not work well in many offline RL scenarios, as we explain below. 
A full listing of our algorithm is given in \ref{alg:learner}; where we also adopt standard practices from the deep RL literature to stabilize training (e.g. target networks) and note that we use a distributional critic.

{\bf CRR vs.\ return-based methods.} Using K-step returns (with K $>$ 1) can be problematic when the data is very off-policy as we will show experimentally in the appendix.
Additionally, in stochastic environments, using  K-step (or episodic) returns for estimating advantages may lead to risk seeking or otherwise undesirable behavior.
To illustrate this point, let us
consider the following simple two-armed bandit problem:
the first arm generates a payoff of either $0$, or $1$ with $50\%$ probability each.
The second arm generates a deterministic payoff of $0.9$ -- it hence should be the favored arm. Let us assume that $2/3$ of the actions in our data-set pull the first arm and the remaining $1/3$ the second arm.
A perfectly learned $Q$ function would clearly favor arm two over arm one.
By using returns $R$ for advantage estimation (computing advantages as $R - V$), however, Advantage weighted regression (AWR) \citep{peng2019advantage}, or similar methods, would favor the first arm over the second arm; regardless of the temperature parameter.
Similarly, the prior learned in ABM \citep{siegel2020keep} would choose the two actions equally and therefore also fall short. For ABM, the additional RL step could potentially recover the correct action choice (depending on how far the RL policy is allowed to deviate from the prior) but no guarantee exists that it will.  
Last but not least, methods that rely on reward filtering would also favor arm one over arm two \cite{chen2019bail, kumar2019reward}.

{\bf Critic Weighted Policy (CWP).}
\label{sec:CWP}
The learned $Q$-function can be used during policy execution to perform an additional approximate policy improvement step. Indeed, search-based methods have made use of this fact in discrete domains \cite{silver2018a}.
We can consider the solution of Eqn.~\eqref{eqn:kl_objective} albeit now using the learned policy $\pi_\phi$ as the prior. The solution is given by $\bar{q} = \argmax_{\bar{q}} \mathcal{L}(\bar{q}, \pi_\phi)$, which yields
$\bar{q}(\va|\vs) = \nicefrac{\exp\left(Q_{\theta}(\vs, \va)/\beta\right) \pi_\phi(\va|\vs)}{Z(s)}$. We can use this policy instead of $\pi$ during action selection. 
To sample from $\bar{q}$, we use importance sampling. We first sample actions $\va_{1:n}$ from $\pi_\phi(\cdot|\vs)$,
weight the different actions by their importance weights $\exp(Q_\theta(\vs, \va_i)/\beta)$ and finally choose an action by re-sampling with probabilities $P(\va_i) = \nicefrac{\exp(Q_\theta(\vs, \va_i)/\beta)}{\sum_{j=1}^{n}\exp(Q_\theta(\vs, \va_j)/\beta)}$. Note that this corresponds to self-normalized importance sampling and does not require access to $Z(s)$.

\section{Experiments}
\label{sec:experiments}
\begin{figure*}[b]
\centering
\includegraphics[width=1\linewidth]{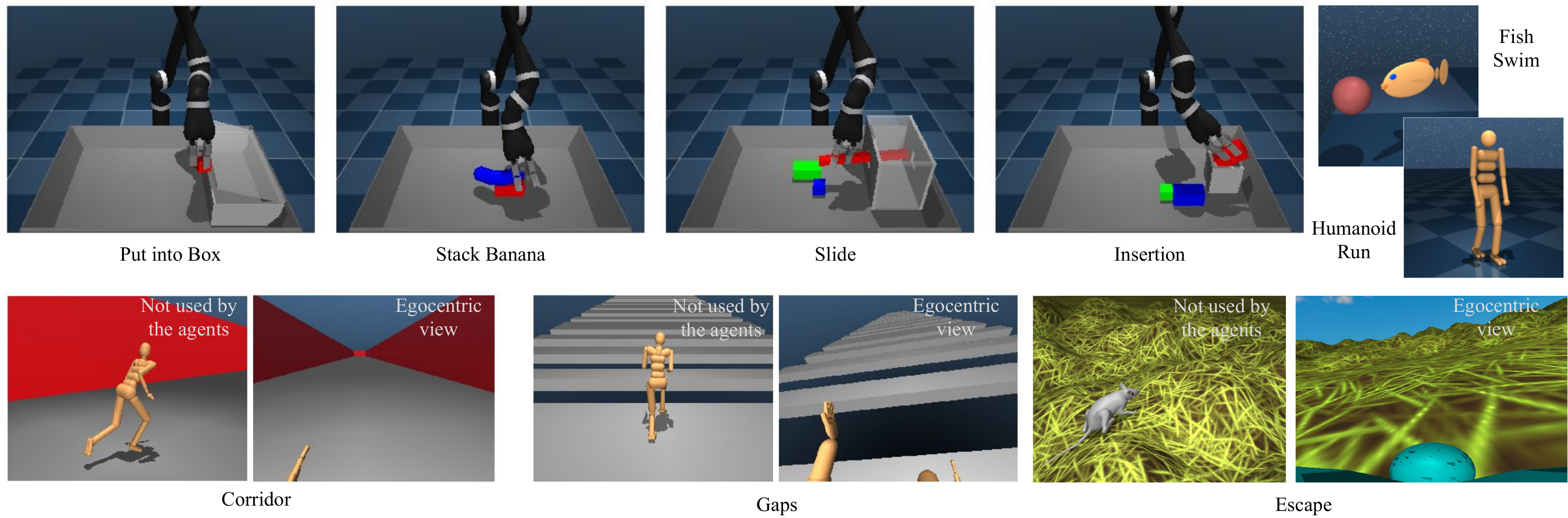}
\caption{
Illustrations of the environments.
The agent has to solve the manipulation tasks using vision provided via $2$ cameras. In the locomotion domains, a head-mounted egocentric camera is used to sense the surroundings. In the Deepmind control suite domains, no visual input is needed.}
\label{fig:all_envs}\vspace{-1.5em}
\end{figure*}

We evaluate our algorithm, CRR, on a number of challenging simulated manipulation and  locomotion domains. Several of our tasks involve very high-dimensional action spaces as well as perception via RGB cameras (subject to weak partial observability due to egocentricity in a locomoting body). Our results demonstrate that CRR works well even in these challenging settings and that it outperforms previously published approaches, in some cases by a considerable margin. We also  perform several ablations that highlight the importance of individual algorithm components, and finally provide results on some toy domains that provide insight on why alternative approaches may fail.

\subsection{Environments and datasets \label{sec:exp-setup}}
We experiment with the continuous control tasks introduced in RL Unplugged (RLU) \citep{dataset}.
There are 17 different tasks in RLU: nine tasks from the Deepmind Control suite \citep{tassa2018deepmind} and seven locomotion tasks.
We additionally introduce four robotic manipulation datasets.
The tasks cover a diverse set of scenarios, making our experimental study one of the most comprehensive to date for offline RL. 
All simulations are conducted using MuJoCo~\cite{todorov2012mujoco}; illustrations of the environments are given in Fig.~\ref{fig:all_envs}.

{\bf Deepmind Control Suite (DCS).} We consider the following tasks: \emph{cartpole-swingup}, \emph{walker-stand}, \emph{walker-walk}, \emph{cheetah-run}, \emph{finger-turn-hard}, \emph{manipulator-insert-ball}, \emph{manipulator-insert-peg}, \emph{fish-swim}, \emph{humanoid-run}, all from the {\color{blue} \href{https://github.com/deepmind/dm_control/tree/master/}{DeepMind Control Suite}}. 
The data contains both successful and unsuccessful episodes. 
Many of these tasks are relatively straightforward with low action dimensions. Observations are given by features (no pixel observations nor partial observability).
A number of these tasks are, however, quite challenging.
Especially for the high-dimensional humanoid body, generating the dataset from several independent learning experiments lead to very diverse data; posing challenges for offline RL algorithms.
For a split between easy and hard tasks, please see Table~\ref{tab:cs}.

{\bf Locomotion.}
We further consider challenging {\color{blue}\href{https://github.com/deepmind/dm_control/tree/master/dm_control/locomotion}{locomotion}} tasks from RLU, for a humanoid \citep[adapted from][]{heess2017emergence, merel2018hierarchical} as well as for a rodent \citep[adapted from][]{merel2020deep}.
The three humanoid tasks require 
running down the corridor at a target speed (\emph{corridor} task), avoiding obstacles like walls (\emph{walls} task) or gaps (\emph{gaps} task).
Data for these tasks is generated by training a hierarchical architecture that uses a pre-trained low-level controller (NPMP), following \citet{merel2018neural}. Note that we use the pre-trained controller only for generating the data sets but \emph{not} in our offline experiments. 
For the rodent, there are four tasks comprising ``escaping'' from a hilly region (\emph{escape} task), foraging in a maze (\emph{forage} task), an interval timing task (\emph{two-tap} task), and a size-proportionate version of the gaps task (\emph{gaps} task) \citep[for details, see][]{merel2020deep}. 
As before, 3 independent online agent training runs are used to record the data for each of the tasks.
The data is again of varying quality as it includes failed trajectories from early in training.
This set of tasks is particularly challenging due to their high dimensional action spaces (56DoF for humanoid and 38DoF for the rodent).
Additionally, the agent must observe the surroundings, to avoid obstacles, using an unstable egocentric camera (controlled via head and neck movements).
Last but not least, a few tasks in this task suite are partially observable and thus require recurrent agents.

\begin{table*}
\caption{Results on Deepmind Control suite. We divide the Deepmind control suite environments into two rough categories: easy (first 6) and hard (last 3). \label{tab:cs}}
\scriptsize
\setlength{\tabcolsep}{3pt}
\centering
\begin{tabular}{lrrrr|rrr}
\toprule
& BC & D4PG & ABM & BCQ & CRR exp & CRR binary & CRR binary max \\
\midrule
Cartpole Swingup & $386 \pm  6$ & $855 \pm 13$ & $798 \pm 30$ & $444 \pm 15$ & $664 \pm 22$ & $\mathbf{860 \pm  7}$ & $858 \pm 15$\\
Finger Turn Hard & $261 \pm 39$ & $764 \pm 24$ & $566 \pm 25$ & $311 \pm 38$ & $714 \pm 38$ & $755 \pm 31$ & $\mathbf{833 \pm 57}$\\
Walker Stand & $386 \pm  6$ & $\mathbf{929 \pm 46}$ & $689 \pm 13$ & $501 \pm  5$ & $797 \pm 30$ & $881 \pm 13$ & $\mathbf{929 \pm 10}$\\
Walker Walk & $417 \pm 33$ & $939 \pm 19$ & $846 \pm 15$ & $748 \pm 24$ & $901 \pm 12$ & $936 \pm  3$ & $\mathbf{951 \pm  7}$\\
Cheetah Run & $407 \pm 56$ & $308 \pm 121$ & $304 \pm 32$ & $368 \pm 129$ & $\mathbf{577 \pm 79}$ & $453 \pm 20$ & $415 \pm 26$\\
Fish Swim & $466 \pm  8$ & $281 \pm 77$ & $527 \pm 19$ & $473 \pm 36$ & $517 \pm 21$ & $585 \pm 23$ & $\mathbf{596 \pm 11}$\\
\midrule
Manipulator Insert Ball & $385 \pm 12$ & $154 \pm 54$ & $409 \pm  4$ & $98 \pm 29$ & $625 \pm 24$ & $\mathbf{654 \pm 42}$ & $636 \pm 43$\\
Manipulator Insert Peg & $324 \pm 31$ & $71 \pm  2$ & $345 \pm 12$ & $194 \pm 117$ & $\mathbf{387 \pm 36}$ & $365 \pm 28$ & $328 \pm 24$\\
Humanoid Run & $382 \pm  2$ & $ 1 \pm  1$ & $302 \pm  6$ & $22 \pm  3$ & $\mathbf{586 \pm  6}$ & $412 \pm 10$ & $226 \pm 11$\\

\bottomrule
\end{tabular}
\end{table*}

\begin{table*}
\centering
\caption{Results on Locomotion Suite. 
The first $3$ tasks can be solved by feedforward agents; the corresponding datasets are not sequential. 
The last 4 tasks necessitate observation histories and all agents here are recurrent.  \label{tab:loco}}
\scriptsize
\begin{tabular}{lrrr|rrr}
\toprule
&BC & D4PG & ABM & CRR exp & CRR binary & CRR binary max \\
\midrule
Humanoid Corridor & $220 \pm 194$ & $ 4 \pm  4$ & $64 \pm  3$ & $\mathbf{918 \pm 14}$ & $484 \pm 97$ & $245 \pm 180$\\
Humanoid Gaps & $149 \pm  9$ & $ 5 \pm  3$ & $94 \pm  9$ & $\mathbf{546 \pm 36}$ & $149 \pm 89$ & $33 \pm 21$\\
Rodent Gaps & $463 \pm 137$ & $176 \pm  6$ & $420 \pm 70$ & $\mathbf{957 \pm 19}$ & $492 \pm 117$ & $392 \pm  6$\\
\midrule
Humanoid Walls & $138 \pm 77$ & $ 2 \pm  1$ & $131 \pm 25$ & $\mathbf{422 \pm 24}$ & $289 \pm 72$ & $232 \pm 24$\\
Rodent Escape & $388 \pm  3$ & $24 \pm 14$ & $441 \pm 16$ & $428 \pm 26$ & $444 \pm 45$ & $\mathbf{499 \pm 40}$\\
Rodent Mazes & $343 \pm 48$ & $53 \pm  1$ & $\mathbf{478 \pm  7}$ & $459 \pm  7$ & $464 \pm 12$ & $457 \pm 13$\\
Rodent Two Tap & $325 \pm 60$ & $16 \pm  2$ & $598 \pm  2$ & $543 \pm 32$ & $\mathbf{615 \pm 19}$ & $588 \pm 14$\\
\bottomrule
\end{tabular}
\end{table*}

{\bf Robotic Manipulation.} These tasks require the agent to control a simulated Kinova Jaco robotic arm 
(9DoF) to solve a number of manipulation problems.
We use joint velocity control (at 20HZ) of all 6 arm joints and the 3 joints of the hand.
The agent observes the proprioceptive features directly, but can only infer the objects on the table from pixel observations.
Two camera views of size $64 \times 64$ are provided:
one frontal camera covering the whole scene, and an in-hand camera for closeup of the objects.
The episodes are of length 400 and the reward function is binary depending on whether the task is successfully executed.
We consider 4 different challenges: \emph{put into box}, \emph{stack banana}, \emph{slide} and \emph{insertion}. 
The dataset for each task is generated from 3 independent runs of a DPGfD agent (8000 episodes each).\footnote{
Human demonstrations are used to train the DPGfD  agents which additionally use distributional critics.}

\begin{figure*}[h!]
\centering
\includegraphics[width=1\textwidth]{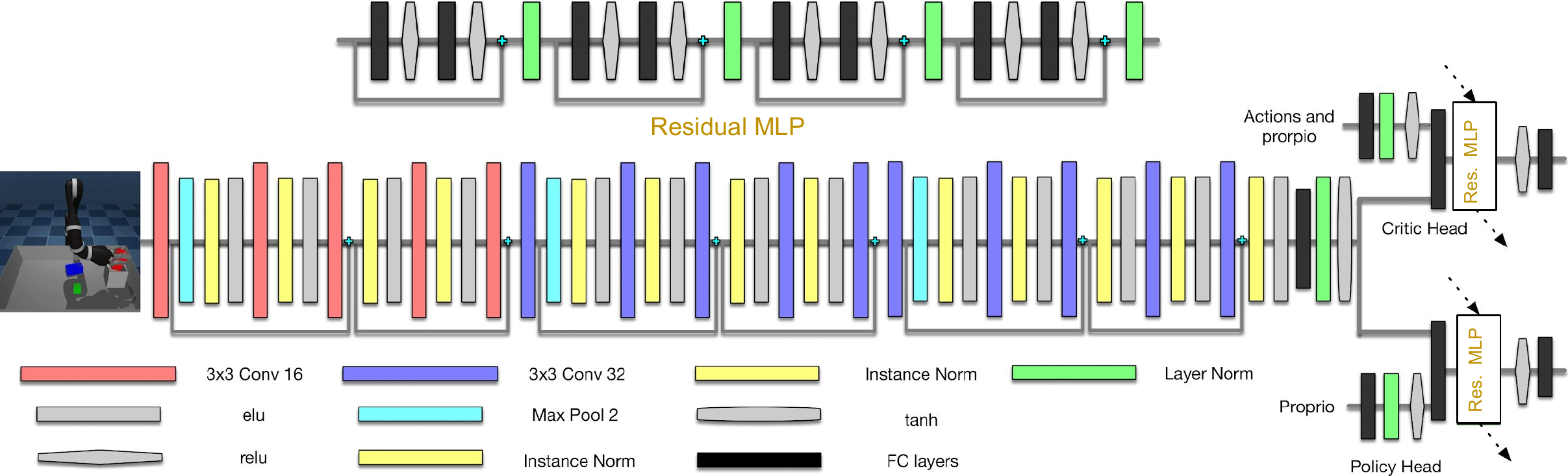}
\caption{\label{fig:crr_architectures}[\textsc{Top}] Residual MLP used as a part of our network architecture. The black blocks indicated linear layers (of size 1024), green blocks layer-norm, and gray blocks ReLUs.
[\textsc{Bottom}] Full CRR network. The residual MLP fits on top of the CRR networks. If recurrence is needed, we add two LSTM layers of size 1024 on top of the residual MLP layers before producing value and policy.}
\vspace{-1em}
\end{figure*}

\subsection{Experimental Setup \label{sec:exp_setup}}

For environments where vision is involved we use ResNets to process the visual inputs.
Proprioceptive information is concatenated with the output of the ResNets and the result is fed into a MLP with residual connections.
The network structures are depicted in Fig. \ref{fig:crr_architectures}.
The critic and policy networks share the vision modules and maintain separate copies of MLPs of identical structure, but employ different last layers to compute the $Q$ and policy respectively.
In our experiments, we use $4$ residual blocks for the MLPs.
For environments where vision is not required, we use the MLP alone.
For CRR policies, we use a mixture of Gaussians policy head with $5$ mixture components.
Crucially, when evaluating CRR (and BC) policies, we turn off the noise in the Gaussian component distributions. 
We show this is essential in achieving good results in supplementary materials. We also provide additional details on our evaluation protocol in the appendix.
For D4PG we compared using the same architecture as for CRR versus using the architecture from~\cite{hoffman2020acme} and found the latter to be superior, so we used it for all experiments.
The results are presented in Table~\ref{tab:cs}, \ref{tab:loco}, and Figure~\ref{fig:mpg_results}.
\begin{figure*}[h!]
\includegraphics[width=0.255\linewidth]{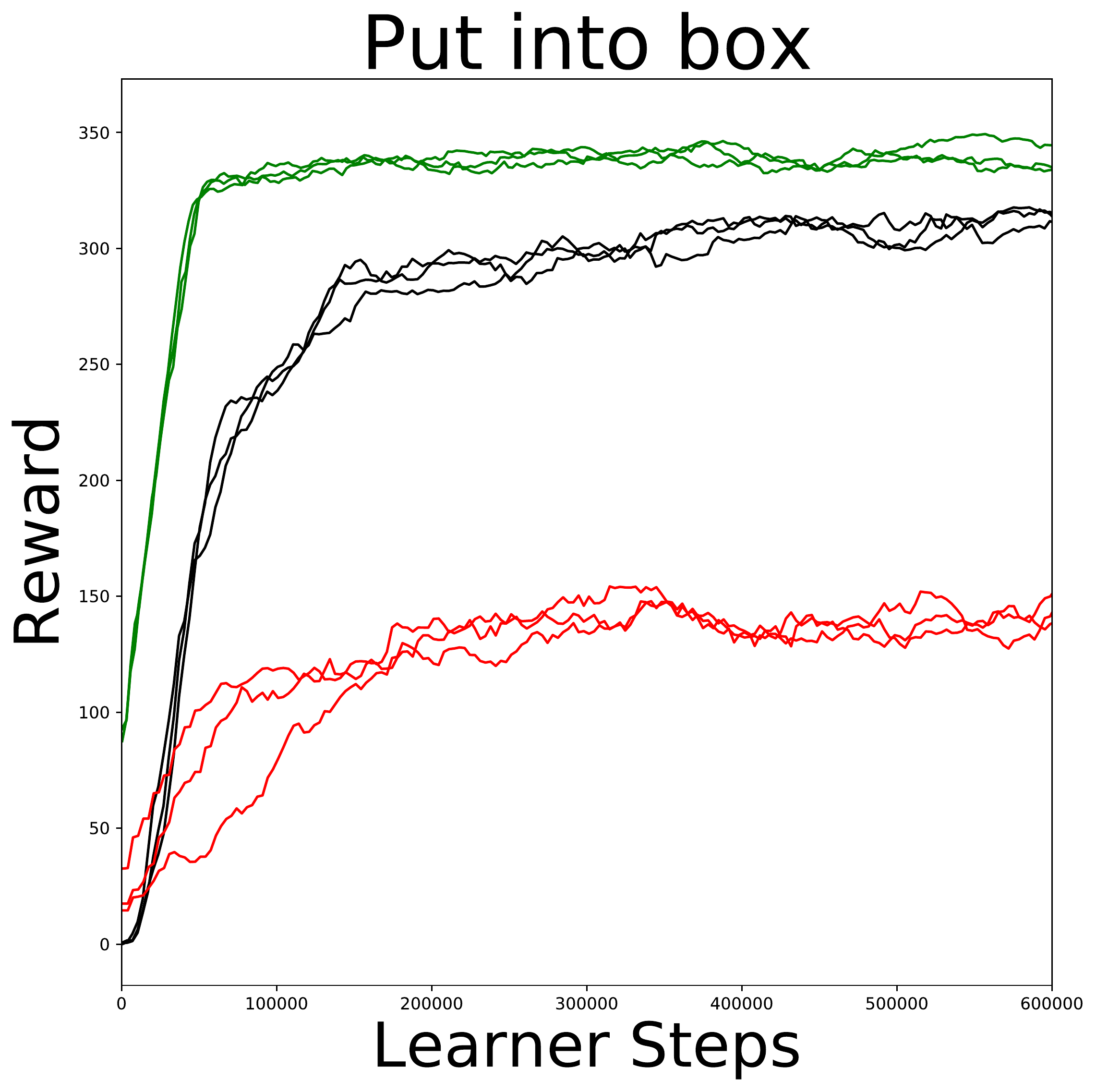}
\includegraphics[width=0.24\linewidth]{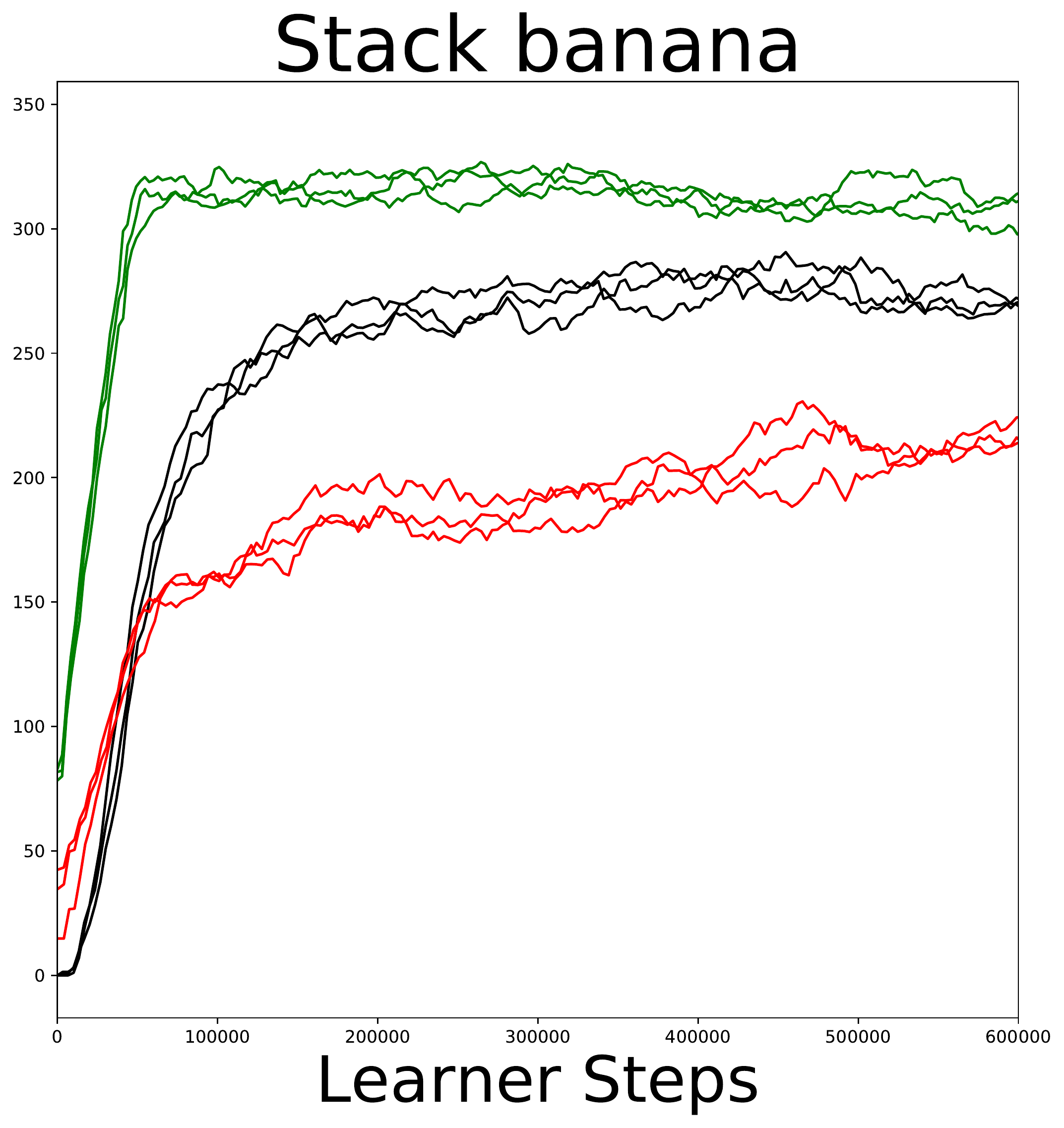}
\includegraphics[width=0.24\linewidth]{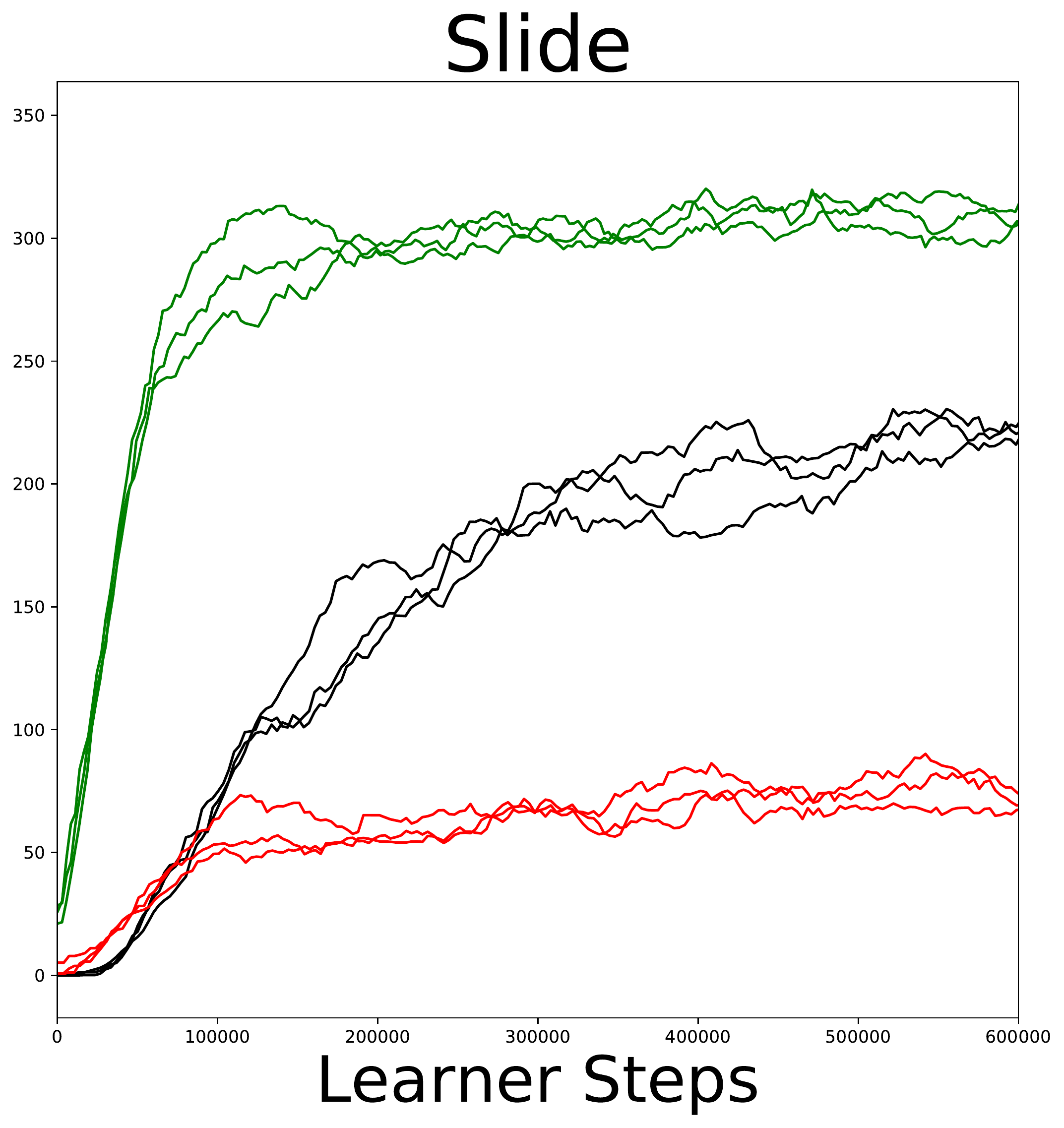}
\includegraphics[width=0.24\linewidth]{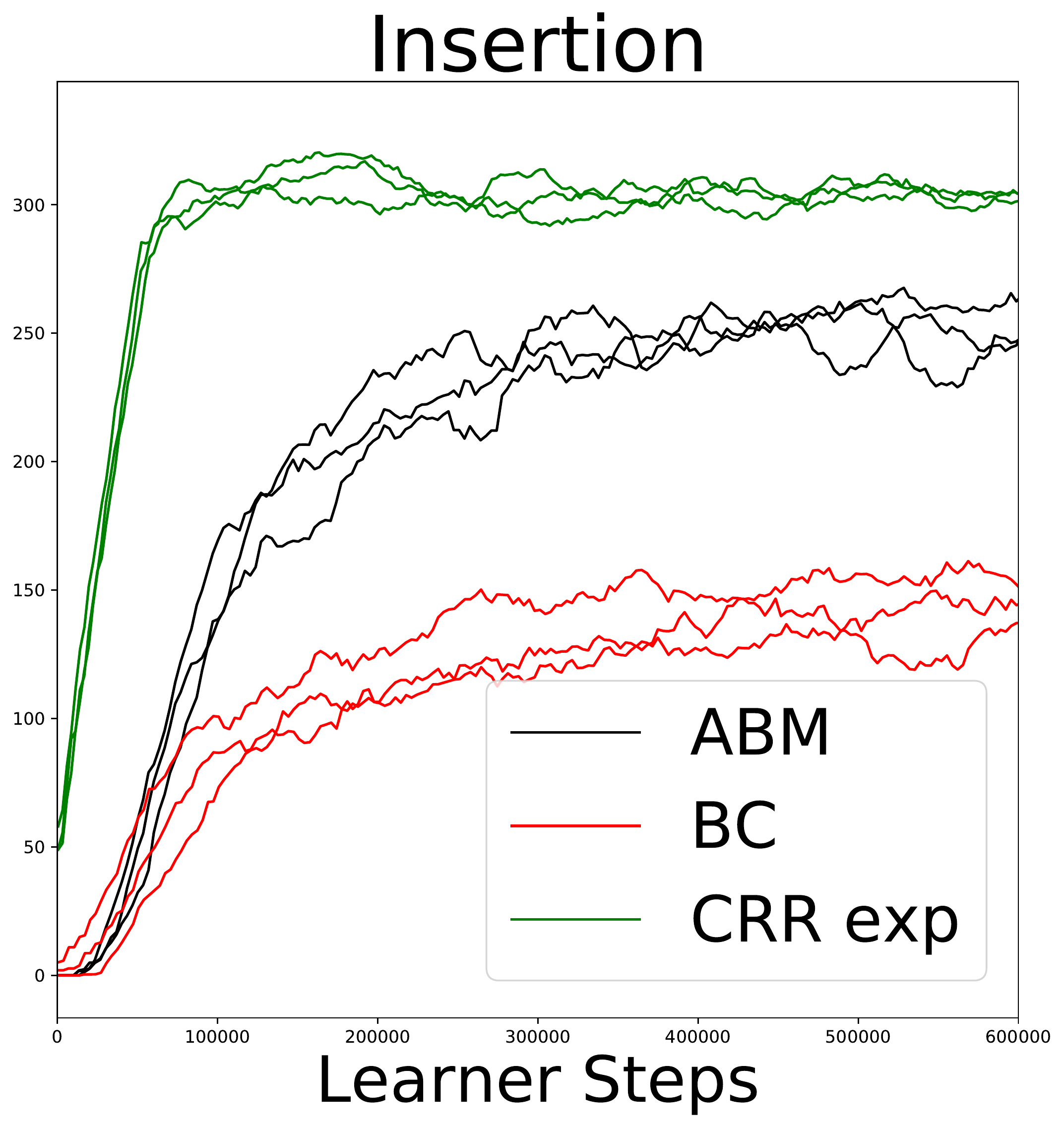} 
\caption{Results for CRR, ABM and BC on manipulation datasets. Only offline data is used during training. \vspace{-2em}}
\label{fig:mpg_results}
\end{figure*}
\subsection{Analysis of CRR variants \label{sec:analysis}}
In our experiments, we evaluate $3$ variants of CRR which we call: \emph{exp} (corresponding to Eqn.~(\ref{eqn:exp}) with function $f$ clipped to the maximal value of $20$ for stability), \emph{binary} (Eqn.~(\ref{eqn:bin})), and \emph{binary max}  (Eqn.~(\ref{eqn:bin})). The first two variants use  $\hat{A}_{\text{mean}}$ and the last one uses $\hat{A}_{\max}$. 
Notice that all three variants do reasonably across all environments and their performance differ little in most environments. 

When learning on low-complexity environments, however, CRR \emph{binary} and especially CRR \emph{binary max} are superior; see Table~\ref{tab:cs}, row 1-4. We hypothesize that this is due to the fact that action filtering imposed by the \emph{binary} (and more so for \emph{binary max}) rule removes low-quality actions more aggressively.
As the complexity of these environments is low, good policies can be learned from high quality data points, making CRR binary (max) particularly effective.
In contrast, the \emph{exp} rule tends to be too permissive and copies sub-optimal actions, leading to performance degradation.

Additionally, the relative value of an inferior action in the dataset compared to that of the policy tends to increase due to value overestimation. As training progresses, CRR therefore tends to copy more inferior actions as its policy improves. This could eventually lead to the decline of performance on some environments.
In the appendix, we demonstrate the value overestimation phenomenon and introduce further analysis.

In contrast, for harder environments, where the task is difficult but high quality data is relatively abundant, CRR exp becomes superior. This may be explained by the fact that CRR binary filters out too many actions to adequately learn policies.
For example, when training on the humanoid-run environment in DCS, CRR binary effectively considers, on average, $10\%-12\%$ of actions in the entire dataset and CRR binary max $2\% - 5\%$.
We speculate that this explains the superior performance of CRR exp on the humanoid tasks in DCS and the locomotion suites. 
On all manipulation environments the binary indicator performs similar to the exponential version. 

CWP generally improves the performance of CRR. Consequently, for all CRR variants we present only results with CWP and leave the ablation to the appendix.

\vspace{-0.2cm}
\subsection{Comparison to baselines \label{sec:exp-RL}}
\paragraph{Comparison to D4PG.}
We compare CRR with a state-of-the-art off-policy RL algorithm for continuous control: D4PG~\citep{barth-maron2018distributional}.
D4PG has been used successfully to solve a variety of problems \cite{tassa2018deepmind} as well as for learning from a combination of offline (and off-task) and online data \citep{cabi2019scaling}.
It utilizes the same distributional critic as CRR, allowing us to isolate the effect of changing the policy losses.
We tuned the network architectures of D4PG on a few control suite tasks and found that 
it favors smaller networks compared to those used for CRR.
The two approaches otherwise differ only in the policy training and how bootstrapping is performed during policy evaluation (for D4PG we query $Q(s, a)$ at the policy mean while CRR employs a sample based approximation).
Full details on the hyper-parameters are given in the appendix.
The results on manipulation as well as the DCS domains are shown in Fig.~\ref{fig:mpg_results} and Tables~\ref{tab:cs} and~\ref{tab:loco}.

Overall, our results confirm previously published results that the naive application of off-policy RL
algorithms can fail in the offline RL settings~\citep{fujimoto2019benchmarking,siegel2020keep}: 
\textbf{D4PG} performs well on some of the DCS domains,
but fails on the domains with higher dimensional action spaces (\emph{humanoid-run}, and problems on the locomotion suite). Here its performance is indistinguishable from a random policy.
This is also consistent with the results of \citep{agarwal2019striving} who find that standard off-policy RL algorithms can perform well in the offline setting in some cases; presumably when the state-action space of the domain is well covered in the dataset.

\vspace{-0.2cm}
\paragraph{Comparison to BCQ / ABM.} We also compare to two recently published offline RL algorithms: BCQ~\citep{fujimoto2019off} and ABM~\citep{siegel2020keep}.
We used the hyperparameters listed in ~\citep{fujimoto2019off} for BCQ, including network architectures and hyperparameters. 
To better ablate the difference between CRR and ABM, we implemented ABM to use exactly the same architectures as CRR with the exception of using Gaussian policy heads for ABM to stay close to the original MPO formulation~\citep{abdolmaleki2018maximum}.
Our implementation of ABM is thus equipped with distributional critics and can use recurrent networks where appropriate.
We adopt hyper-parameters specific to ABM from the original paper. 
(see supplementary for details).

Similar to D4PG, BCQ behaves reasonably well on the easier tasks on DCS. 
It, however, fails to make progress on harder tasks like \emph{humanoid-run} and \emph{manipulator-insert-ball}.
For that reason we do not consider BCQ for more complicated environments.

ABM performs well on most control suite environments. Compared to CRR, however, it underperforms on the Humanoid Run task, as well as the humanoid tasks in the Locomotion Suite. These are the same tasks on which the performance of \emph{CRR binary} is weaker, and we hypothesize that the lower performance can be explained by the structural similarity between \emph{CRR bin} and the prior in \emph{ABM} (see Section \ref{sec:crr}). 
Furthermore, the policy update in MPO is ineffective at reducing the stochasticity of the prior policy which can lower performance in humanoid and manipulation environments.

\vspace{-0.2cm}
\paragraph{Comparison to Behavior Cloning.}
Although BC can work surprisingly well when high quality data for a task is available, its performance suffers in the presence of low-quality data.
We use the same network architecture (without the critic network) and hyper-parameter as for CRR. This allows us to isolate the effect of the advantage-based filtering step that is part of CRR. 
As shown in the experiments, BC performs surprisingly well on a number of environments. 
Notably, unlike D4PG or BCQ, BC behaves reasonably well in environments with very high dimensional action spaces demonstrating its advantages over the RL losses.
BC is, however, inferior to CRR or ABM overall.

\section{Conclusion}
We have presented an algorithm for offline RL that is simpler than existing methods but leads to surprisingly good performance even on challenging tasks. Our algorithm can be seen as a form of filtered behavioral cloning where data is selected based on information contained in the policy's Q-function. We have investigated several variants of the algorithm. CRR \emph{exp} performs especially well across the entire range of tasks considered. Our detailed evaluation has highlighted that design factors, such as the choice of filter, can have significant influence depending on the nature of the task. We have provided some preliminary explanations of these effects. Given the already promising performance of CRR we believe that studying the underlying dynamics further is a valuable direction for future work and has the potential to reveal further algorithmic improvements that may push the frontier of offline RL algorithms in terms of robustness, performance and simplicity.

\section*{Broader Impact}
RL methods represent a unique solution principle that could lead to substantial progress in many real-world applications that are beneficial to society, such as the development of assistive robotic technologies for the disabled.
As online RL and especially exploration is difficult (and sometimes dangerous) in the real world, offline RL provides a path for RL methods to be more broadly applied in practice.
This paper introduces a new algorithm that could lead to improved performance on some real world tasks. As with all algorithms that can be used to automate decision making policies, however, offline RL methods could be used for applications with a negative impact on society.
Since offline RL algorithms require existing datasets, we should also be vigilant when collecting datasets so as to avoid bias and prejudices. 

\bibliographystyle{plainnat}
\bibliography{references}

\newpage
\appendix
\section{Appendix}
\subsection{Analysis of CRR in the tabular setting}
In this section, we show that in the tabular setting, CRR is safe and improves upon the behavior policy defined by the dataset. In addition, we show that as the dataset grows, the policies learned by CRR perform sensibly in the true underlying environment.

We assume that there is an underlying Markov Decision Process (MDP) $(\calS, \calA, P_{\calM}, r, \gamma)$.
For simplicity, we assume finite state and action spaces and a deterministic reward function $r: \calS \times \calA \rightarrow \mathbb{R}$.

Consider a dataset of the format $\calB = \{(\vs_{i}, \va_{i}, \vs_{i}')\}_{i}$.
We assume the dataset is \emph{coherent} (see \cite{fujimoto2019off}), meaning that if $(\vs_{i}, \va_{i}, \vs_{i}') \in \calB$, then $(\vs_{i}', \va_{i}', \vs_{i}'') \in \calB$ for some $\va_{i}', \vs_{i}''$ unless $\vs_{i}'$ is a terminal state.

Given the original MDP and data $\calB$ we define the associated \emph{empirical MDP} $M_{\calB}$ as in Section 4.1 of~\cite{fujimoto2019off}:
The empirical MDP shares the same action space ($\calA$), and state space ($\calS$) along with an additional terminal state $\vs_{term}$.
$M_{\calB}$ follows an empirical state transition probabilities:
$$P_{\calB}(\vs'|\vs, \va) = \frac{N(\vs, \va, \vs')}{\sum_{\bar{\vs}}N(\vs, \va,\bar{\vs})}$$
where $N(\vs, \va, \vs')$ is the count of the appearance of $\vs, \va, \vs'$ in the dataset.
In the case where $\sum_{\bar{\vs}}N(\vs, \va,\bar{\vs}) = 0$, we set $P_{\calB}(\vs_{term}|\vs, \va) = 1$ and set $r(\vs, \va)$ to an arbitrary value.
Assume that $(\vs, \va) \sim \calB$, we define an empirical policy distribution 
$$\mu_{\calB}(\va|\vs) = \frac{N(\va, \vs)}{\sum_{\bar{\va}}N(\bar{\va}, \vs)},$$
where $N(\va, \vs) = \sum_{\bar{\vs}} N(\va, \vs, \bar{\vs})$. If $\sum_{\bar{\va}}N(\bar{\va}, \vs) = 0$, then let $\mu_{\calB}(\cdot | \vs)$ be uniform.
We also define $d_{\calB}(\vs) = \frac{\sum_{\va, \vs'} N(\vs, \va, \vs'))}{|\calB|}$.

Given $M_{\calB}$ and a policy $\pi$, we define the associated value functions 
$$Q^{\pi}_{\calB}(\vs, \va) = \mathbb{E}_{\vs_{t+i+1} \sim P_{\calB}(\cdot|\vs_{t+i}, \va_{t+i}), \va_{t+i} \sim \pi(\cdot|\vs_{t+i}) }\bigg[\sum_{i=0}^{\infty}\gamma^i r(\vs_{t+i}, \va_{t+i}) \,\Big|\, \vs_t = \vs, \va_t = \va\bigg],$$
and $V^{\pi}_{\calB}(\vs) = \mathbb{E}_{\va \sim \pi(\cdot | \vs)} Q^{\pi}_{\calB}(\vs, \va).$

Given $Q_{\calB}^{\pi_i}$, we consider the following CRR objectives in the tabular setting:
\begin{eqnarray}
    \mbox{Tab. CRR exp:} \quad\pi_{i+1} &\leftarrow& \argmax_{\pi} \mathbb{E}_{\vs \sim d_{\calB}}\Bigg[\sum_{\va}Q_{\calB}^{\pi_i}(\vs, \va)\pi(\va|\vs)\Bigg] \nonumber\\ 
    &&\mbox{subject to:}\; KL\Big(\pi(\cdot|\vs)||\mu_{\calB}(\cdot|\vs)\Big) \leq \epsilon \;\forall \vs \label{eqn:crr_tab}\\
    \mbox{Tab. CRR binary:} \quad \pi_{i+1} &\leftarrow& \argmax_{\pi} \mathbb{E}_{\vs \sim d_{\calB}} \bigg[ \sum_{\va} \mathbbm{1}_{[Q_{\calB}^{\pi_i}(\vs, \va) \geq V^{\pi_i}(\vs)]} \mu_{\calB}(\va|\vs) \log \pi(\va|\vs) \bigg]. \label{eqn:crr_bin_tab}
\end{eqnarray}
Notice that the Tabular CRR exp objective looks different from the learning rule defined by Eqn. \ref{eqn:exp}.
However, it is easy to show that (using the KKT conditions)
\begin{eqnarray}
\pi_{i+1}(\va|\vs) =  \frac{\exp\bigg(\frac{Q_{\calB}^{\pi_i}(\vs, \va) - V^{\pi_i}(\vs)}{\beta(\vs)}\bigg) \mu_{\calB}(\va|\vs)}{Z^{\pi_{i+1}}(\vs)}, \label{eqn:exp_form}
\end{eqnarray}
where $\beta(\vs)$ is a state dependent factor and $Z^{\pi_{i+1}}(\vs)$ the normalization constant.
This equation is in a much more familiar form albeit with a state dependent $\beta(\vs)$.

With the tabular CRR objectives defined, we introduce the CRR algorithms in the tabular setting as presented in Algorithm~\ref{alg:tabular_crr}.

\begin{algorithm}[h]
\SetAlgoLined
\textbf{Input:} Empirical MDP $M_{\calB}$, and empirical policy $\mu_{\calB}$ \;
  Start with $\pi_0 = \mu_{\calB}$\;
\For{$i \in \{1, \cdots, \infty$\}}{
  Evaluate $Q_{\calB}^{\pi_i}$\ in the empirical MDP\;
  Compute $\pi_{i+1}$ according to Equation~(\ref{eqn:crr_tab}) or (\ref{eqn:crr_bin_tab})\;
}
\caption{Tabular Critic Regularized Regression \label{alg:tabular_crr}}
\end{algorithm}
To illustrate why CRR is safe is the offline setting, 
we first show that policies trained via CRR (in the $M_{\calB}$) would not try actions not present in the dataset.
\begin{prop}
$\supp \; \pi_{i}(\cdot|\vs) \subseteq \supp \; \mu_{\calB}(\cdot|\vs) \;\; \forall i \geq 1, \vs \in \calB$ for Tabular CRR exp (binary) objectives.
\end{prop}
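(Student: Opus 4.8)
The plan is to handle the two objectives separately: in each case I exhibit the closed form of $\pi_{i+1}(\cdot\mid\vs)$ for $\vs\in\calB$, from which $\mu_{\calB}(\cdot\mid\vs)$ appears as a multiplicative factor and the support containment is immediate; a short induction on $i$ keeps the relevant value functions well defined. Note first that for every policy $\pi$ the functions $Q_{\calB}^{\pi}$ and $V_{\calB}^{\pi}$ are finite, since $M_{\calB}$ has finitely many states and actions, bounded reward, and $\gamma<1$.

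\emph{Exponential variant.} I would invoke the KKT characterization already recorded in~\eqref{eqn:exp_form}: for every $\vs$ with $d_{\calB}(\vs)>0$ the maximizer of~\eqref{eqn:crr_tab} satisfies
\[
\pi_{i+1}(\va\mid\vs)=\frac{\exp\!\big((Q_{\calB}^{\pi_i}(\vs,\va)-V^{\pi_i}(\vs))/\beta(\vs)\big)\,\mu_{\calB}(\va\mid\vs)}{Z^{\pi_{i+1}}(\vs)},
\]
with $\beta(\vs)\in(0,\infty]$ and $0<Z^{\pi_{i+1}}(\vs)<\infty$. Since the exponential factor is strictly positive and finite, $\pi_{i+1}(\va\mid\vs)=0$ whenever $\mu_{\calB}(\va\mid\vs)=0$, i.e.\ $\supp\,\pi_{i+1}(\cdot\mid\vs)\subseteq\supp\,\mu_{\calB}(\cdot\mid\vs)$. (Equivalently, and even more directly: any $\pi$ with $\supp\,\pi(\cdot\mid\vs)\not\subseteq\supp\,\mu_{\calB}(\cdot\mid\vs)$ has $\mathrm{KL}\big(\pi(\cdot\mid\vs)\,\|\,\mu_{\calB}(\cdot\mid\vs)\big)=\infty$ and so is infeasible in~\eqref{eqn:crr_tab}.) Each $\pi_{i+1}$ produced this way is itself a valid distribution, so the recursion is self-consistent and the bound holds for every $i\ge 1$.

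\emph{Binary variant.} Here I argue by induction on $i$, with base case $\pi_0=\mu_{\calB}$ for which $\supp\,\pi_0(\cdot\mid\vs)=\supp\,\mu_{\calB}(\cdot\mid\vs)$. Suppose $\supp\,\pi_i(\cdot\mid\vs)\subseteq\supp\,\mu_{\calB}(\cdot\mid\vs)$ for all $\vs\in\calB$. Fix $\vs$ with $d_{\calB}(\vs)>0$ and set $w(\vs,\va):=\mathbbm{1}_{[Q_{\calB}^{\pi_i}(\vs,\va)\ge V^{\pi_i}(\vs)]}\,\mu_{\calB}(\va\mid\vs)\ge 0$. The objective in~\eqref{eqn:crr_bin_tab} decouples over source states into $\sum_{\va}w(\vs,\va)\log\pi(\va\mid\vs)$. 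Because $V^{\pi_i}(\vs)$ is a convex combination of the values $Q_{\calB}^{\pi_i}(\vs,\va)$ over $\va\in\supp\,\pi_i(\cdot\mid\vs)$, at least one such $\va^{\star}$ satisfies $Q_{\calB}^{\pi_i}(\vs,\va^{\star})\ge V^{\pi_i}(\vs)$, and by the inductive hypothesis $\va^{\star}\in\supp\,\mu_{\calB}(\cdot\mid\vs)$; hence $\sum_{\va}w(\vs,\va)>0$. Maximizing the nonnegatively-weighted cross-entropy $\sum_{\va}w(\vs,\va)\log\pi(\va\mid\vs)$ over probability vectors (with the convention $0\log0:=0$) gives the unique maximizer $\pi_{i+1}(\va\mid\vs)=w(\vs,\va)/\sum_{\bar\va}w(\vs,\bar\va)$, which vanishes wherever $\mu_{\calB}(\va\mid\vs)=0$; thus $\supp\,\pi_{i+1}(\cdot\mid\vs)\subseteq\supp\,\mu_{\calB}(\cdot\mid\vs)$, completing the induction.

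The step I expect to require the most care is the binary case's nondegeneracy claim, namely that $\sum_{\va}w(\vs,\va)>0$ so that the weighted maximum-likelihood problem is a genuine distribution rather than an ill-posed objective; this is precisely where the inductive hypothesis enters, through the observation that $V^{\pi_i}(\vs)$ is an average of $Q_{\calB}^{\pi_i}(\vs,\cdot)$ over the support of $\pi_i(\cdot\mid\vs)$. The remaining ingredients — the closed forms, finiteness of the value functions, and the $0\log0$ convention — are routine.
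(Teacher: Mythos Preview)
Your proof is correct and follows essentially the same approach as the paper: invoke the closed form of $\pi_{i+1}$ in which $\mu_{\calB}$ appears as a multiplicative factor, so that $\mu_{\calB}(\va\mid\vs)=0$ forces $\pi_{i+1}(\va\mid\vs)=0$. The paper treats only the exponential case explicitly and declares the binary case redundant; your inductive argument for the binary variant, establishing that the normalizer $\sum_{\va}w(\vs,\va)$ is strictly positive, is a welcome addition of rigor that the paper leaves implicit.
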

\begin{proof}

We show the correctness of the statement only for the tabular CRR exp objective to avoid redundancy.
Following Eqn.~\ref{eqn:exp_form}, we see that whenever $\mu_{\calB}(\va|\vs) = 0$, we have $\pi_{i+1}(\va|\vs) = 0$.
\end{proof}

In addition to being safe, we show that each iteration of CRR improves performance.

\begin{prop}
(Policy improvement) When using the Tabular CRR binary objective, $Q_{\calB}^{\pi_{i+1}}(\vs, \va) \geq Q_{\calB}^{\pi_{i}}(\vs, \va) \; \forall \vs, \va$ .
\end{prop}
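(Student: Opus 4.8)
The plan is to recognize the Tabular CRR binary update as a filtered policy‑improvement step in the empirical MDP $M_\calB$ and then to apply the classical policy improvement theorem. First I would solve the inner maximization in Eq.~(\ref{eqn:crr_bin_tab}) in closed form. The objective decomposes over states, and at each state $\vs$ one is maximizing $\sum_\va w_\va(\vs)\log\pi(\va|\vs)$ over the simplex with the non‑negative weights $w_\va(\vs)=\mathbbm{1}_{[Q_\calB^{\pi_i}(\vs,\va)\ge V^{\pi_i}(\vs)]}\mu_\calB(\va|\vs)$; whenever $\sum_\va w_\va(\vs)>0$ this is a cross‑entropy that is maximized by the renormalized weights,
\[
\pi_{i+1}(\va|\vs)=\frac{\mathbbm{1}_{[Q_\calB^{\pi_i}(\vs,\va)\ge V^{\pi_i}(\vs)]}\,\mu_\calB(\va|\vs)}{\sum_{\va'}\mathbbm{1}_{[Q_\calB^{\pi_i}(\vs,\va')\ge V^{\pi_i}(\vs)]}\,\mu_\calB(\va'|\vs)} .
\]
For states with $d_\calB(\vs)=0$ the objective is insensitive to $\pi(\cdot|\vs)$, so I would simply fix $\pi_{i+1}(\cdot|\vs)$ by the same formula (with $\mu_\calB(\cdot|\vs)$ uniform) so that the update is defined at every state.

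Next I would verify that this normalization is legitimate, i.e. that the filtered set is non‑empty inside $\supp\mu_\calB(\cdot|\vs)$. By the preceding proposition $\supp\pi_i(\cdot|\vs)\subseteq\supp\mu_\calB(\cdot|\vs)$, and $V^{\pi_i}(\vs)=\sum_\va\pi_i(\va|\vs)Q_\calB^{\pi_i}(\vs,\va)$ is a convex combination of the values $Q_\calB^{\pi_i}(\vs,\cdot)$ over $\supp\pi_i(\cdot|\vs)$; hence some action in $\supp\pi_i(\cdot|\vs)\subseteq\supp\mu_\calB(\cdot|\vs)$ attains $Q_\calB^{\pi_i}(\vs,\va)\ge V^{\pi_i}(\vs)$, so the denominator is strictly positive. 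Given the closed form, the one‑step improvement is then immediate: $\pi_{i+1}(\cdot|\vs)$ is supported on $\{\va : Q_\calB^{\pi_i}(\vs,\va)\ge V^{\pi_i}(\vs)\}$, so $\sum_\va\pi_{i+1}(\va|\vs)Q_\calB^{\pi_i}(\vs,\va)\ge V^{\pi_i}(\vs)$ for every $\vs$, which, writing $T^{\pi_{i+1}}$ for the policy‑evaluation Bellman operator of $\pi_{i+1}$ in $M_\calB$, is exactly $T^{\pi_{i+1}}Q_\calB^{\pi_i}\ge Q_\calB^{\pi_i}$ pointwise.

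Finally I would close with the standard monotone‑iteration argument: $T^{\pi_{i+1}}$ is monotone and (for $\gamma<1$, with bounded rewards on finite $\calS\times\calA$) a contraction whose fixed point is $Q_\calB^{\pi_{i+1}}$, so applying it repeatedly to $Q_\calB^{\pi_i}\le T^{\pi_{i+1}}Q_\calB^{\pi_i}$ gives $Q_\calB^{\pi_i}\le(T^{\pi_{i+1}})^n Q_\calB^{\pi_i}\to Q_\calB^{\pi_{i+1}}$, hence $Q_\calB^{\pi_{i+1}}(\vs,\va)\ge Q_\calB^{\pi_i}(\vs,\va)$ for all $\vs,\va$. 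The only step with any subtlety is the non‑emptiness/normalization check in the middle paragraph (it is what keeps the filtered behavior cloning from collapsing to an ill‑defined policy, and it is the place where the support containment of the previous proposition is genuinely needed); everything else is the textbook policy improvement theorem transplanted to $M_\calB$.
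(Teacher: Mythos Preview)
Your proposal is correct and follows essentially the same route as the paper: derive the closed form $\pi_{i+1}(\va|\vs)\propto\mathbbm{1}_{[Q_\calB^{\pi_i}(\vs,\va)\ge V^{\pi_i}(\vs)]}\mu_\calB(\va|\vs)$, observe that this gives the one-step improvement $\sum_\va\pi_{i+1}(\va|\vs)Q_\calB^{\pi_i}(\vs,\va)\ge V^{\pi_i}(\vs)$, and then roll out the Bellman operator to reach $Q_\calB^{\pi_{i+1}}\ge Q_\calB^{\pi_i}$. The only notable difference is that you are more careful than the paper about the well-posedness of the normalization (non-emptiness of the filtered set via the support-containment proposition) and about states with $d_\calB(\vs)=0$; the paper simply asserts the closed form and proceeds.
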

\begin{proof}
Define
\begin{eqnarray*}
q_{i+1}(\va|\vs) =  \frac{\mathbbm{1}_{[Q_{\calB}^{\pi_i}(\vs, \va) \geq V^{\pi_i}(\vs)]} \mu_{\calB}(\va|\vs)}{Z^{\pi_{i+1}}(\vs)}
\end{eqnarray*}
where $Z^{\pi_{i+1}}(\vs)$ is the partition function.
Then it is easy to see that
\begin{eqnarray*}
&&\argmax_{\pi} \mathbb{E}_{\vs \sim d_{\calB}} \bigg[ \sum_{\va} \mathbbm{1}_{[Q_{\calB}^{\pi_i}(\vs, \va) \geq V^{\pi_i}(\vs)]} \mu_{\calB}(\va|\vs) \log \pi(\va|\vs) \bigg] \\
&=& \argmax_{\pi} \mathbb{E}_{\vs \sim d_{\calB}} \bigg[ KL \Big( q_{i+1}(\cdot|\vs) || \pi(\cdot|\vs)\Big) \bigg].
\end{eqnarray*}
Therefore we have
\begin{eqnarray*}
\pi_{i+1}(\va|\vs) =  \frac{\mathbbm{1}_{[Q_{\calB}^{\pi_i}(\vs, \va) \geq V^{\pi_i}(\vs)]} \mu_{\calB}(\va|\vs)}{Z^{\pi_{i+1}}(\vs)}.
\end{eqnarray*}

From the above equation, it is easy to see that $\forall \va \in \supp\, \pi_{i+1}(\cdot|\vs),\; Q_{\calB}^{\pi_i}(\vs, \va) \geq V^{\pi_i}(\vs)$.
Therefore $\sum_{\va} \pi_{i+1}(\va|\vs) Q_{\calB}^{\pi_i}(\vs, \va) \geq V^{\pi_i}(\vs) = \sum_{\va} \pi_{i}(\va|\vs) Q_{\calB}^{\pi_i}(\vs, \va).$
\begin{eqnarray*}
    &&Q_{\calB}^{\pi_i}(\vs, \va)  \\
    &=& \mathbb{E}
    \bigg[r(\vs_t, \va_t) + \gamma \sum \pi_i(\va_{t+1}|\vs_{t+1}) Q_{\calB}^{\pi_i}(\vs_t, \va_t)  
    \Big| \vs_t = \vs, \va_t = \va \bigg]\\
    &\leq& \mathbb{E}
    \bigg[r(\vs_t, \va_t) + \gamma\sum \pi_{i+1}(\va_{t+1}|\vs_{t+1}) Q_{\calB}^{\pi_{i}}(\vs_t, \va_t) \Big| \vs_t = \vs, \va_t = \va\bigg]\\
    && \quad\quad\quad \quad\quad\quad\quad\quad \quad\quad                                                                       \cdots \\
    &\leq& \mathbb{E}_{\pi_{i+1}} \bigg[\sum_{k=0}^{\infty}
    \gamma^k r(\vs_{t+k}, \va_{t+k}) \Big| \vs_t = \vs, \va_t = \va \bigg]\\
    &=& Q_{\calB}^{\pi_{i+1}}(\vs, \va).
\end{eqnarray*}
\end{proof}

\begin{prop}
(Policy improvement) When using the Tabular CRR exp objective, $Q_{\calB}^{\pi_{i+1}}(\vs, \va) \geq Q_{\calB}^{\pi_{i}}(\vs, \va) \; \forall \vs, \va$ .
\end{prop}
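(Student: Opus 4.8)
The plan is to follow the structure of the binary policy-improvement proposition: first establish the one-step improvement inequality $\sum_{\va}\pi_{i+1}(\va|\vs)\,Q^{\pi_i}_{\calB}(\vs,\va)\ge V^{\pi_i}_{\calB}(\vs)$ at every state that matters, and then feed it into the same Bellman recursion to conclude $Q^{\pi_{i+1}}_{\calB}(\vs,\va)\ge Q^{\pi_i}_{\calB}(\vs,\va)$ pointwise. The only genuinely new step is the one-step inequality, since for the \emph{exp} update $\pi_{i+1}$ re-weights $\mu_{\calB}$ rather than $\pi_i$ (see Eqn.~\eqref{eqn:exp_form}), so the elementary support argument from the binary case is unavailable.

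For the one-step inequality I would work directly with the constrained program~\eqref{eqn:crr_tab} rather than its closed form. Observe that the feasible set $\{\pi : \mathrm{KL}(\pi(\cdot|\vs)\,\|\,\mu_{\calB}(\cdot|\vs))\le\epsilon\ \forall\vs\}$ is the \emph{same} at every iteration, because $\mu_{\calB}$ and $\epsilon$ are fixed. By induction on $i$, each iterate $\pi_i$ lies in this set: $\pi_0=\mu_{\calB}$ gives $\mathrm{KL}=0\le\epsilon$, and every subsequent $\pi_{i+1}$ is an $\argmax$ over the feasible set and hence itself feasible. The objective $\mathbb{E}_{\vs\sim d_{\calB}}\big[\sum_{\va}Q^{\pi_i}_{\calB}(\vs,\va)\pi(\va|\vs)\big]$ is linear in $\pi$ and the constraints decouple across states, so the maximization splits into one linear program per state with $d_{\calB}(\vs)>0$; since $\pi_i(\cdot|\vs)$ is a feasible point of that program and $\pi_{i+1}(\cdot|\vs)$ is a maximizer, $\sum_{\va}Q^{\pi_i}_{\calB}(\vs,\va)\pi_{i+1}(\va|\vs)\ge\sum_{\va}Q^{\pi_i}_{\calB}(\vs,\va)\pi_i(\va|\vs)=V^{\pi_i}_{\calB}(\vs)$. (Intuitively the same fact can be read from~\eqref{eqn:exp_form}, where $\pi_{i+1}$ tilts $\mu_{\calB}$ by a factor increasing in $Q^{\pi_i}_{\calB}(\vs,\cdot)$, making $\mathbb{E}_{\pi_{i+1}}[Q^{\pi_i}_{\calB}]-\mathbb{E}_{\mu_{\calB}}[Q^{\pi_i}_{\calB}]$ a non-negative covariance; but that only compares against $\mu_{\calB}$, which equals $\pi_i$ only when $i=0$, so the optimization argument is the one that handles general $i$.)

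With the one-step inequality in hand I would finish exactly as in the binary proof: write $Q^{\pi_i}_{\calB}(\vs,\va)=r(\vs,\va)+\gamma\,\mathbb{E}_{\vs'\sim P_{\calB}(\cdot|\vs,\va)}[V^{\pi_i}_{\calB}(\vs')]$, bound $V^{\pi_i}_{\calB}(\vs')\le\sum_{\va'}\pi_{i+1}(\va'|\vs')Q^{\pi_i}_{\calB}(\vs',\va')$, and keep substituting Bellman expansions of $Q^{\pi_i}_{\calB}$ under $\pi_{i+1}$; the telescoping iteration converges (finite MDP, $\gamma<1$) to $Q^{\pi_{i+1}}_{\calB}(\vs,\va)$, giving $Q^{\pi_i}_{\calB}\le Q^{\pi_{i+1}}_{\calB}$ everywhere.

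The point that needs the most care — the closest thing to an obstacle — is that~\eqref{eqn:crr_tab} only pins down $\pi_{i+1}(\cdot|\vs)$ where $d_{\calB}(\vs)>0$, whereas the recursion seems to need the one-step inequality at all successor states. This is harmless: by coherence of $\calB$, any $\vs'$ with $P_{\calB}(\vs'|\vs,\va)>0$ is either $\vs_{term}$ — where all value functions vanish and the inequality is trivial — or occurs as a source state in $\calB$ and hence has $d_{\calB}(\vs')>0$; and if $(\vs,\va)$ has no recorded successor at all then $P_{\calB}(\vs_{term}|\vs,\va)=1$ and $Q^{\pi}_{\calB}(\vs,\va)=r(\vs,\va)$ independently of $\pi$, so the conclusion holds with equality. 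Thus the recursion only ever invokes the one-step inequality where the optimization argument applies, and the statement follows for all $\vs,\va$.
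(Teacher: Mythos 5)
Your proposal is correct and follows essentially the same route as the paper's proof: the one-step inequality $\sum_{\va}\pi_{i+1}(\va|\vs)Q^{\pi_i}_{\calB}(\vs,\va)\ge\sum_{\va}\pi_{i}(\va|\vs)Q^{\pi_i}_{\calB}(\vs,\va)$ obtained from optimality of $\pi_{i+1}$ over a feasible set containing $\pi_i$, followed by the standard Bellman telescoping argument. You are in fact somewhat more careful than the paper, which silently assumes feasibility of $\pi_i$ and the per-state decoupling of the argmax, and does not discuss states outside $\supp d_{\calB}$ or the terminal state; your coherence argument closes those gaps.
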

\begin{proof}
We have that $KL(\pi_i(\cdot|\vs)||\mu_{\calB}(\cdot|\vs)) \leq \epsilon \; \forall \vs$. 
Since $$\pi^{i+1} = \argmax_{\pi} \mathbb{E}_{\vs \sim d_{\calB}}\Bigg[\sum_{\va}Q_{\calB}^{\pi_i}(\vs, \va)\pi(\va|\vs)\Bigg] \;\mbox{s.t.}\; KL(\pi(\cdot|\vs)||\mu_{\calB}(\cdot|\vs)) \leq \epsilon \;\forall \vs,$$ we know that $\sum_{\va}Q_{\calB}^{\pi_i}(\vs, \va)\pi_{i+1}(\va|\vs) \geq \sum_{\va}Q_{\calB}^{\pi_i}(\vs, \va)\pi_i(\va|\vs) \; \forall \vs$.

It is easy to show that
\begin{eqnarray*}
    &&Q_{\calB}^{\pi_i}(\vs, \va)  \\
    &=& \mathbb{E}
    \bigg[r(\vs_{t}, \va_{t}) + \gamma \sum \pi_i(\va_{t+1}|\vs_{t+1}) Q_{\calB}^{\pi_i}(\vs_t, \va_t)\Big| \vs_t = \vs, \va_t = \va \bigg]\\
    &\leq& \mathbb{E}
    \bigg[r(\vs_{t}, \va_{t}) + \gamma\sum \pi_{i+1}(\va_{t+1}|\vs_{t+1}) Q_{\calB}^{\pi_{i}}(\vs_t, \va_t) \Big| \vs_t = \vs, \va_t = \va\bigg]\\
    && \quad\quad\quad\quad\quad \quad\quad\quad\quad\quad                                                        \cdots \\
    &\leq& \mathbb{E}_{\pi_{i+1}} \bigg[\sum_{k=0}^{\infty}
    \gamma^k r(\vs_{t+k}, \va_{t+k}) \Big| \vs_t = \vs, \va_t = \va \bigg]\\
    &=& Q_{\calB}^{\pi_{i+1}}(\vs, \va).
\end{eqnarray*}
\end{proof}
Finally we show that the difference in $Q$ values for a given policy $\pi$ between the ground truth MDP and the empirical MDP reduces as $|\calB|$ increases. This allows us to conclude that the policies learned in the empirical MDP $M_{\calB}$ is also a sensible policy in the original MDP.

\begin{prop}
\label{prop_4}
Consider
$$\epsilon_{MDP}(\vs, \va) = Q^{\pi}(\vs, \va) - Q_{\calB}^{\pi}(\vs, \va).$$
Define sets defined as $S(\vs, \va) = \{(\bar{\vs}, \bar{\va}, \bar{\vs}') \in \calB \; | \; \bar{\vs}=\vs, \bar{\va}=\va\}$.
If as $|\calB| \rightarrow \infty$, $S(\vs, \va) = \emptyset$ or $|S(\vs, \va)| \rightarrow \infty$, and $\bar{\vs}'$ is an i.i.d sample of $P(\cdot|\bar{\va}, \bar{\vs}) \;\forall\; (\bar{\vs}, \bar{\va}, \bar{\vs}') \in \calB$, then
 $$\text{as } |\calB|\rightarrow \infty, \underset{\substack{\vs\in \supp \,d_{\calB} \\ \va \in \supp \, \pi(\cdot|\vs)}}{\sup}\epsilon_{MDP}(\vs, \va) \rightarrow 0.$$
\end{prop}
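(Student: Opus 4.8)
The plan is to combine a law-of-large-numbers argument for the empirical transition kernel with a ``simulation lemma''-style recursion that propagates the one-step model error through the value function.

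First I would restrict attention to the relevant set $\calR_{\calB} = \{(\vs,\va) : \vs\in\supp d_{\calB},\ \va\in\supp\pi(\cdot|\vs)\}$. For the policies produced by the tabular CRR algorithm we have $\supp\pi(\cdot|\vs)\subseteq\supp\mu_{\calB}(\cdot|\vs)$ by Proposition 1, so every $(\vs,\va)\in\calR_{\calB}$ actually appears in $\calB$ and the ``$S(\vs,\va)=\emptyset$'' branch of the hypothesis is excluded, leaving $|S(\vs,\va)| = N(\vs,\va)\to\infty$. Since the next states recorded in $S(\vs,\va)$ are i.i.d.\ draws from $P(\cdot|\vs,\va)$ and $\calS$ is finite, the strong law of large numbers (Glivenko--Cantelli over a finite alphabet) gives $P_{\calB}(\cdot|\vs,\va)\to P(\cdot|\vs,\va)$ almost surely, and since $\calR_{\calB}$ is finite this is uniform: $\delta_{\calB} := \sup_{(\vs,\va)\in\calR_{\calB}}\sum_{\vs'}|P(\vs'|\vs,\va) - P_{\calB}(\vs'|\vs,\va)| \to 0$ a.s.

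Next I would write the Bellman fixed-point equations for $Q^\pi$ in the true MDP $\calM$ and for $Q_{\calB}^\pi$ in $M_{\calB}$ and subtract them. On $\calR_{\calB}$ the reward terms cancel (the reward is deterministic and the empirical MDP uses the true reward on pairs that occur in the data), giving, for $(\vs,\va)\in\calR_{\calB}$,
\[
\epsilon_{MDP}(\vs,\va) \;=\; \gamma\sum_{\vs'}P(\vs'|\vs,\va)\sum_{\va'}\pi(\va'|\vs')\,\epsilon_{MDP}(\vs',\va') \;+\; \gamma\sum_{\vs'}\big(P(\vs'|\vs,\va) - P_{\calB}(\vs'|\vs,\va)\big)V_{\calB}^\pi(\vs').
\]
Once $|\calB|$ is large enough that every $\vs'$ with $P(\vs'|\vs,\va)>0$ has been observed for each $(\vs,\va)\in\calR_{\calB}$, coherence forces each such $\vs'$ to be terminal (value $0$) or to lie in $\supp d_{\calB}$, so every pair in the first sum lies in $\calR_{\calB}$ and that sum is at most $\gamma\,\bar\epsilon_{\calB}$ in absolute value, where $\bar\epsilon_{\calB}:=\sup_{(\vs,\va)\in\calR_{\calB}}|\epsilon_{MDP}(\vs,\va)|$; the same observation shows every $\pi$-trajectory in $M_{\calB}$ started in $\supp d_{\calB}$ stays there until termination, so $|V_{\calB}^\pi(\vs')|\le R_{\max}/(1-\gamma)$ for $\vs'\in\supp d_{\calB}$, with $R_{\max}=\max_{\vs,\va}|r(\vs,\va)|<\infty$. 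Bounding the second term by $\gamma\,\delta_{\calB}R_{\max}/(1-\gamma)$ and taking the supremum over $\calR_{\calB}$ yields $\bar\epsilon_{\calB}\le\gamma\,\bar\epsilon_{\calB}+\gamma\,\delta_{\calB}R_{\max}/(1-\gamma)$, hence $\bar\epsilon_{\calB}\le\gamma\,\delta_{\calB}R_{\max}/(1-\gamma)^2$. With $\delta_{\calB}\to0$ from the previous step this gives $\bar\epsilon_{\calB}\to0$, and the quantity in the statement is at most $\bar\epsilon_{\calB}$. One could equally well unroll the two Bellman recursions into trajectory distributions and telescope, as in the proofs of Propositions 2 and 3.

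The main obstacle I expect is the bookkeeping around the random, data-dependent support $\supp d_{\calB}$: I have to make sure the error recursion never ``leaks'' onto state-action pairs outside $\calR_{\calB}$, where $Q_{\calB}^\pi$ is governed by the arbitrary reward assigned to unseen pairs and is therefore completely uncontrolled. This is exactly where the coherence assumption and the i.i.d.-next-state assumption enter, and it is also why the conclusion is naturally a limiting almost-sure statement about a sequence of random empirical MDPs rather than a finite-sample bound; a secondary technicality is upgrading the pointwise law of large numbers to a bound uniform over the finite set $\calR_{\calB}$.
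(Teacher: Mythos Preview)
Your proposal is correct and follows essentially the same route as the paper: a simulation-lemma decomposition of $\epsilon_{MDP}$, a contraction/recursion over the data-supported set, and a law-of-large-numbers argument for the empirical transitions (the paper cites a concentration result of Han et al.\ where you invoke the SLLN over a finite alphabet).

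The one technical difference worth flagging is which kernel you put in the recursion term. You keep the true $P$ in the recursion and push $V_{\calB}^{\pi}$ into the residual; the paper does the opposite, keeping $P_{\calB}$ in the recursion and $V^{\pi}$ in the residual. Their choice is slightly cleaner for two reasons: (i) with $P_{\calB}$ in the recursion, coherence \emph{immediately} guarantees that every next state lies in $\supp d_{\calB}$, so you never need your extra ``once $|\calB|$ is large enough that every $\vs'$ with $P(\vs'|\vs,\va)>0$ has been observed'' step; and (ii) with the true $V^{\pi}$ in the residual, boundedness by $R_{\max}/(1-\gamma)$ is automatic and does not rely on the $\supp\pi\subseteq\supp\mu_{\calB}$ closure argument you invoke via Proposition~1. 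Your version is still valid, just carries a bit more bookkeeping.
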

\begin{proof}
From Lemma 1 in \cite{fujimoto2019off}, we have
\begin{eqnarray*}
&&\epsilon_{MDP}(\vs, \va) 
\\&=&  \sum_{\vs'} \big(P_{\calM}(\vs'|\vs, \va) - P_{\calB}(\vs'|\vs, \va)\big)\bigg(r + \gamma V^{\pi}_{\calB}(\vs') \bigg) + \nonumber \\
&&\gamma \sum_{\vs'} P_{\calM}(\vs'|\vs, \va)
    \sum_{\va'} \pi(\va'|\vs')\epsilon_{MDP}(\vs', \va') \\
\\&=&  \sum_{\vs'} \big(P_{\calM}(\vs'|\vs, \va) - P_{\calB}(\vs'|\vs, \va)\big)\bigg(r + \gamma V^{\pi}_{\calB}(\vs') \bigg) + \nonumber \\
&&\gamma \sum_{\vs'} \big(P_{\calM}(\vs'|\vs, \va) - P_{\calB}(\vs'|\vs, \va)\big)
    \sum_{\va'} \pi(\va'|\vs')\epsilon_{MDP}(\vs', \va') +\\
&&\gamma \sum_{\vs'} P_{\calB}(\vs'|\vs, \va)
    \sum_{\va'} \pi(\va'|\vs')\epsilon_{MDP}(\vs', \va') \\
\\&=&  \sum_{\vs'} \big(P_{\calM}(\vs'|\vs, \va) - P_{\calB}(\vs'|\vs, \va)\big)\bigg(r + \gamma V^{\pi}(\vs') \bigg) +\\
&&\gamma \sum_{\vs'} P_{\calB}(\vs'|\vs, \va)
    \sum_{\va'} \pi(\va'|\vs')\epsilon_{MDP}(\vs', \va').
\end{eqnarray*}
Since $\calB$ is coherent, we have 
\begin{eqnarray*}
&&\epsilon_{MDP}(\vs, \va) 
\\&\leq&  \sum_{\vs'} \big(P_{\calM}(\vs'|\vs, \va) - P_{\calB}(\vs'|\vs, \va)\big)\bigg(r + \gamma V^{\pi}(\vs') \bigg) + \nonumber \\
&&\gamma \underset{\substack{\bar{\vs} \in \supp \,d_{\calB} \\ \bar{\va} \in \supp \, \pi(\cdot|\bar{\vs})}}{\sup} \epsilon_{MDP}(\bar{\vs}, \bar{\va})
\end{eqnarray*}
for all $\vs\in \supp \,d_{\calB}, \va \in \supp \, \pi(\cdot|\vs)$. Taking the supremum on both sides:
\begin{eqnarray*}
&&\underset{\substack{\vs\in \supp \,d_{\calB} \\ \va \in \supp \, \pi(\cdot|\vs)}}{\sup}\epsilon_{MDP}(\vs, \va) 
\\&\leq&  \underset{\substack{\vs\in \supp \,d_{\calB} \\ \va \in \supp \, \pi(\cdot|\vs)}}{\sup}\sum_{\vs'} \big(P_{\calM}(\vs'|\vs, \va) - P_{\calB}(\vs'|\vs, \va)\big)\bigg(r + \gamma V^{\pi}(\vs') \bigg) \nonumber \\
&& +\gamma \underset{\substack{\vs\in \supp \,d_{\calB} \\ \va \in \supp \, \pi(\cdot|\vs)}}{\sup} \epsilon_{MDP}(\vs, \va).
\end{eqnarray*}
Rearranging the terms:
\begin{eqnarray*}
&&\underset{\substack{\vs\in \supp \,d_{\calB} \\ \va \in \supp \, \pi(\cdot|\vs)}}{\sup}\epsilon_{MDP}(\vs, \va) 
\\&\leq&  \underset{\substack{\vs\in \supp \,d_{\calB} \\ \va \in \supp \, \pi(\cdot|\vs)}}{\sup}\frac{1}{1-\gamma}\sum_{\vs'} \big(P_{\calM}(\vs'|\vs, \va) - P_{\calB}(\vs'|\vs, \va)\big)\bigg(r + \gamma V^{\pi}(\vs') \bigg).
\end{eqnarray*}
Let $R_{\max} =\frac{1}{1-\gamma} \max_{\vs, \va} |r(\vs, \va)|$, we then have by Hölder's inequality
\begin{eqnarray*}
\underset{\substack{\vs\in \supp \,d_{\calB} \\ \va \in \supp \, \pi(\cdot|\vs)}}{\sup}\epsilon_{MDP}(\vs, \va) 
\leq \underset{\substack{\vs\in \supp \,d_{\calB} \\ \va \in \supp \, \pi(\cdot|\vs)}}{\sup} \frac{R_{\max}}{1-\gamma}\sum_{\vs'} \big|P_{\calM}(\vs'|\vs, \va) - P_{\calB}(\vs'|\vs, \va)\big|.
\end{eqnarray*}
since $|S(\vs, \va)| \rightarrow \infty$ as $|\calB| \rightarrow \infty$ for all $\vs\in \supp \,d_{\calB} \; \va \in \supp \, \pi(\cdot|\vs)$, we have  $\underset{\substack{\vs\in \supp \,d_{\calB} \\ \va \in \supp \, \pi(\cdot|\vs)}}{\sup} \sum_{\vs'}\big|P_{\calM}(\vs'|\vs, \va) - P_{\calB}(\vs'|\vs, \va)\big| \rightarrow 0$ by Theorem 1 of \cite{han2015minimax}.
\end{proof}
Proposition \ref{prop_4} serves a similar purpose as Theorem 2 of \cite{fujimoto2019off}. Proposition \ref{prop_4}, however, handles stochastic state transitions whereas Theorem 2 of \cite{fujimoto2019off} does not.

\subsection{Evaluation protocol}
To compute the performance of each agent, as reported in the Tables~\ref{tab:cs}, \ref{tab:loco},\ref{tab:full_cs}, \ref{tab:full_mpg} and \ref{tab:full_loco}, we adopt the following procedure.
We run each agent with three independent seeds. 
Agent snapshots are made every 50000 learner steps. 
For every agent / environment / seed combination, we evaluate all its saved snapshots by running each for 300 episodes in the environment and record the mean episodic reward of the best snapshot as an agent's performance for the seed.
The performance of an agent in an environment is thus calculated as the mean performance across its seeds, and error bars the standard deviation of the means.

\subsection{Effects of using K-step returns}
\label{sec:kstep}
In this section, we evaluate the effect of using K-step returns compared to our proposed method of estimating advantages.
As discussed in Sec.~\ref{sec:crr} using K-step returns can hurt the agent's performance since the transitions stored in the dataset are likely to be generated by very different policies than the current one.
As a result, K-step returns may not reflect the actual returns of the current policy being evaluated and therefore introduce a bias.
To test this hypothesis, we evaluate CRR's (using the \emph{binary max} rule) performance while estimating the advantage by
$$
    \sum_{i=0}^{k-1}\gamma^i r_{t+i} +  \gamma^k \frac{1}{m}\sum_{j=1}^{m}Q_{\theta}(\vs_{t+k}, \Tilde{\va}^j_{t+k}) - \frac{1}{m}\sum_{j=1}^{m}Q_{\theta}(\vs_{t}, \Tilde{\va}^j_{t})
$$ where $\Tilde{\va}^j_{t+k} \sim \pi(\vs_{t+k})$, and $\Tilde{\va}^j_{t} \sim \pi(\vs_{t})$.
This objective is similar to the ones used in \cite{peng2019advantage, chen2019bail}.

As shown in Fig. \ref{fig:kstep}, when choosing $k=5$, we indeed observe an degradation in performance.
This confirms that, with a large enough $k$, K-step returns produce a bias that compromises learning.

\begin{figure}[b!]
    \centering
	\includegraphics[width=0.45\linewidth]{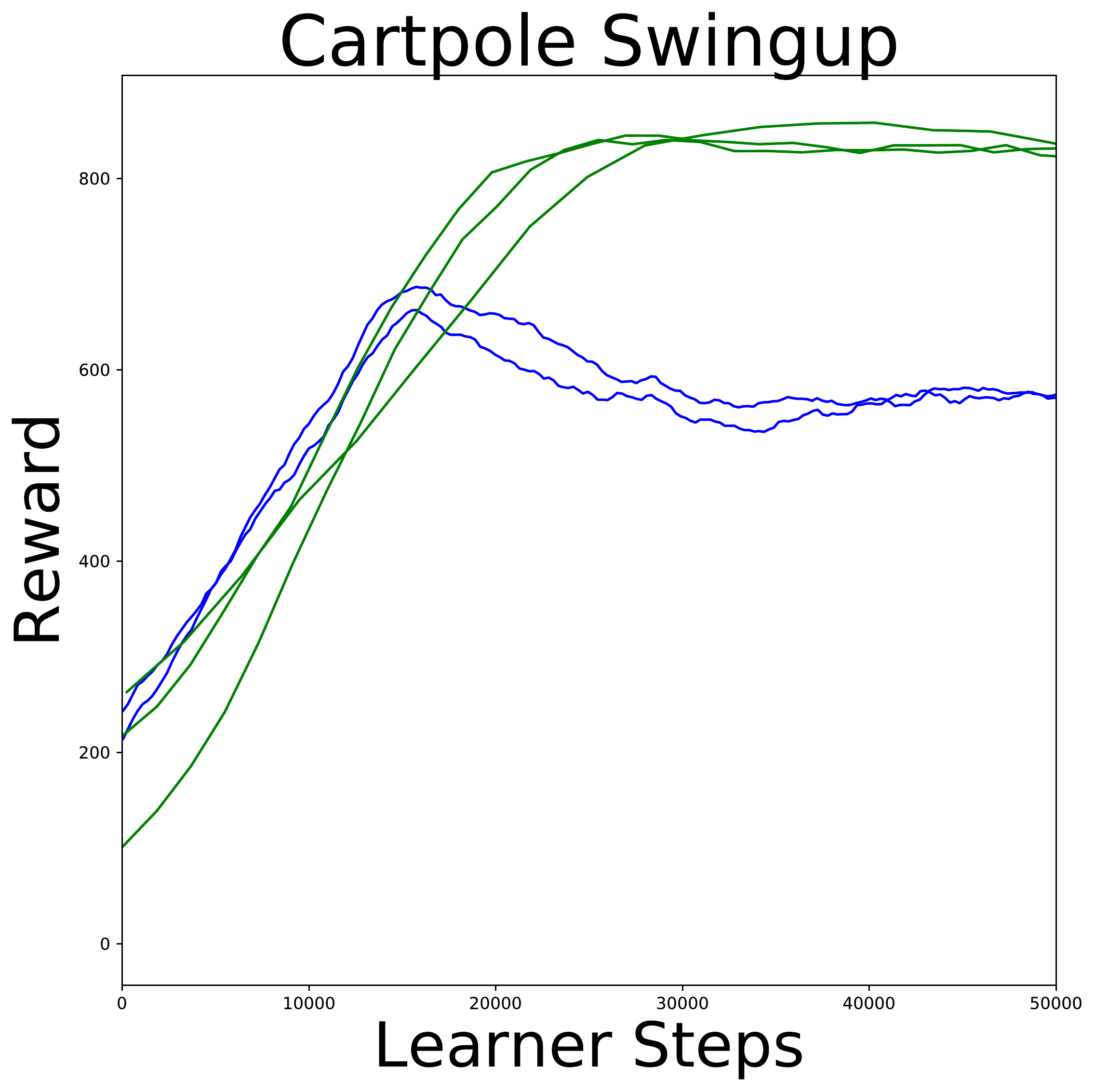}
    \includegraphics[width=0.45\linewidth]{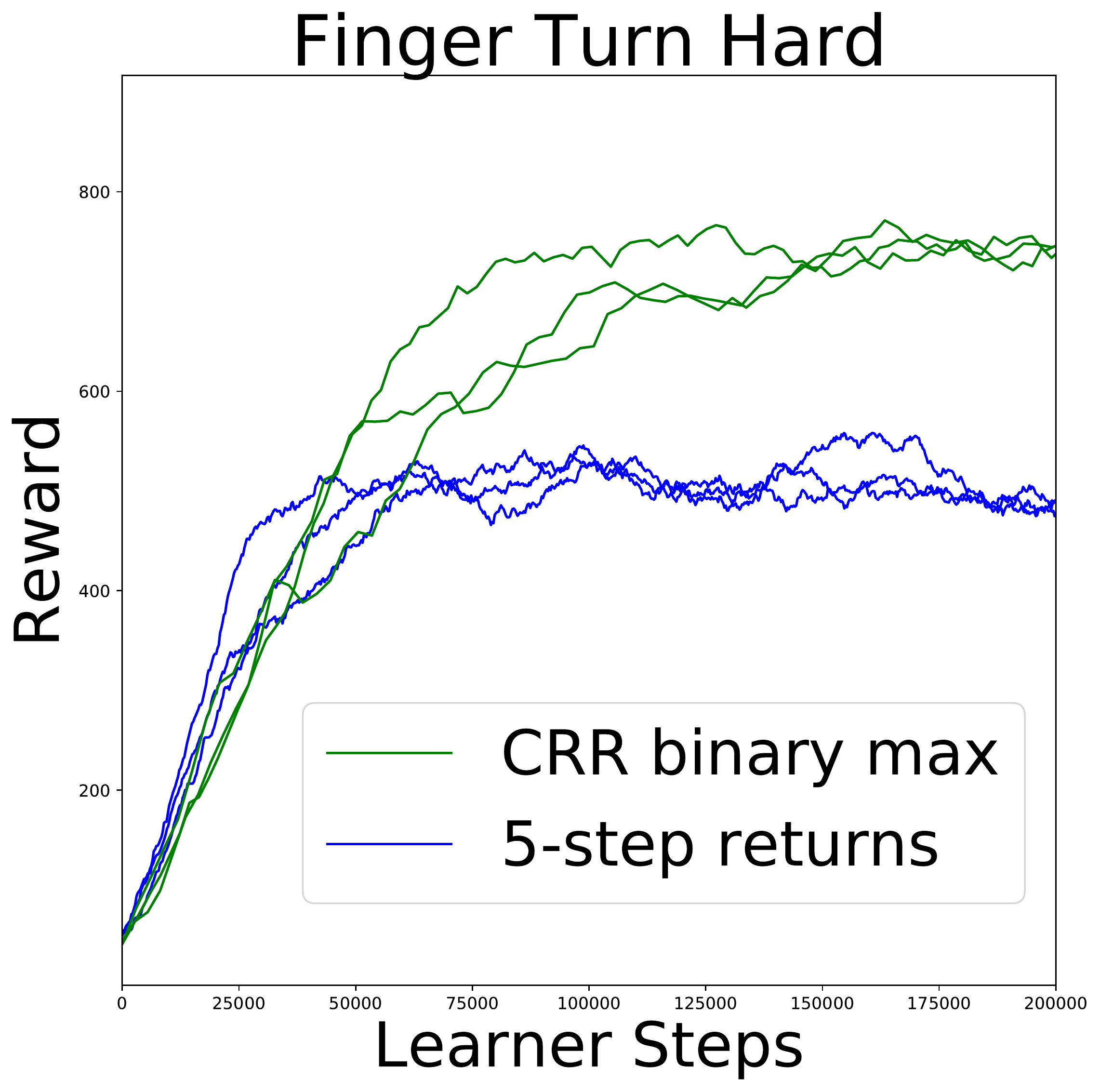}
    \caption{Evaluating the effect of K-step returns ($K=5$ in this case). In offline RL, K-step returns could be detrimental as it introduces a bias.
    Here we see that using K-step returns hurts policy performance.\label{fig:kstep}}
\end{figure}

\subsection{Hyper-parameters}
In this section, we describe the hyper-parameters of algorithms used.

\subsubsection{BCQ}
For BCQ, we mostly follow the original network architecture as well as hyper-parameter settings in our implementation. Please refer to \cite{fujimoto2019off} for more details. We only changed the batch size to be $1024$ to stay compatible with CRR.

\subsubsection{CRR}
For CRR training we use: target network update period $100$; 21 atoms on a grid from 0 to 100 for the distributional critic. We use $\beta=1$ for Eqn.~\eqref{eqn:exp} and for CWP resampling.
We swept $\beta$ over $[0.1, 0.4, 0.7, 1, 1,3]$ on few selected environments and did not find it to affect the results too much and therefore kept the natural setting of $\beta = 1$.
For all experiments, we use 2 separate Adam optimizers~\cite{kingma2014adam} for actor and critic learning respectively. The learning rates are set to $10^{-4}$.
For sequence datasets, we use the batch size of 128 and for non-sequence datasets 1024.
To compute the advantage estimates (see Table~\ref{tab:advantage_est}), we set $m=4$.

For the descriptions of the network architecture, please see Figure~\ref{fig:crr_architectures}.
On the manipulation suite, two camera observations are provided in each time step. 
In these environments, we duplicate the entire lower stack of the network before the concatenation with proprioceptive (action) features to accommodate the extra pixel observation.

\subsubsection{D4PG} For both the actor and critic, we use the Adam optimizer~\citep{kingma2014adam} with the learning rate of $1e-4$; target network update period $100$; 51 atoms on a grid from -150 to 150 for the distributional critic. 
We use D4PG implemented in Acme~\cite{hoffman2020acme}, following their  network architectures and hyper-parameters. We used batch size 1024 for experiments on the manipulation suite and 256 for the rest of experiments.
We experimented with CRR's network and hyper-paramters for D4PG, but the performance is inferior.

\subsubsection{BC}
Our BC implementation shares of its hyper-parameters with CRR whenever applicable. Most hyper-parameters, however, do not apply to BC.
Our BC implementation shares the same policy network structure with CRR.

\subsubsection{ABM}
For ABM training we use mostly the same hyperparameters as CRR.
ABM uses an additional prior policy network to train which we also use an Adam optimizer with learning rate $10^{-4}$.
To compute the advantage estimates for ABM, we set use 20 samples as per the original paper.
For ABM specific hyper-parameters, the please see Table~\ref{tab:abm_hyper}.

Our ABM implementation shares the same network architecture with CRR with the exception of its policy head being Gaussian.
For the descriptions of the network architecture, please see Figure~\ref{fig:crr_architectures}.

\begin{table}[]
    \centering
    \caption{ABM specific hyper-parameters.\label{tab:abm_hyper}}
    \begin{tabular}{c|c}
        \toprule
        Hyper-parameters & values \\
        \midrule
        Number of actions sampled per state & 20 \\
        $\epsilon$ & 0.1 \\
        $\epsilon_{\mu}$ & $5\times 10^{-3}$\\
        $\epsilon_{\Sigma}$  & $1\times 10^{-5}$ \\
        \bottomrule
        \end{tabular}
\end{table}

\begin{figure}[t!]
    \centering
	\includegraphics[width=0.45\linewidth]{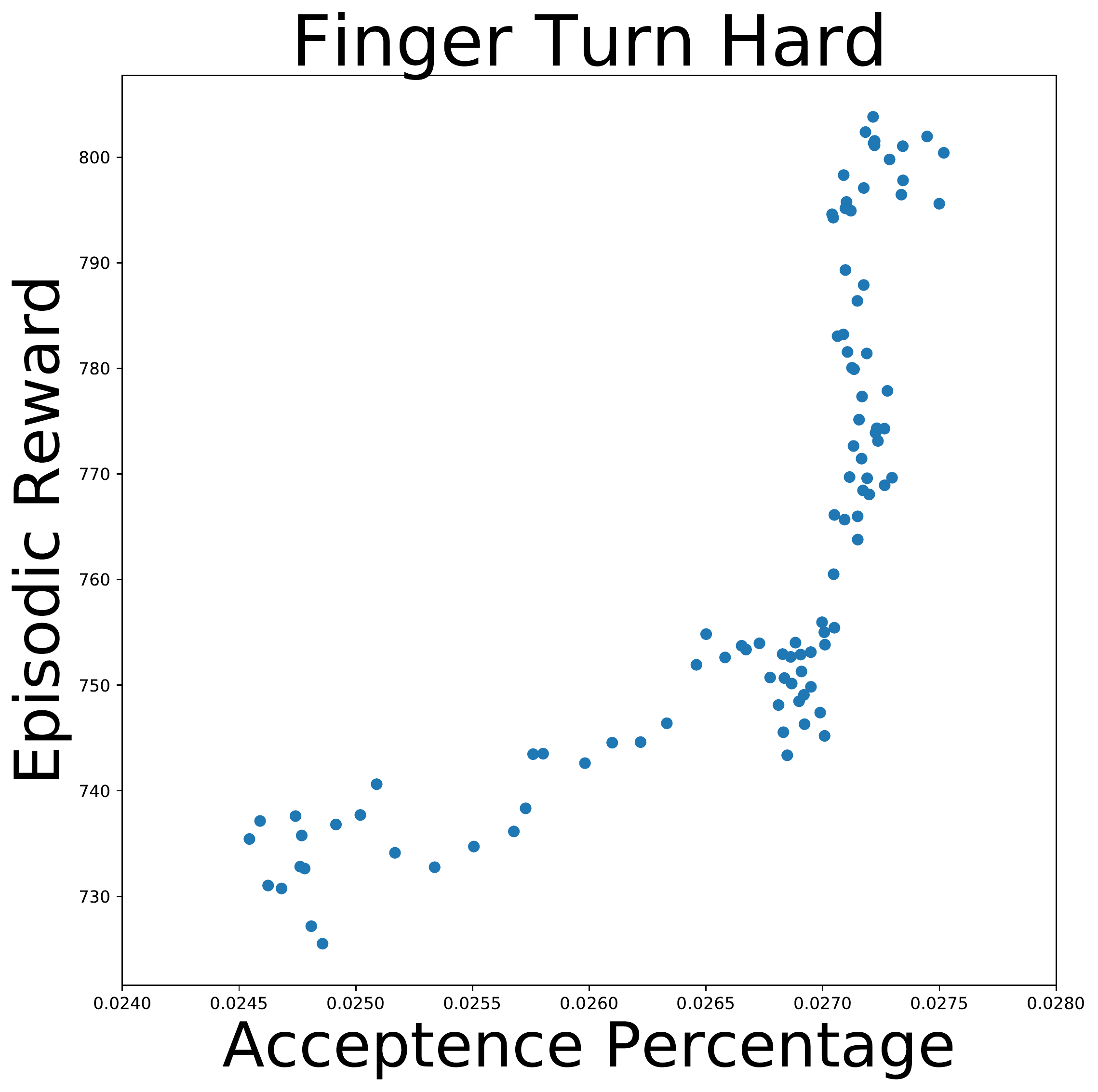}
    \includegraphics[width=0.45\linewidth]{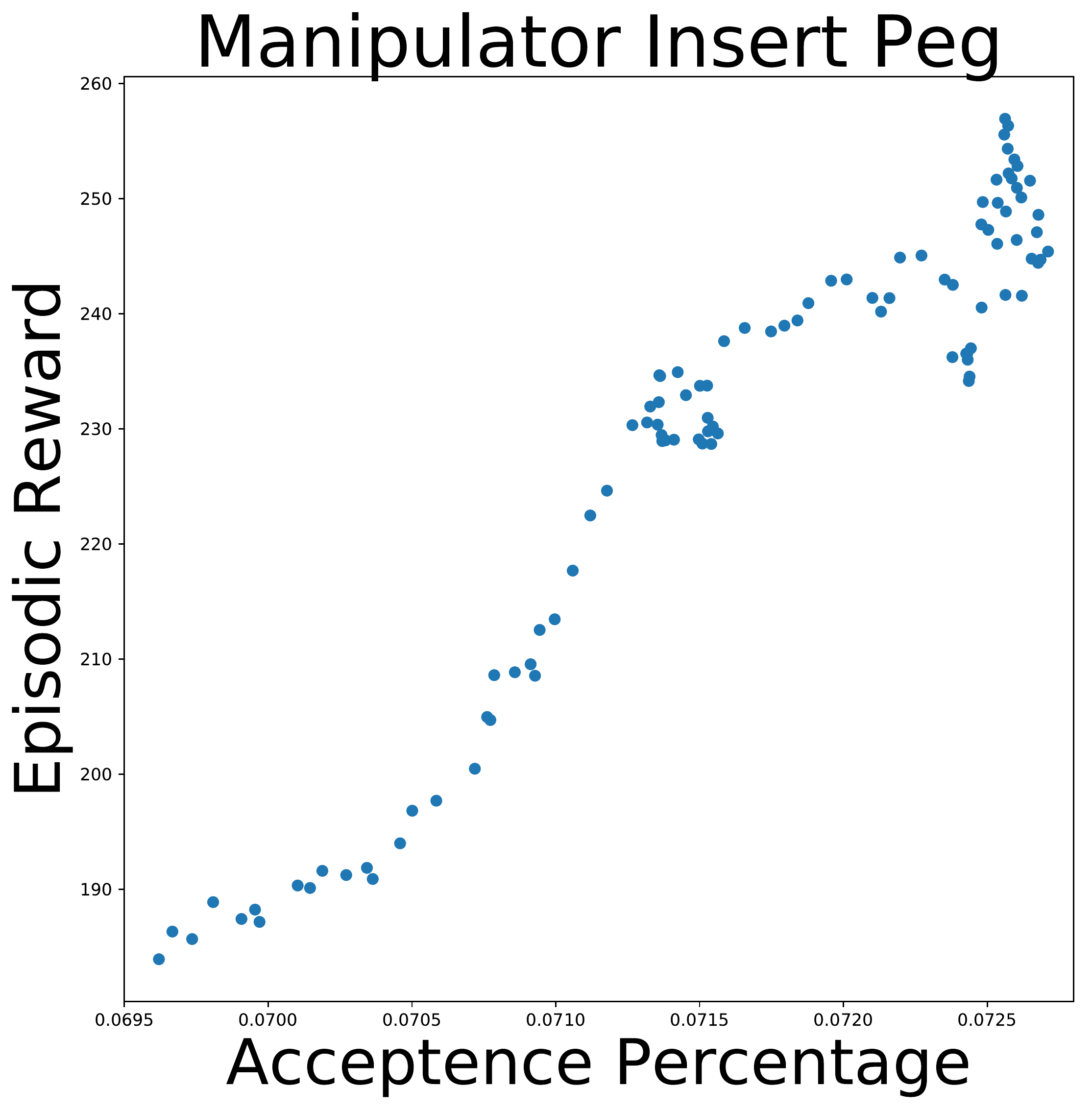}
	\includegraphics[width=0.45\linewidth]{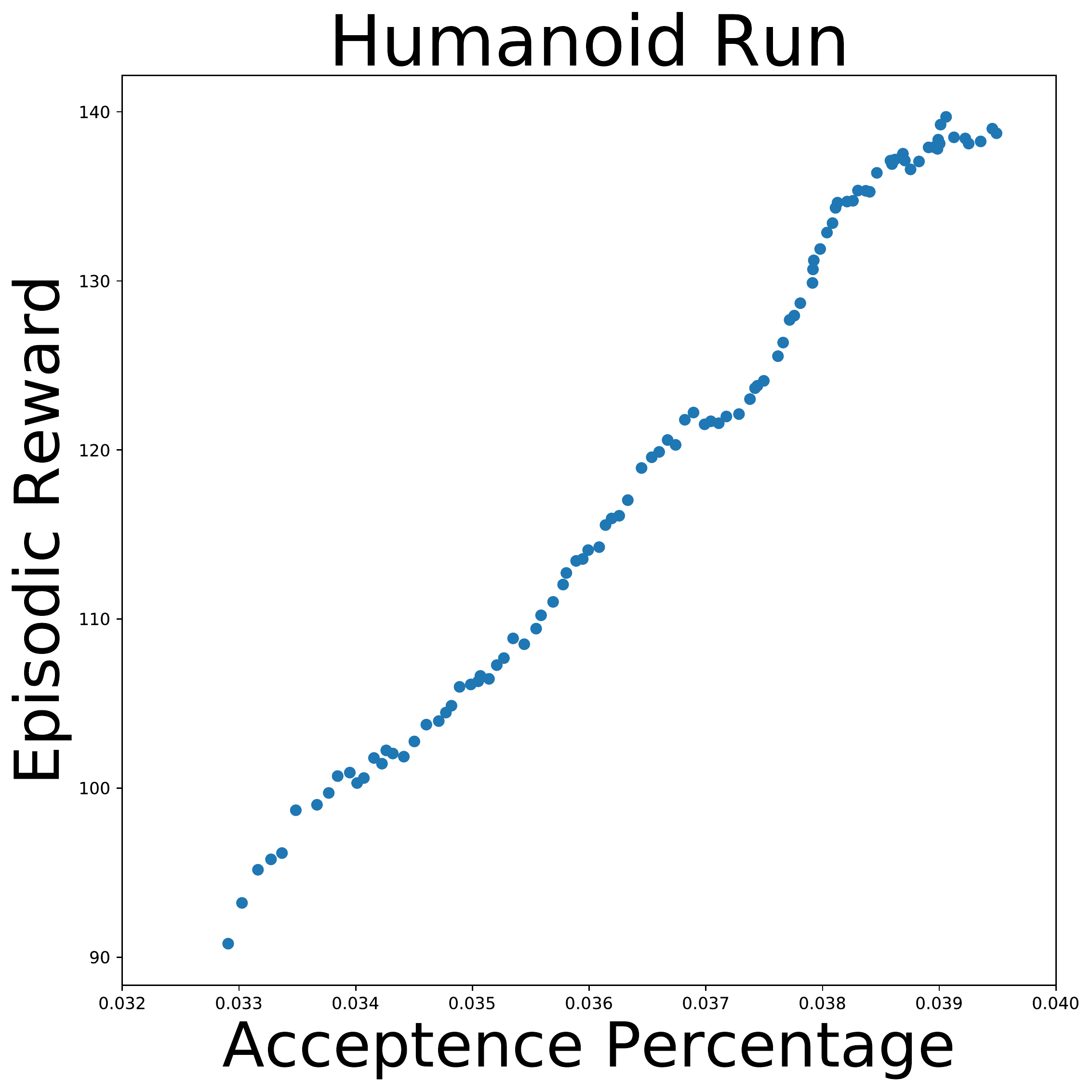}
	\caption{Here we compare the percentage of accepted actions under the \emph{binary max} rule versus agents' performance measured by episodic return. Surprisingly, the two quantities are positively correlated suggesting that the values of some actions in the dataset are overestimated compared to that of the policy. \label{fig:correlation}}
\end{figure}

\subsection{Value over-estimation\label{sec:value-over}}
In section~\ref{sec:analysis}, we mention that the relative value of actions in the dataset compared to that of the policy tends to increase due to value overestimation (for actions present in the dataset). We hypothesize that this is due to the following effect.
As the CRR policy gets better, the value of the bootstrap target (in critic learning) increases. As a result, the Q values for actions in the dataset, even that of suboptimal actions, also increase. 
As a result some of the suboptimal actions would have values higher than the value of the current policy. 

We demonstrate this effect by comparing the percentage of actions copied using the \emph{binary max} rule versus the agents' performance measured in terms of episodic return. 
In these experiments, we evaluate the agents' performance between $200000$ and $600000$ learner steps.
We do not include datapoints corresponding to fewer than 2000000 learner steps in this analysis since early in learning the critic may not be reliable.
We, in addition, measure how the percentage of actions copied by the \emph{binary max} rule in the same learner steps range.

Intuitively, as the agent's performance improves, we would expect the dataset to contain fewer actions that outperform the agent's policy. Thus, the fraction of actions included in the policy update in Eqn. \ref{eqn:bin} should go down. We do observe the opposite (as shown in Figure~\ref{fig:correlation}), however: the number of actions that are included in the policy update \emph{increases} as agent performance improves. 
This supports our theory that the value of some actions from the datasets are overestimated compared to the value of the policy.

Due to the relative overestimation of the datasets' actions' values, CRR could clone sub-optimal actions leading to degradation in performance.
The \emph{binary max} rule, by being optimistic about the policy's state-value, copies fewer sub-optimal actions and thereby increases CRR's performance on some datasets.

\begin{figure}[t!]
\begin{tabular}{ccc}
\includegraphics[width=0.3\linewidth]{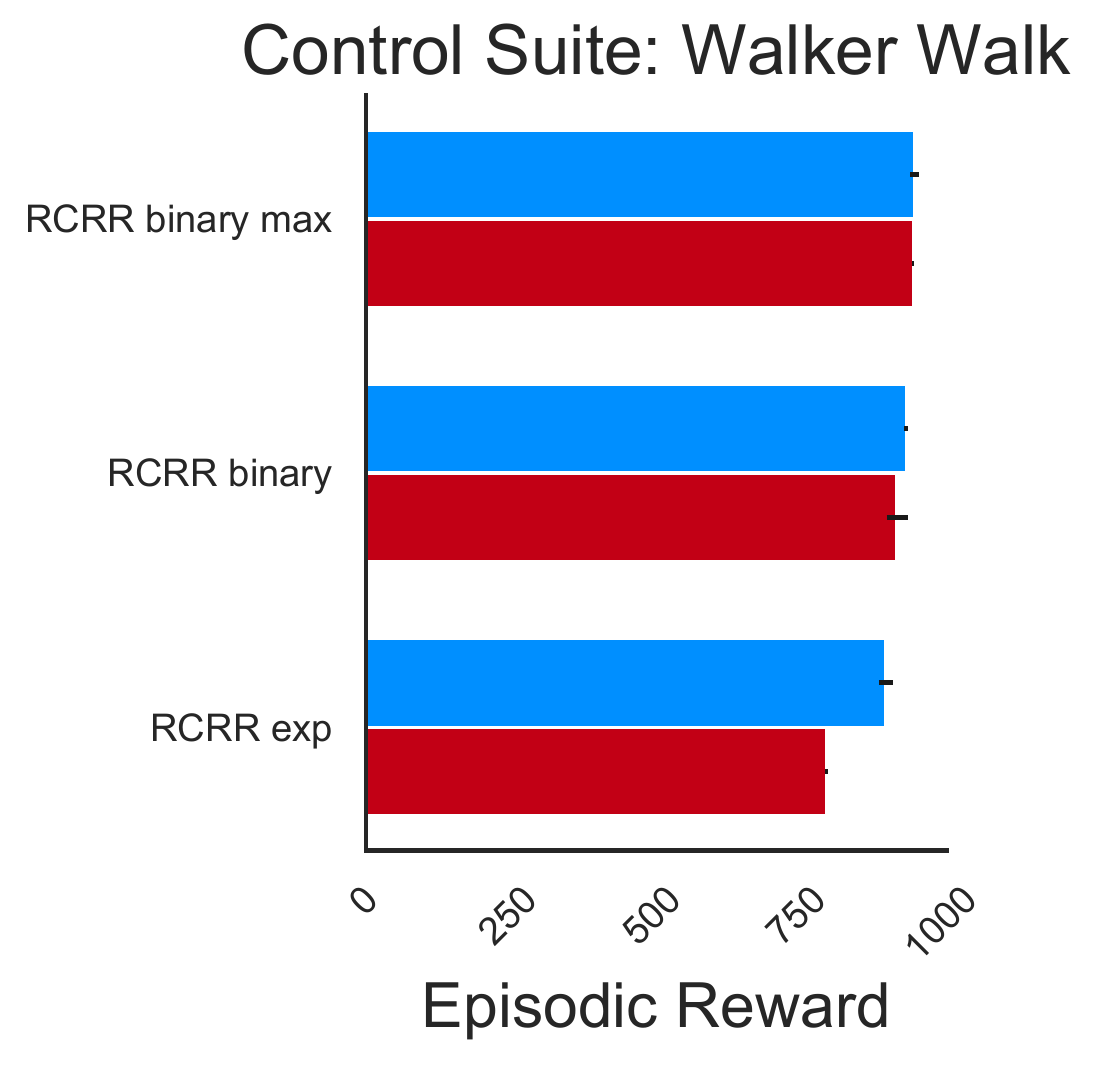} &
\includegraphics[width=0.3\linewidth]{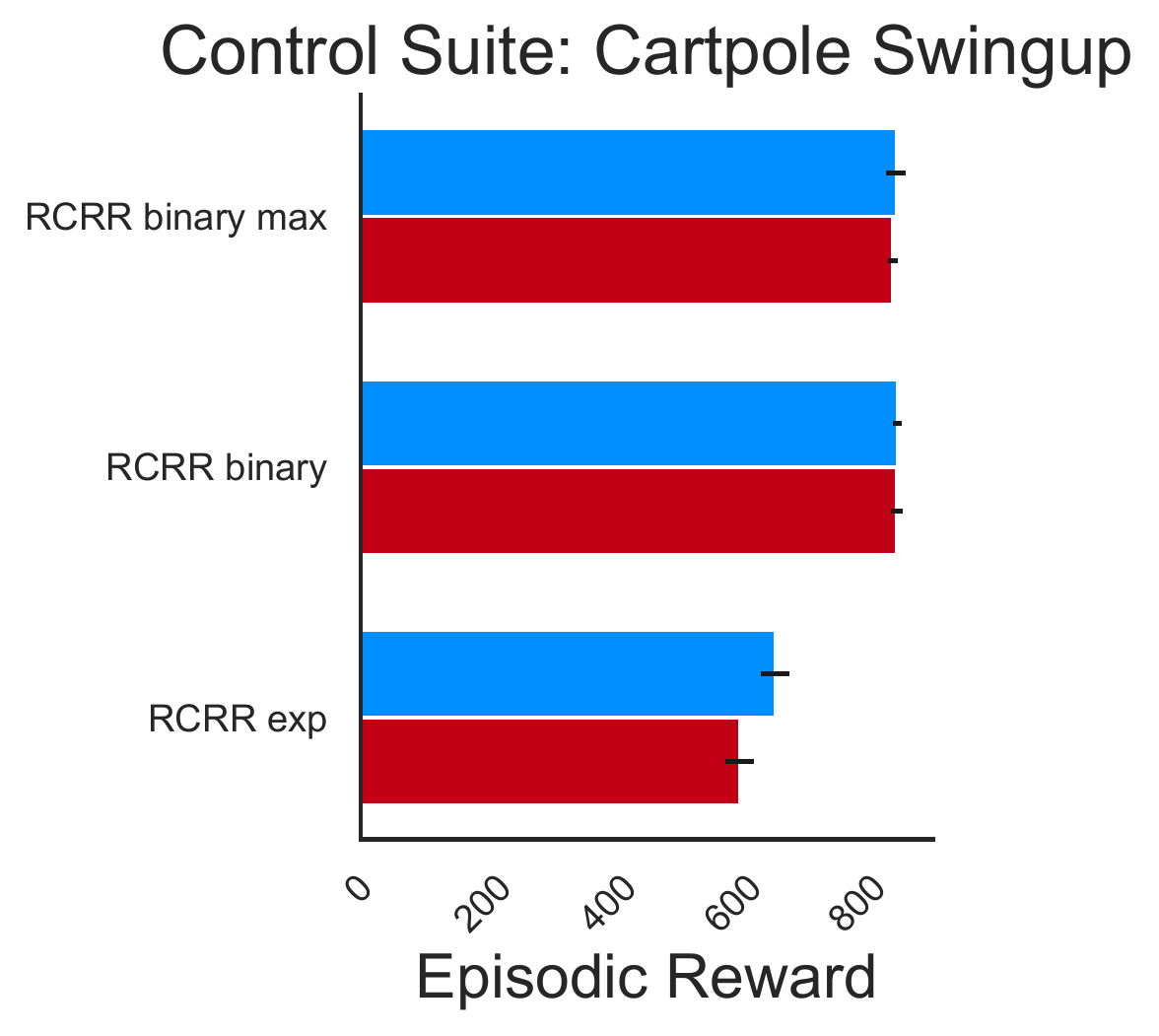} &
\raisebox{1.5cm}{\includegraphics[width=0.18\linewidth]{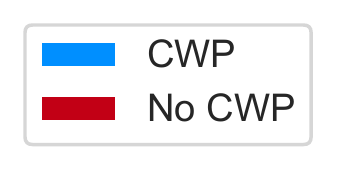}} \\
\includegraphics[width=0.3\linewidth]{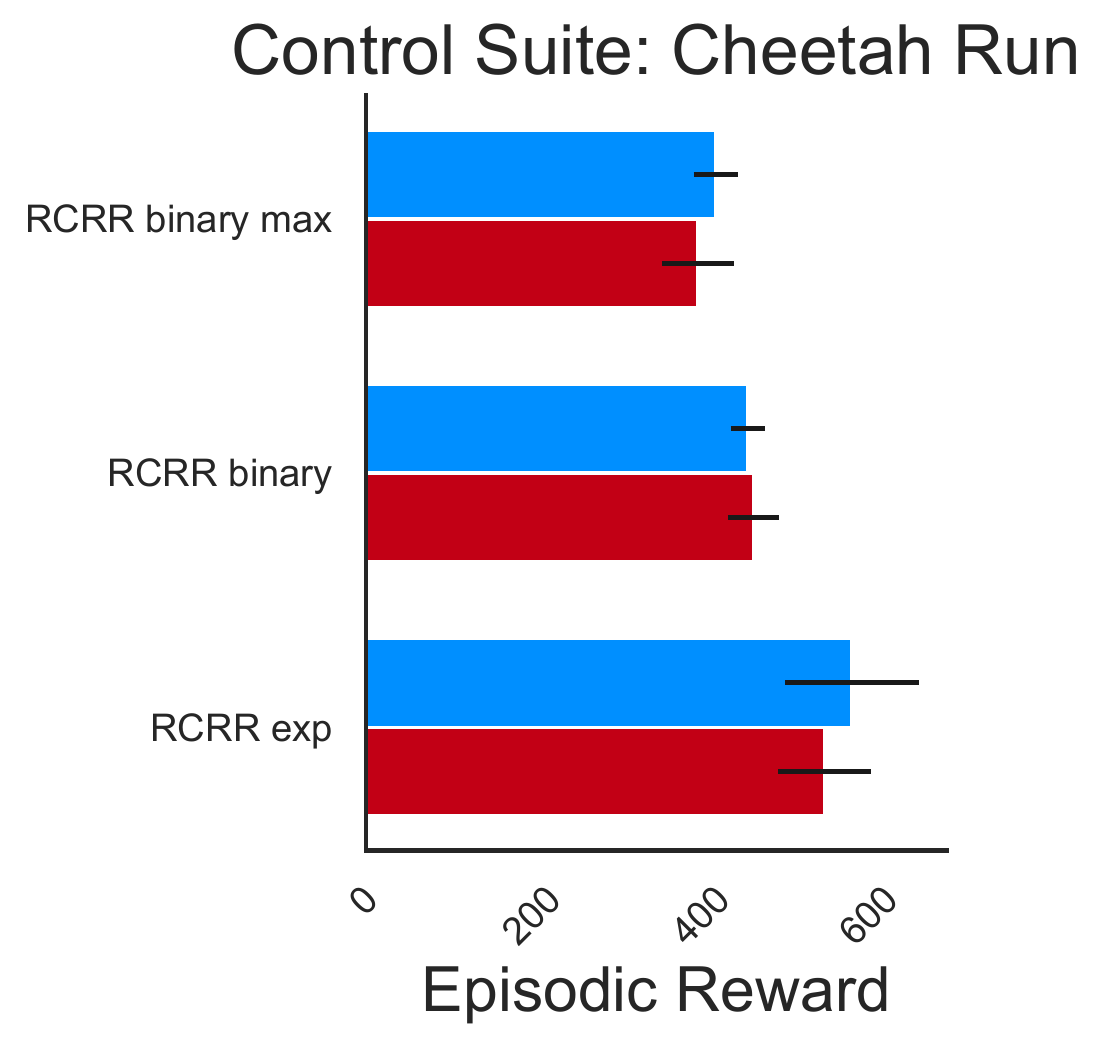} &
\includegraphics[width=0.3\linewidth]{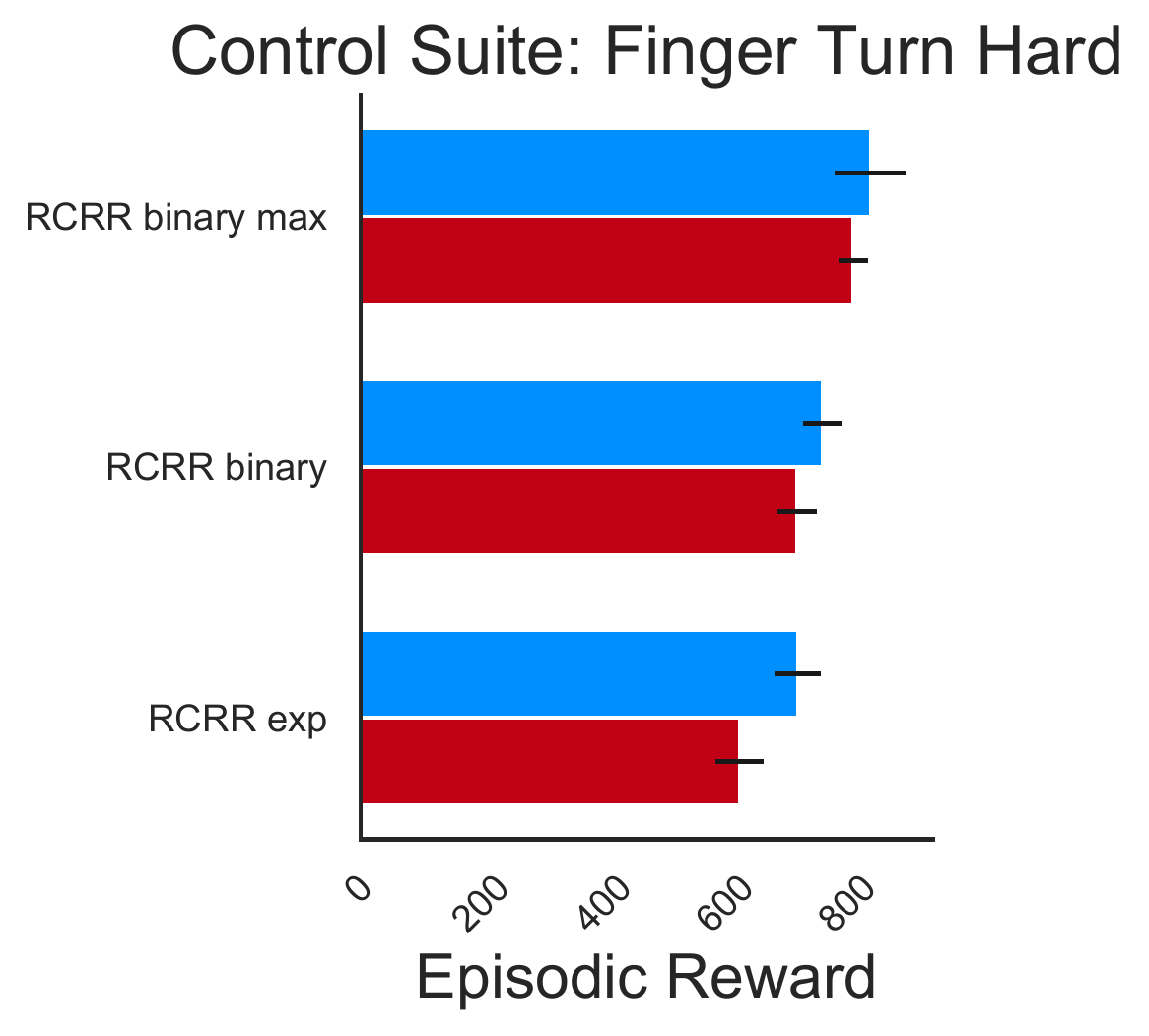} &
\includegraphics[width=0.3\linewidth]{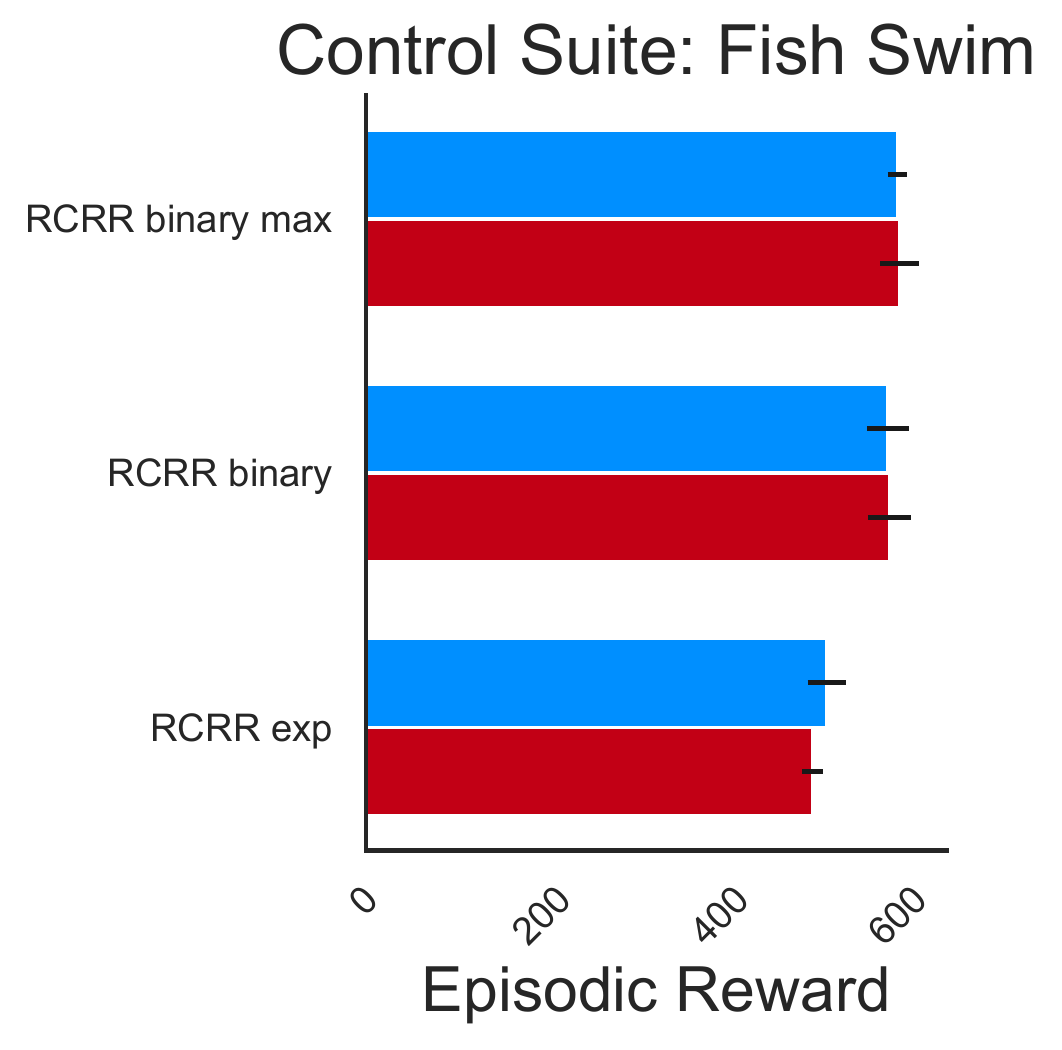} \\
\includegraphics[width=0.3\linewidth]{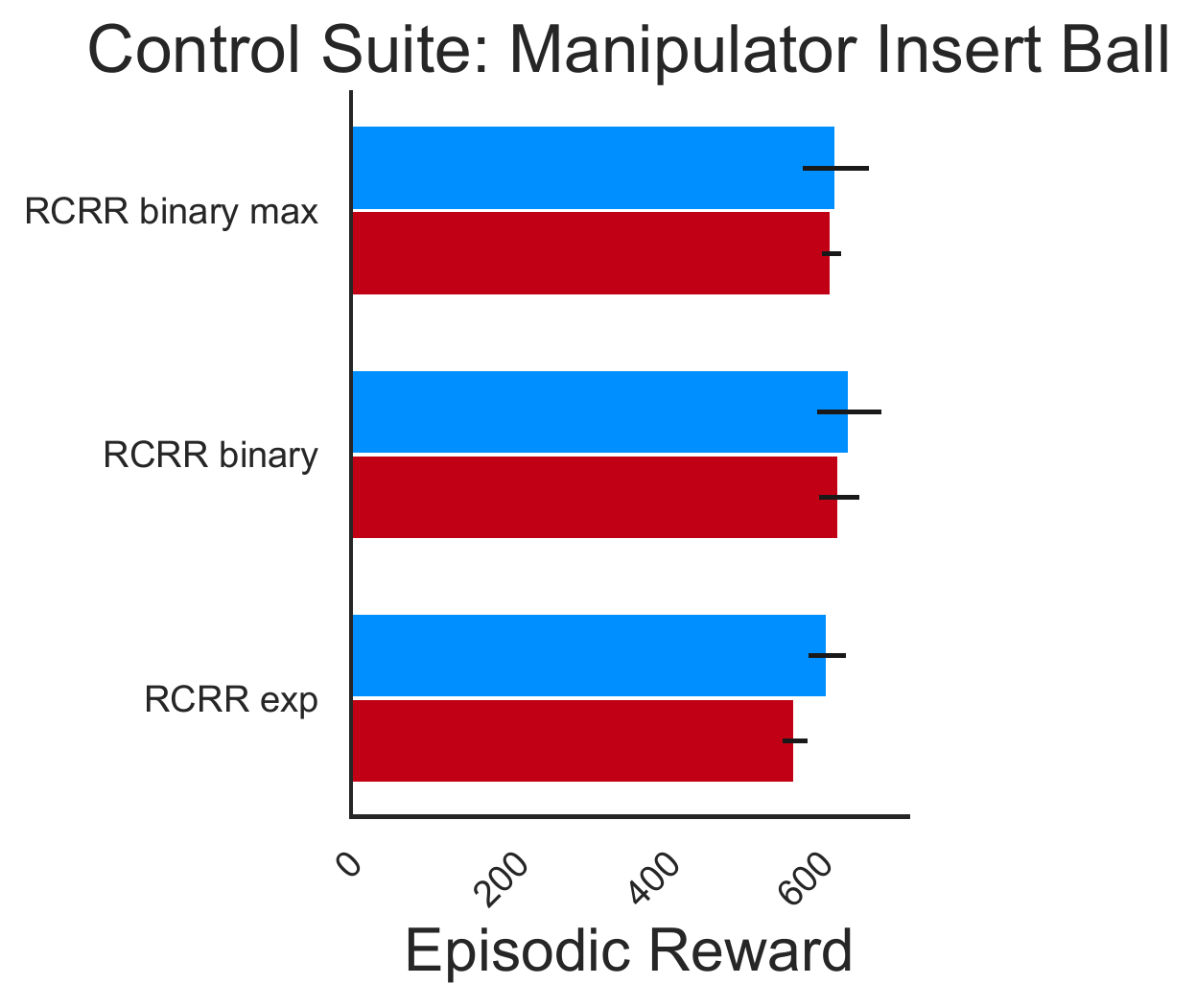} & 
\includegraphics[width=0.3\linewidth]{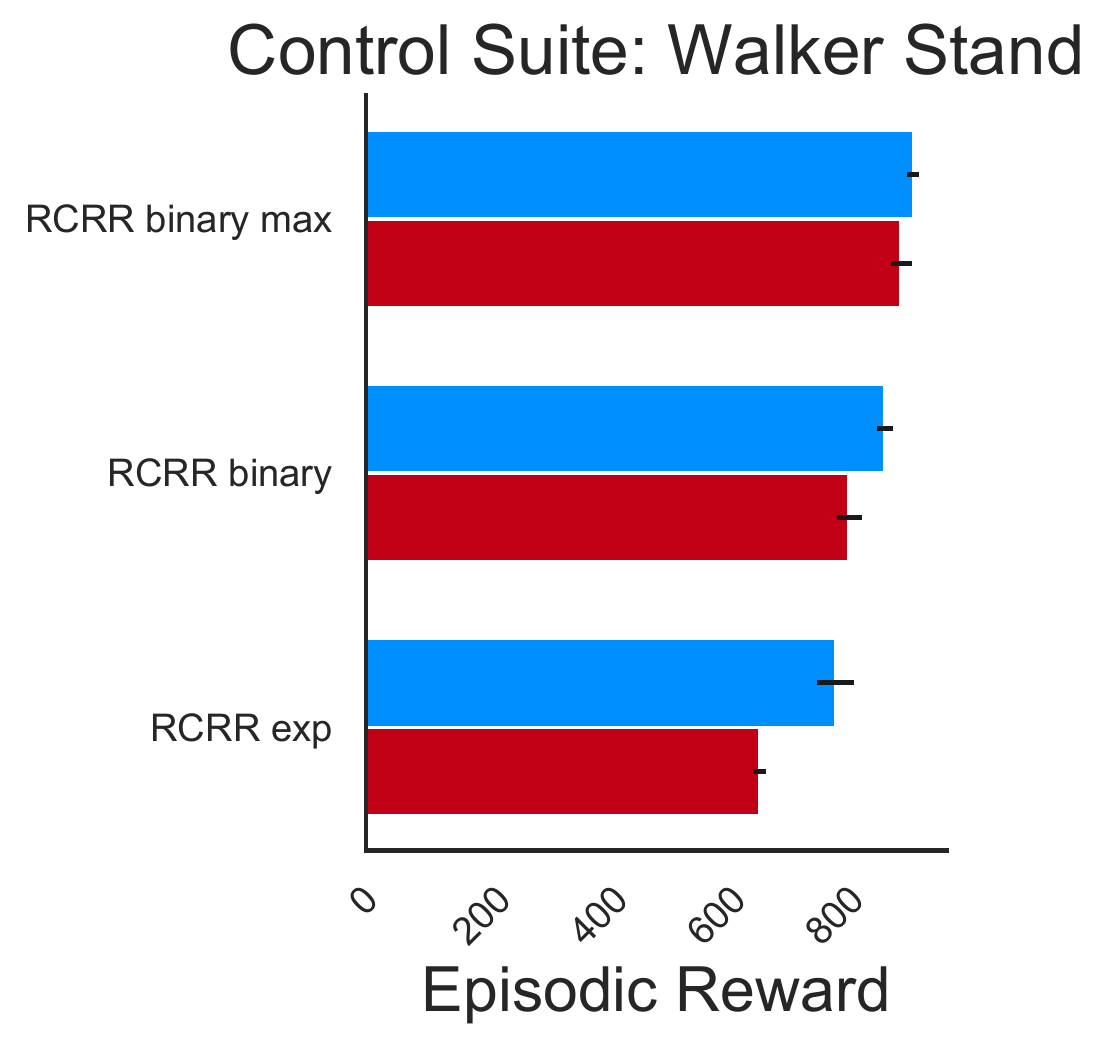} &
\includegraphics[width=0.3\linewidth]{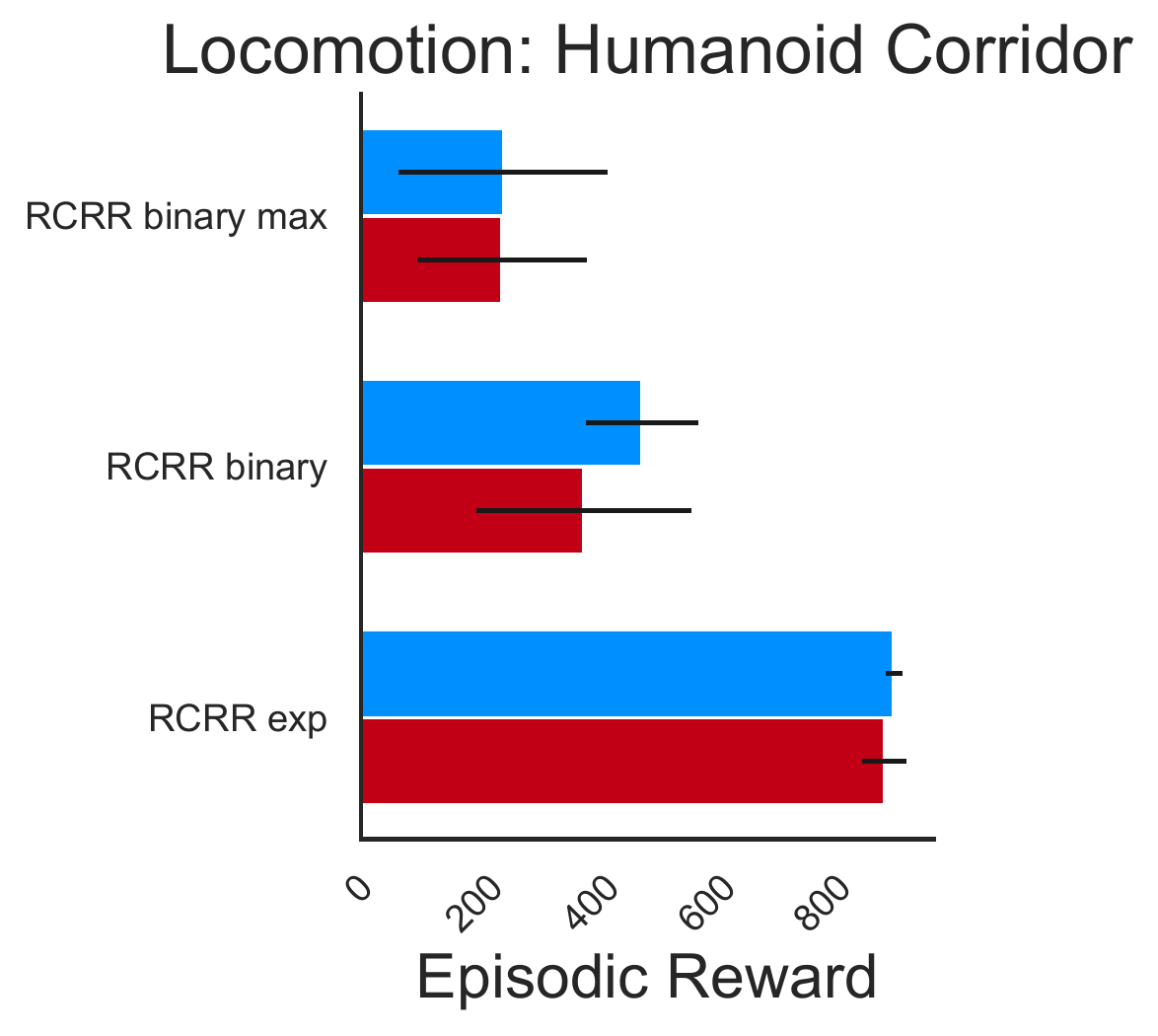} \\
\includegraphics[width=0.3\linewidth]{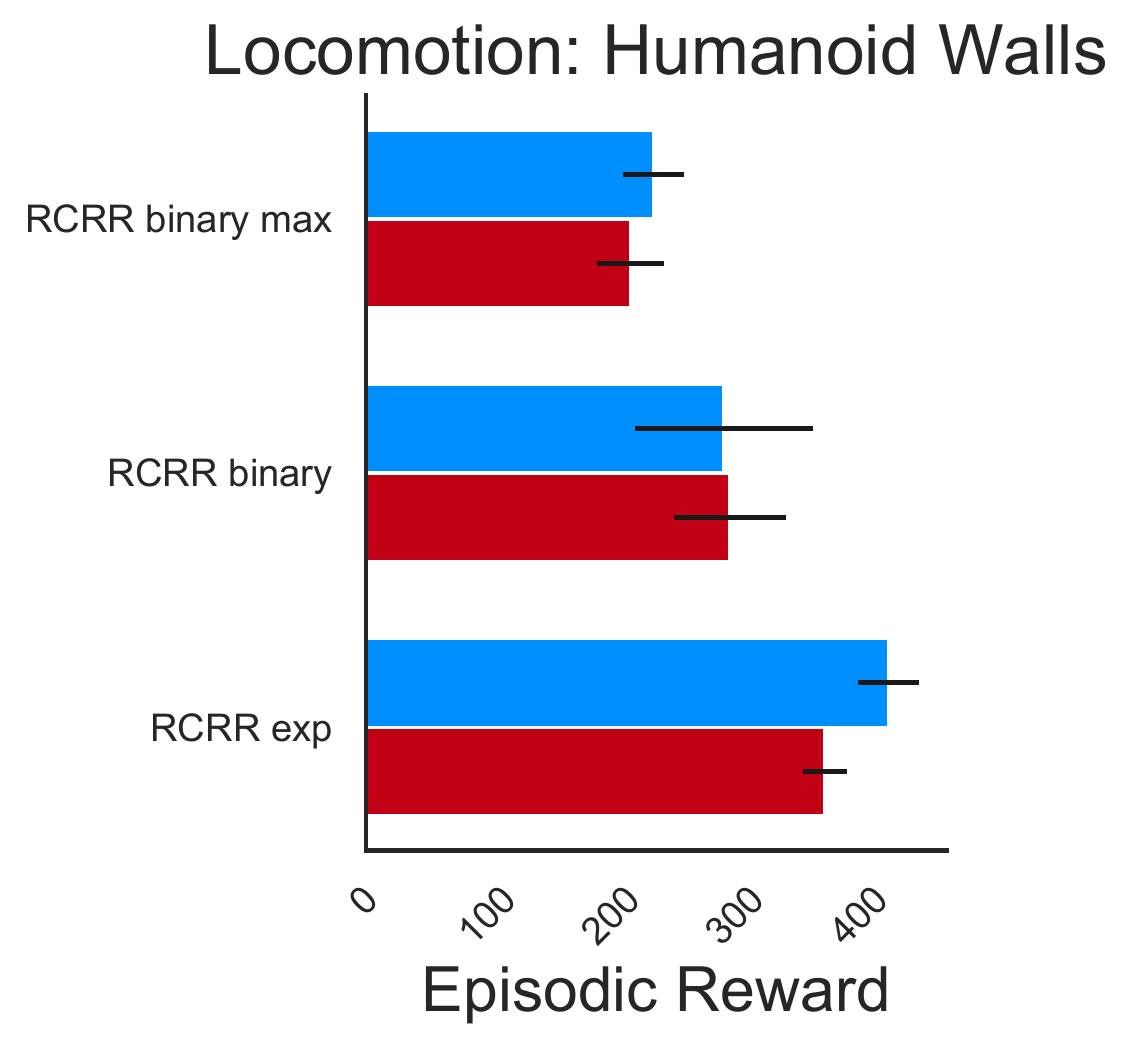} &
\includegraphics[width=0.3\linewidth]{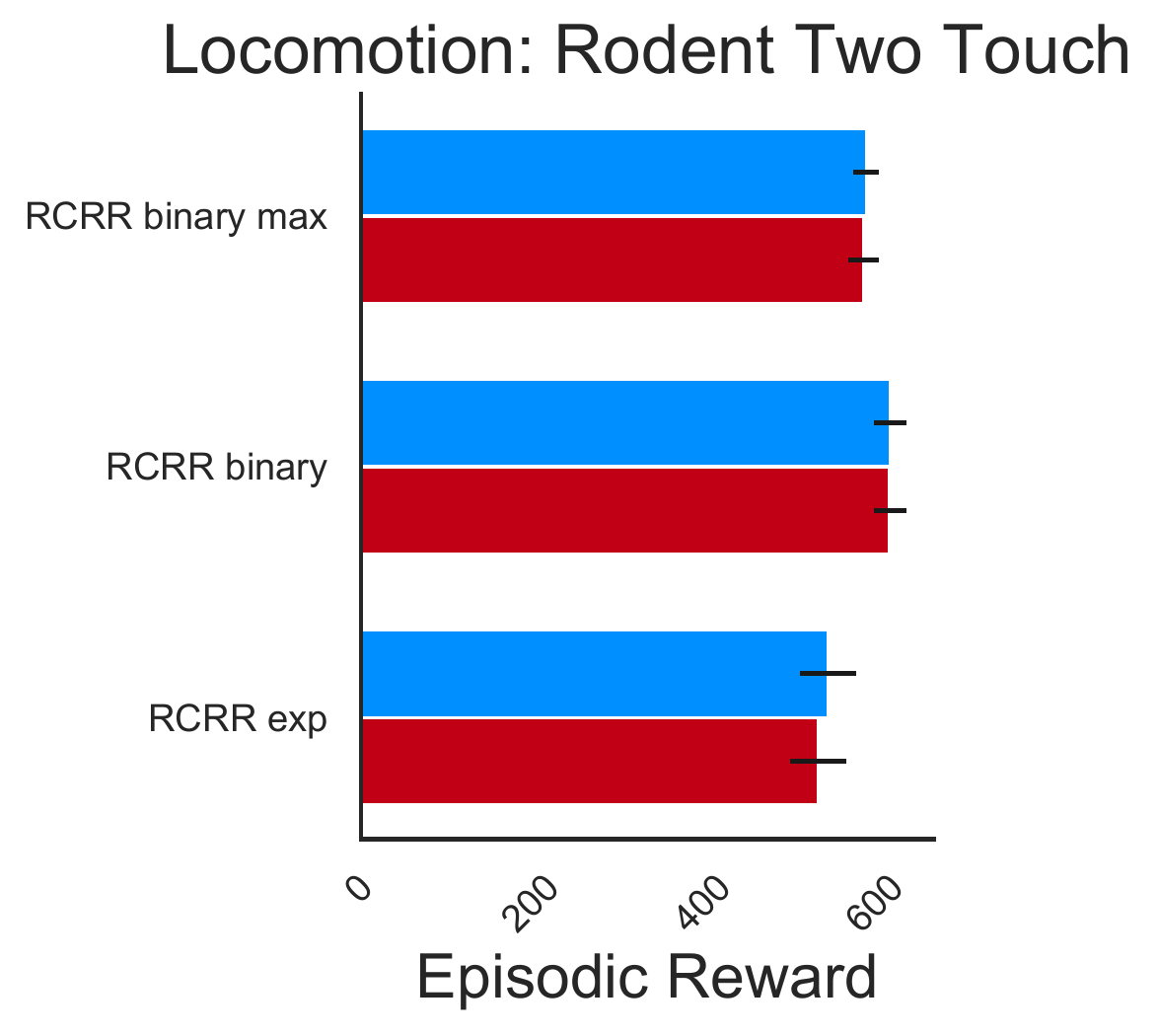} &
\end{tabular}
\caption{CWP ablation on selected environments. In these environments, we find CWP to mostly help and when it does not, CWP also do not hurt policy performance.\label{fig:cwp}}
\end{figure}



\begin{figure}[t!]
\centering
\begin{tabular}{ccc}
\includegraphics[width=0.3\linewidth]{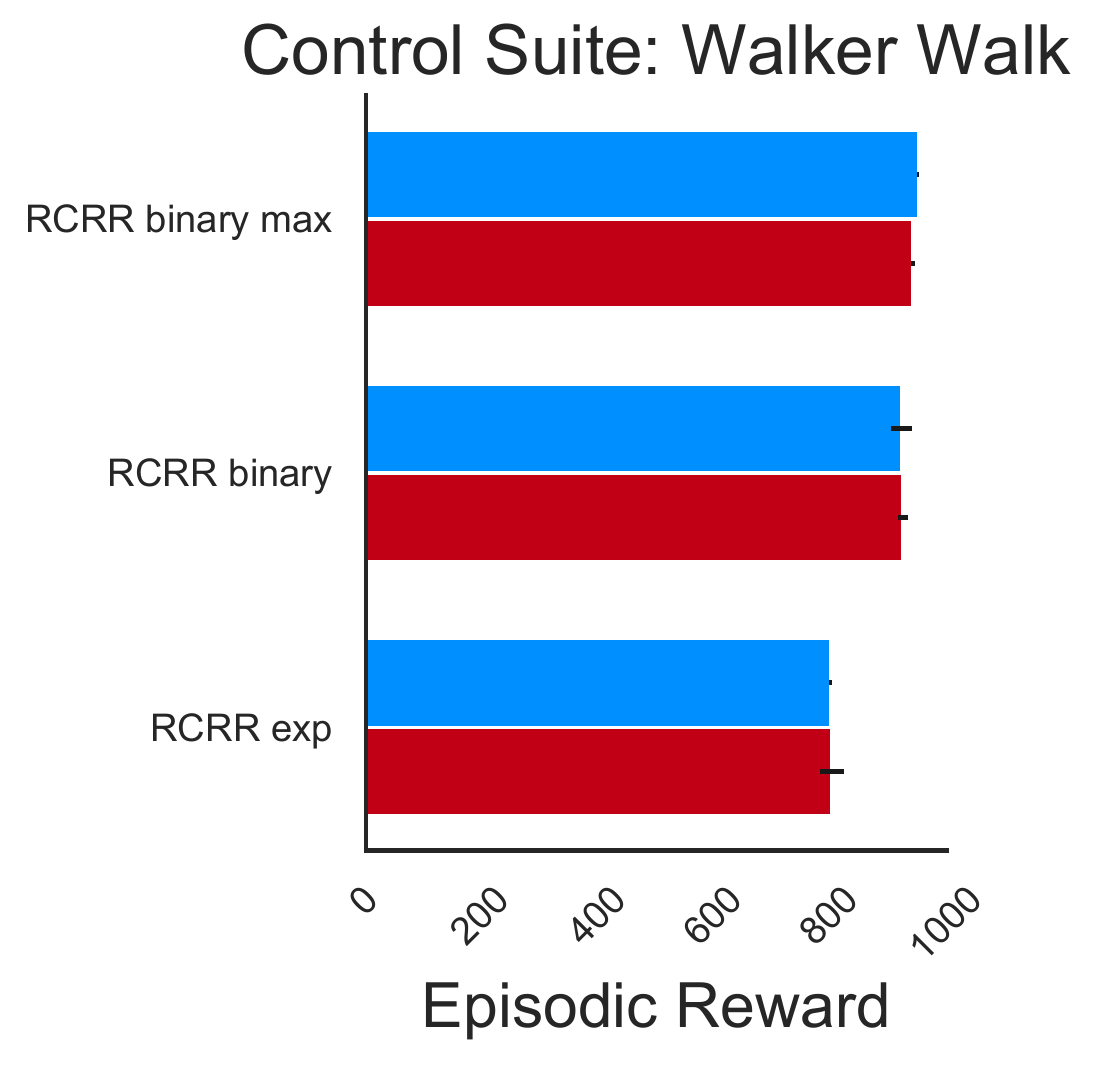} &
\includegraphics[width=0.3\linewidth]{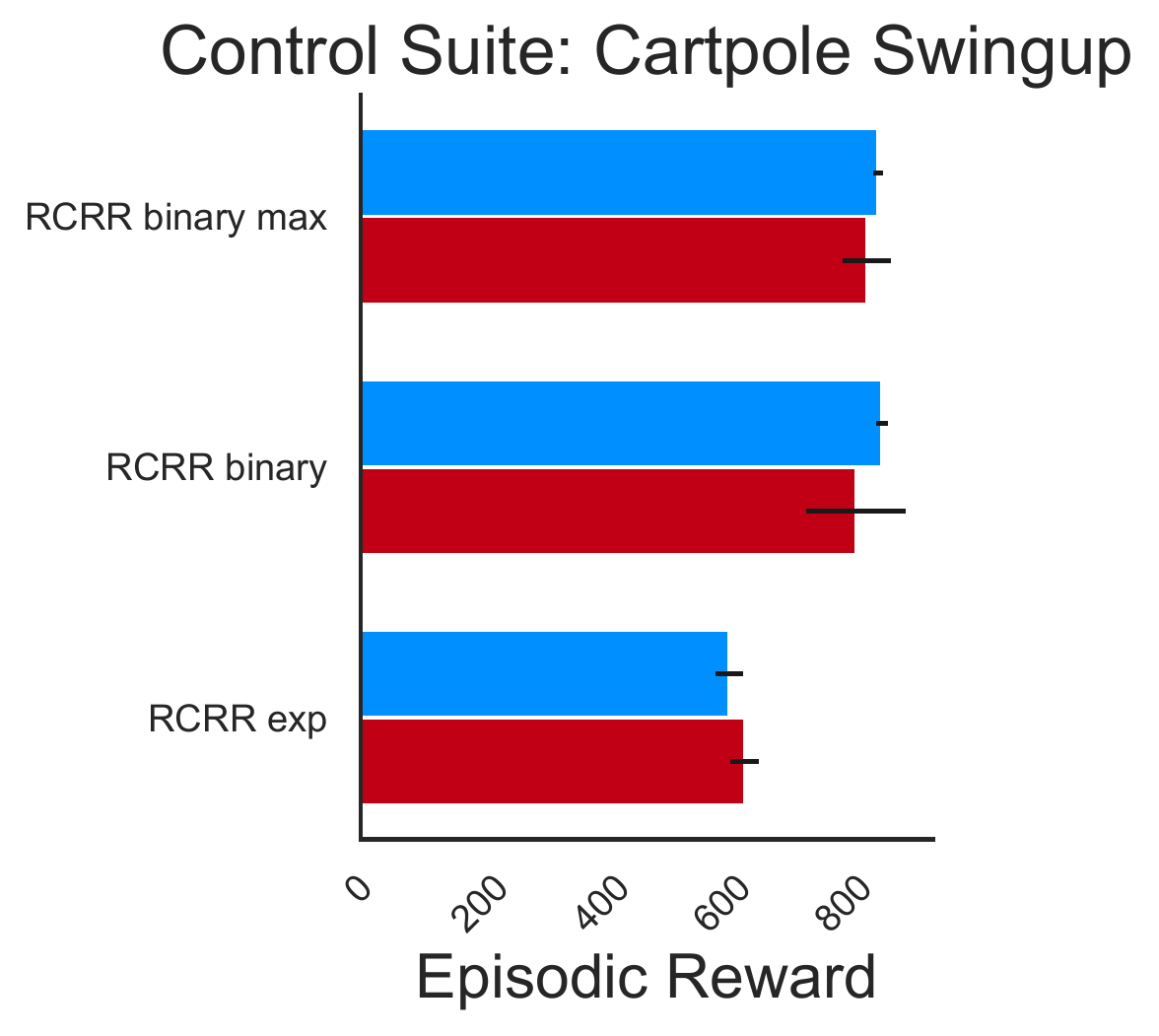} &
\raisebox{1.5cm}{\includegraphics[width=0.18\linewidth]{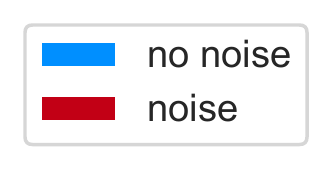}} \\
\includegraphics[width=0.3\linewidth]{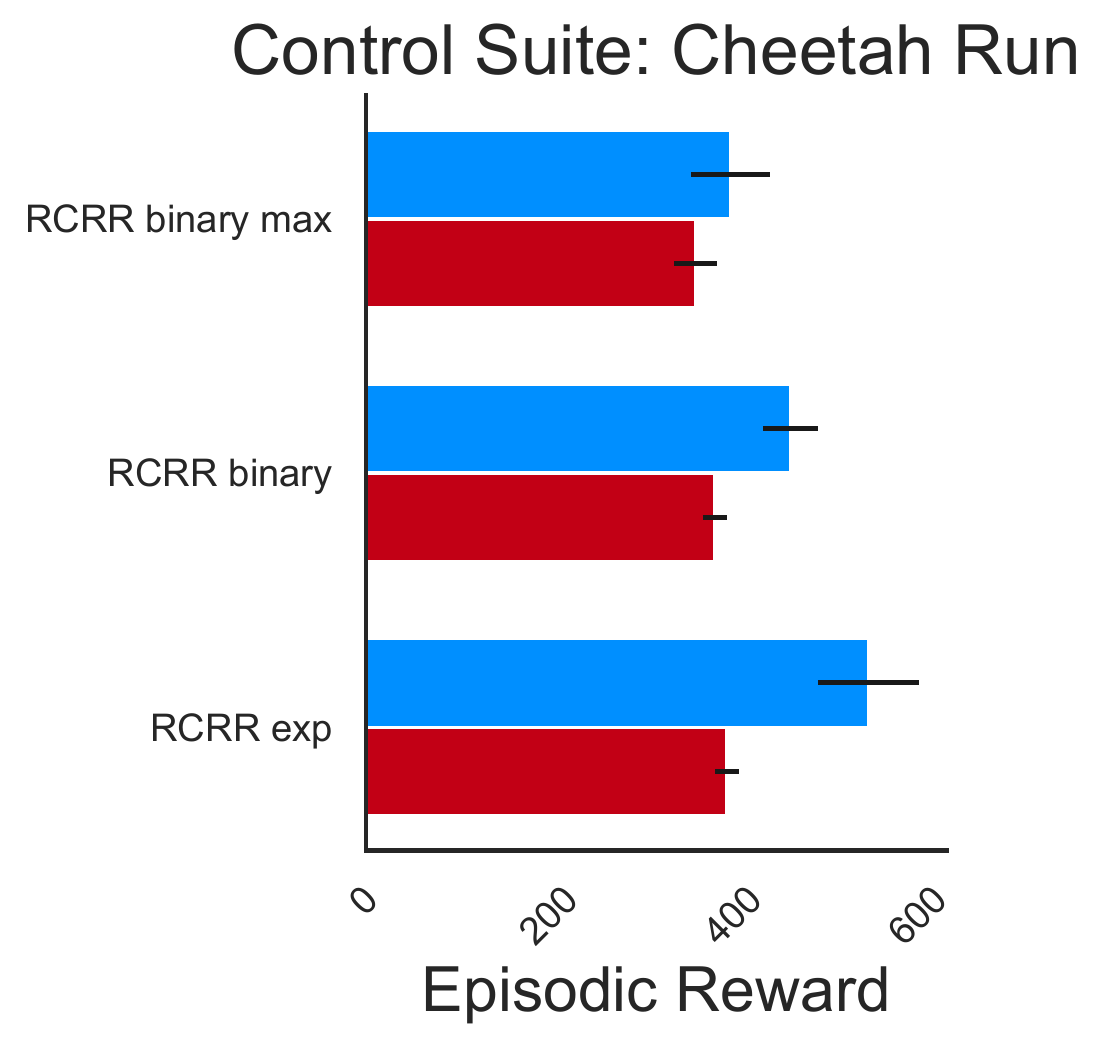} &
\includegraphics[width=0.3\linewidth]{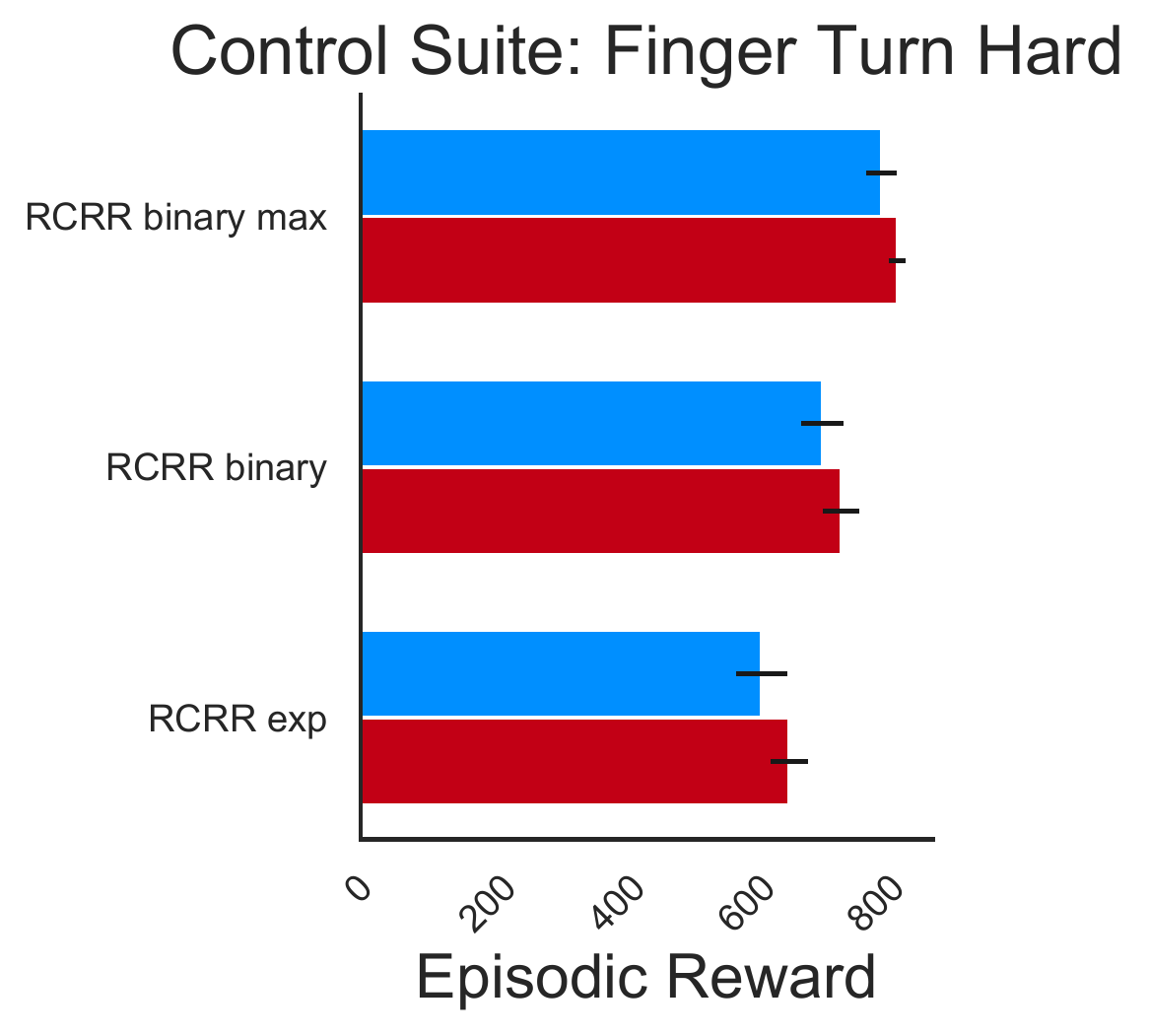} &
\includegraphics[width=0.3\linewidth]{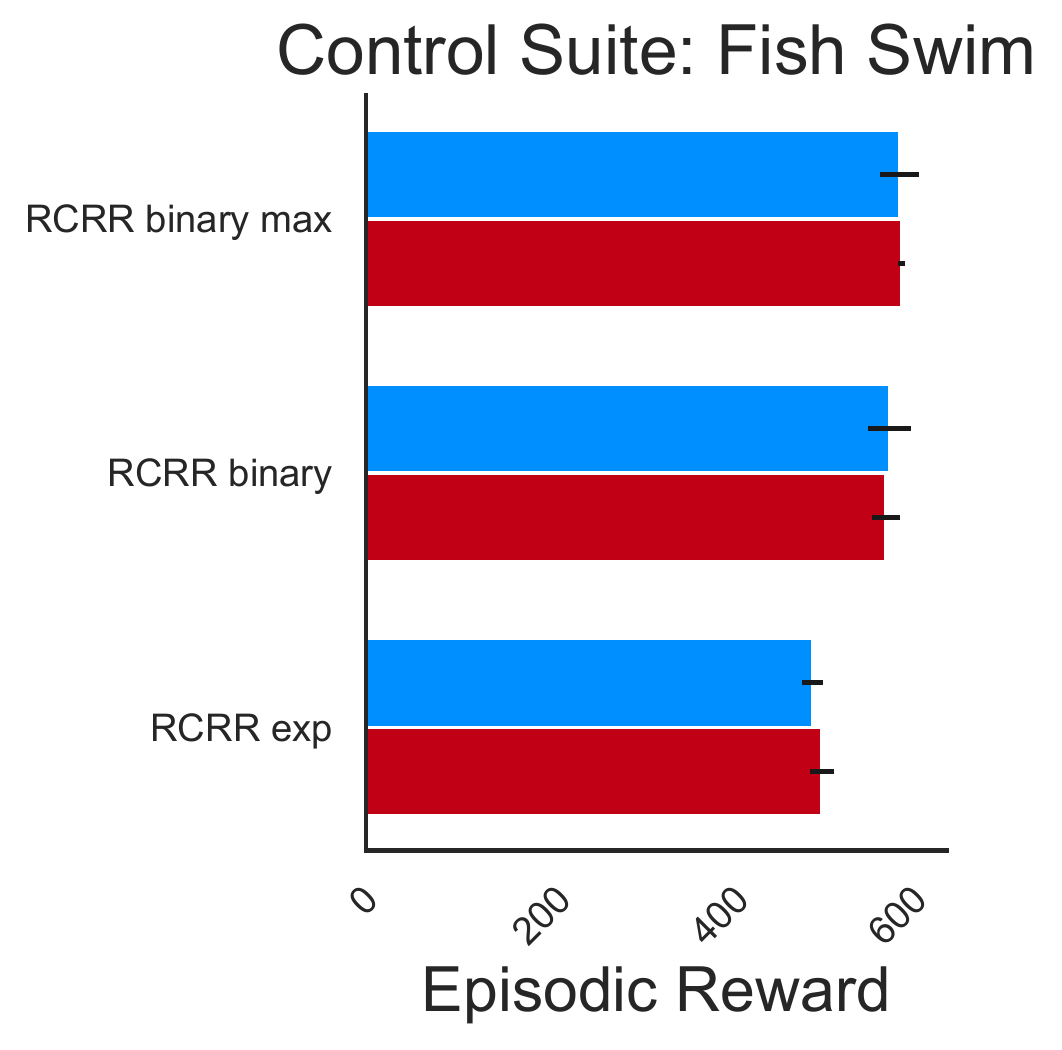} \\
\includegraphics[width=0.3\linewidth]{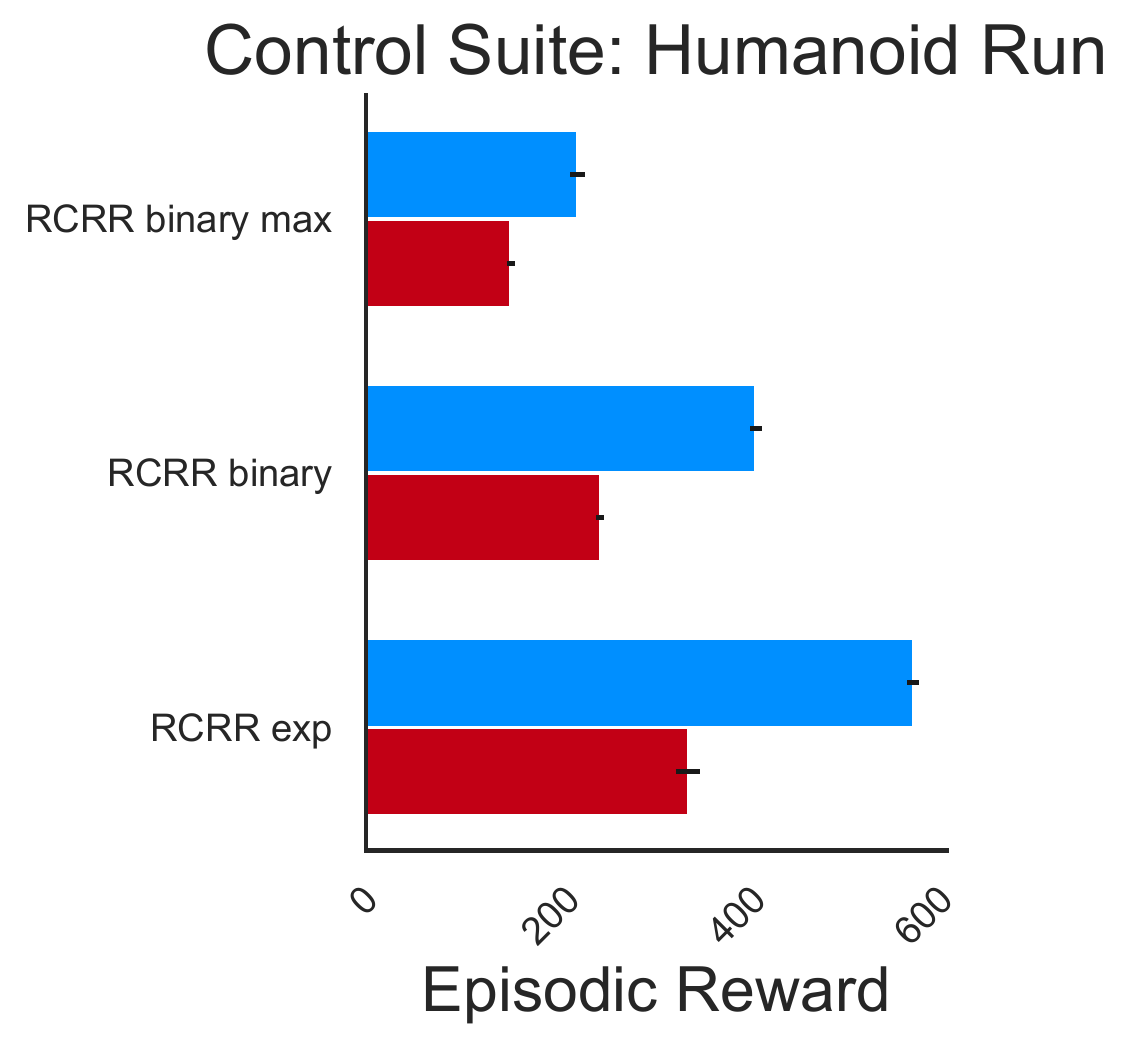} &
\includegraphics[width=0.3\linewidth]{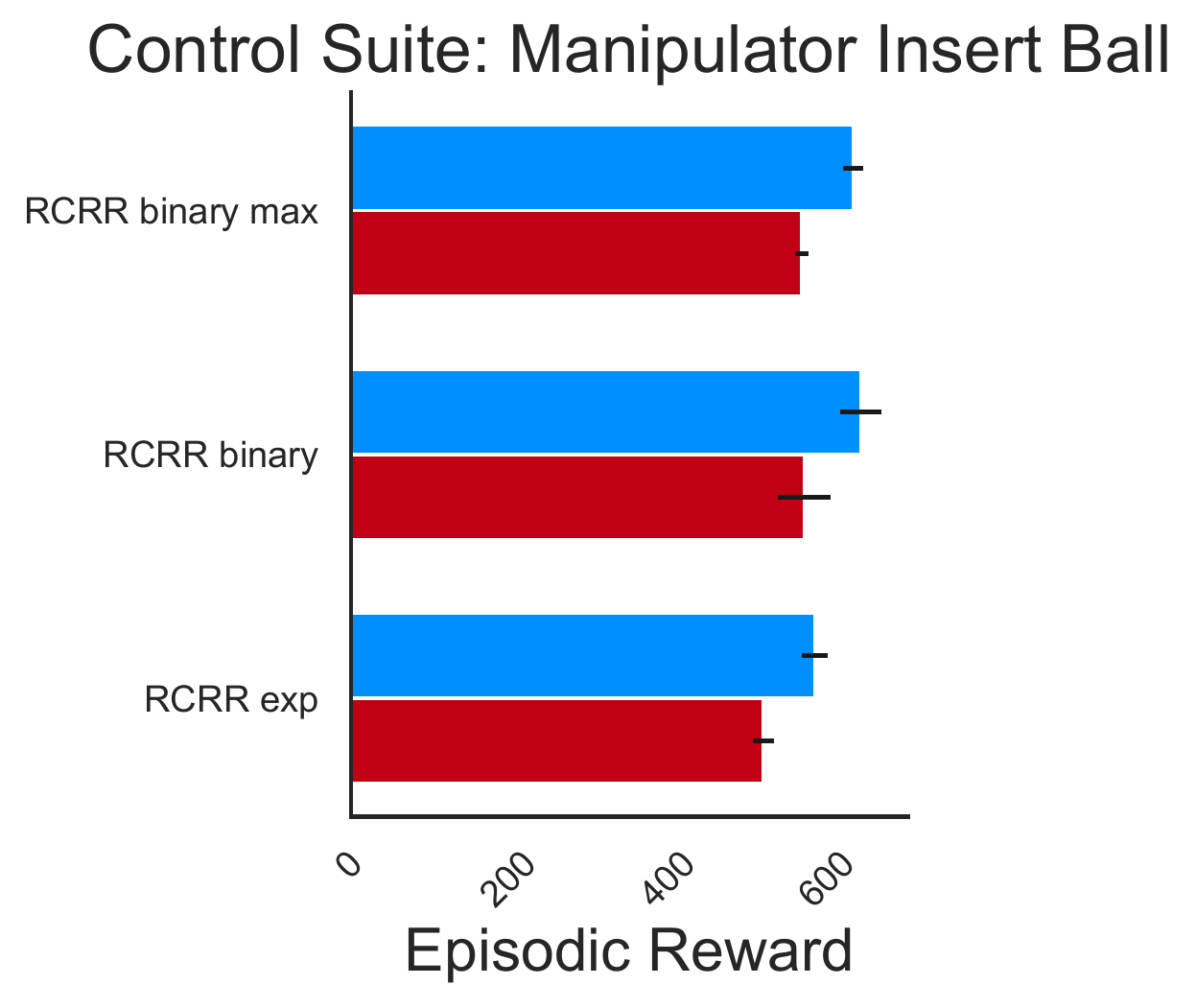} & 
\includegraphics[width=0.3\linewidth]{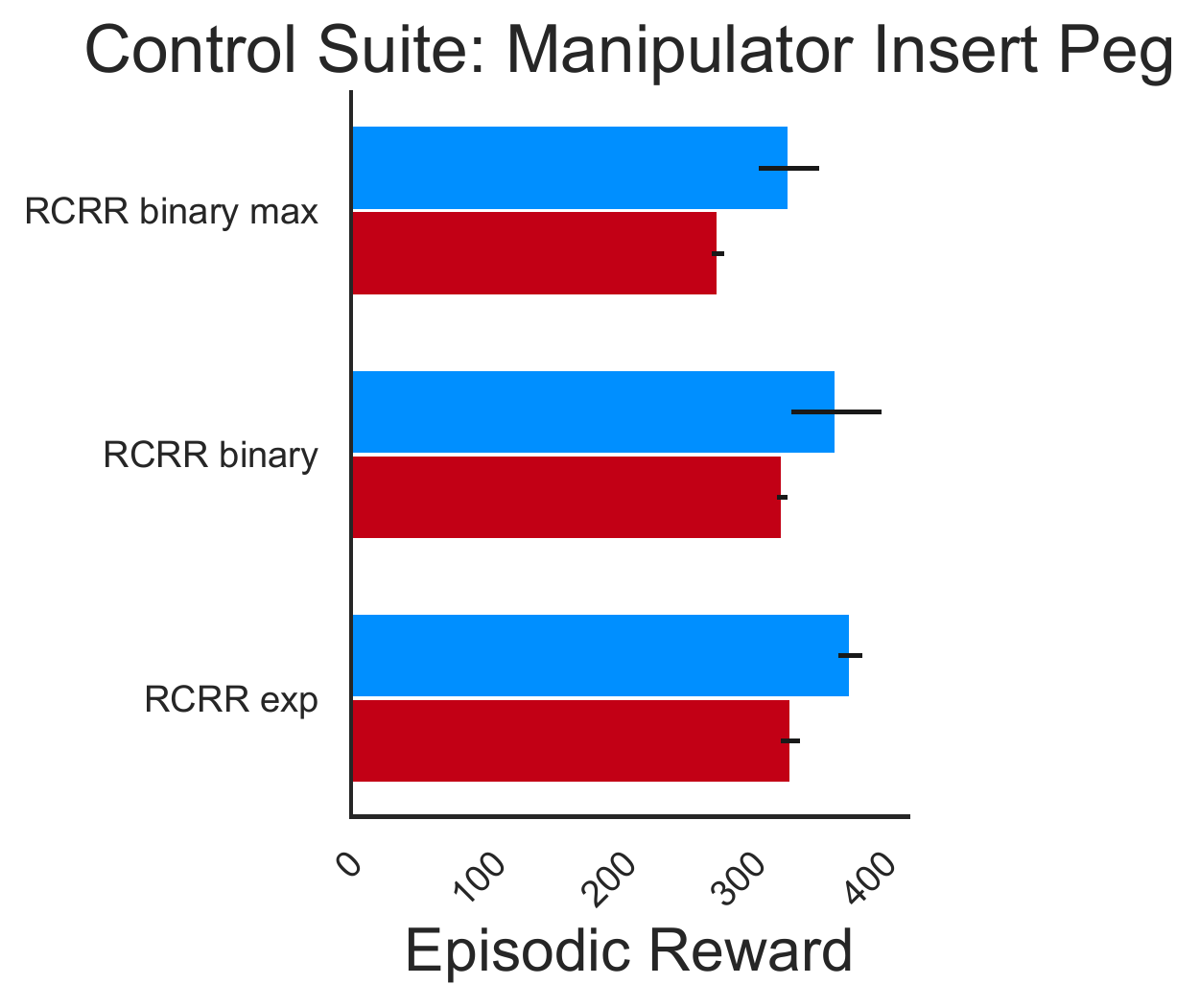} \\
\includegraphics[width=0.3\linewidth]{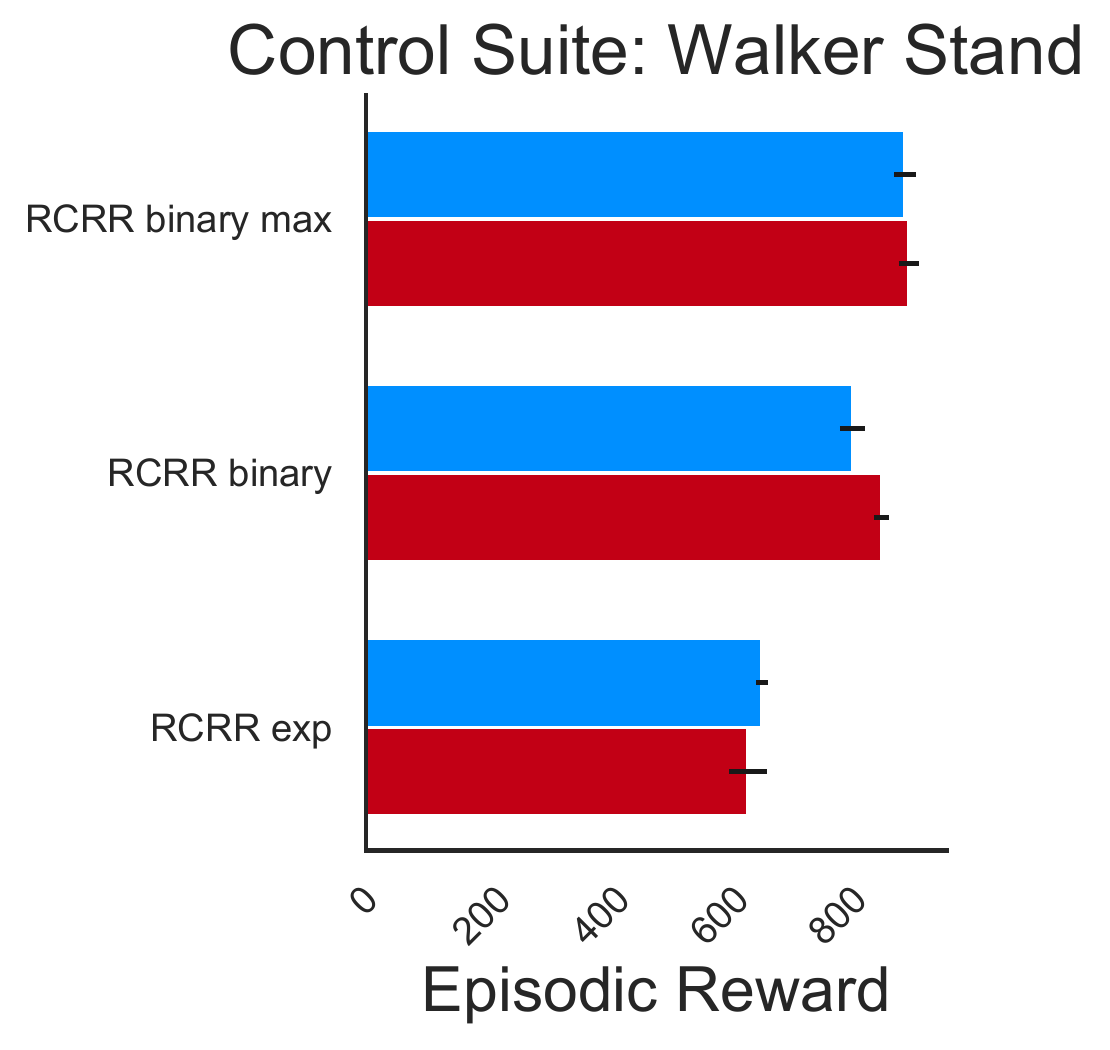} &
\end{tabular}
\caption{Control suite policy noise ablation.\label{fig:cs_noise}}
\end{figure}

\begin{figure}[t!]
\centering
\begin{tabular}{ccc}
\includegraphics[width=0.33\linewidth]{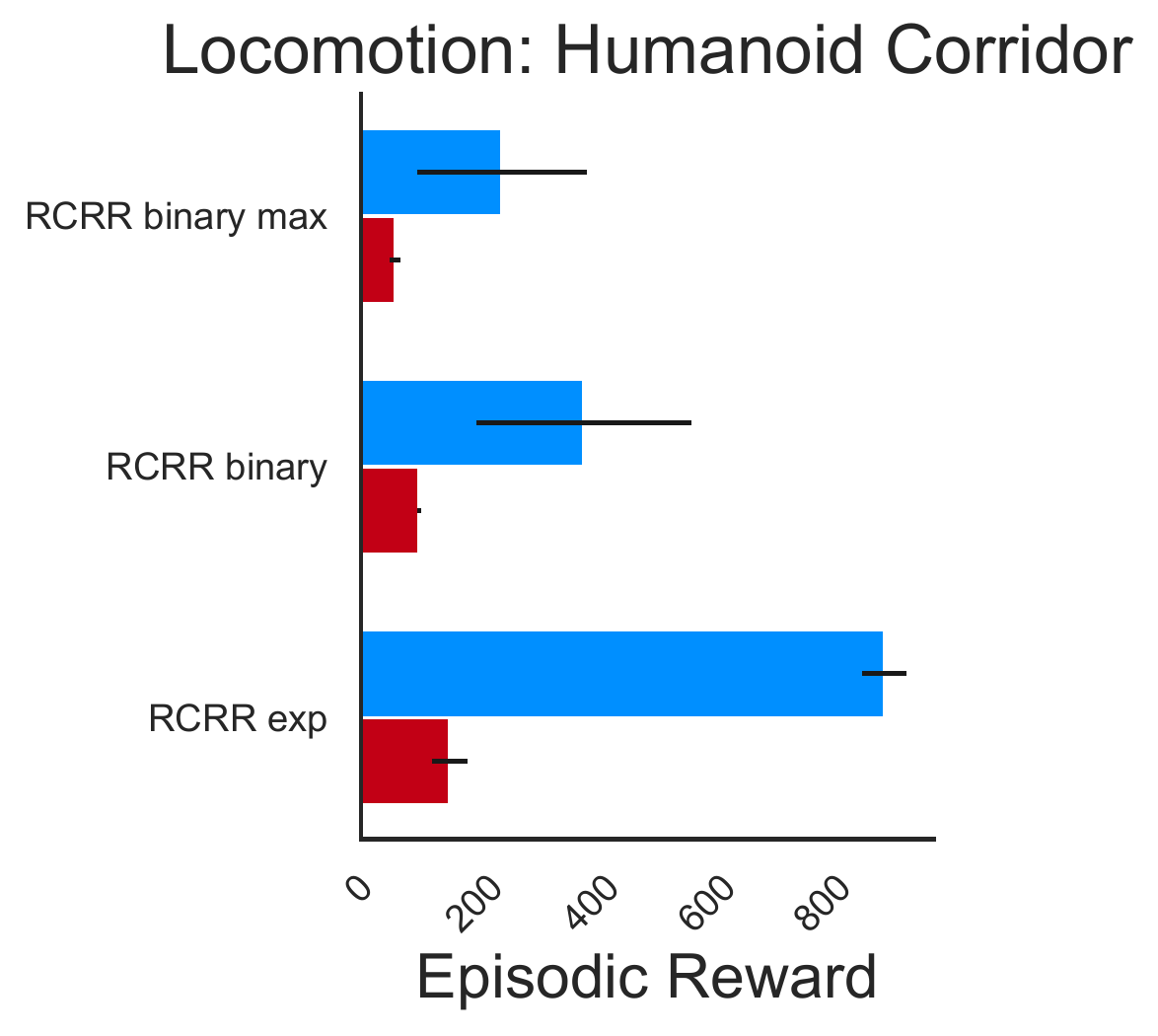} &
\includegraphics[width=0.3\linewidth]{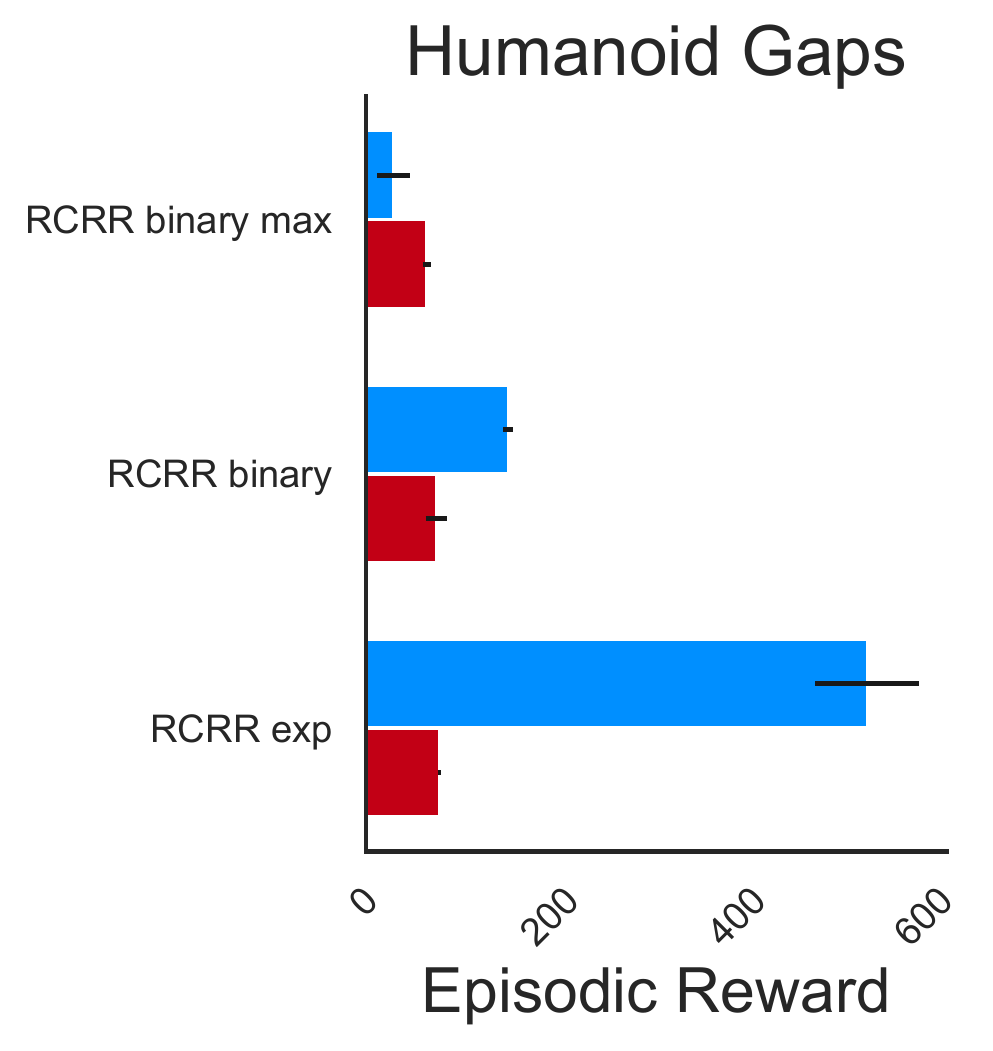}&
\raisebox{1.5cm}{\includegraphics[width=0.18\linewidth]{figures/noise_bar_plots/noise_ablation_legend.pdf}} \\
\includegraphics[width=0.3\linewidth]{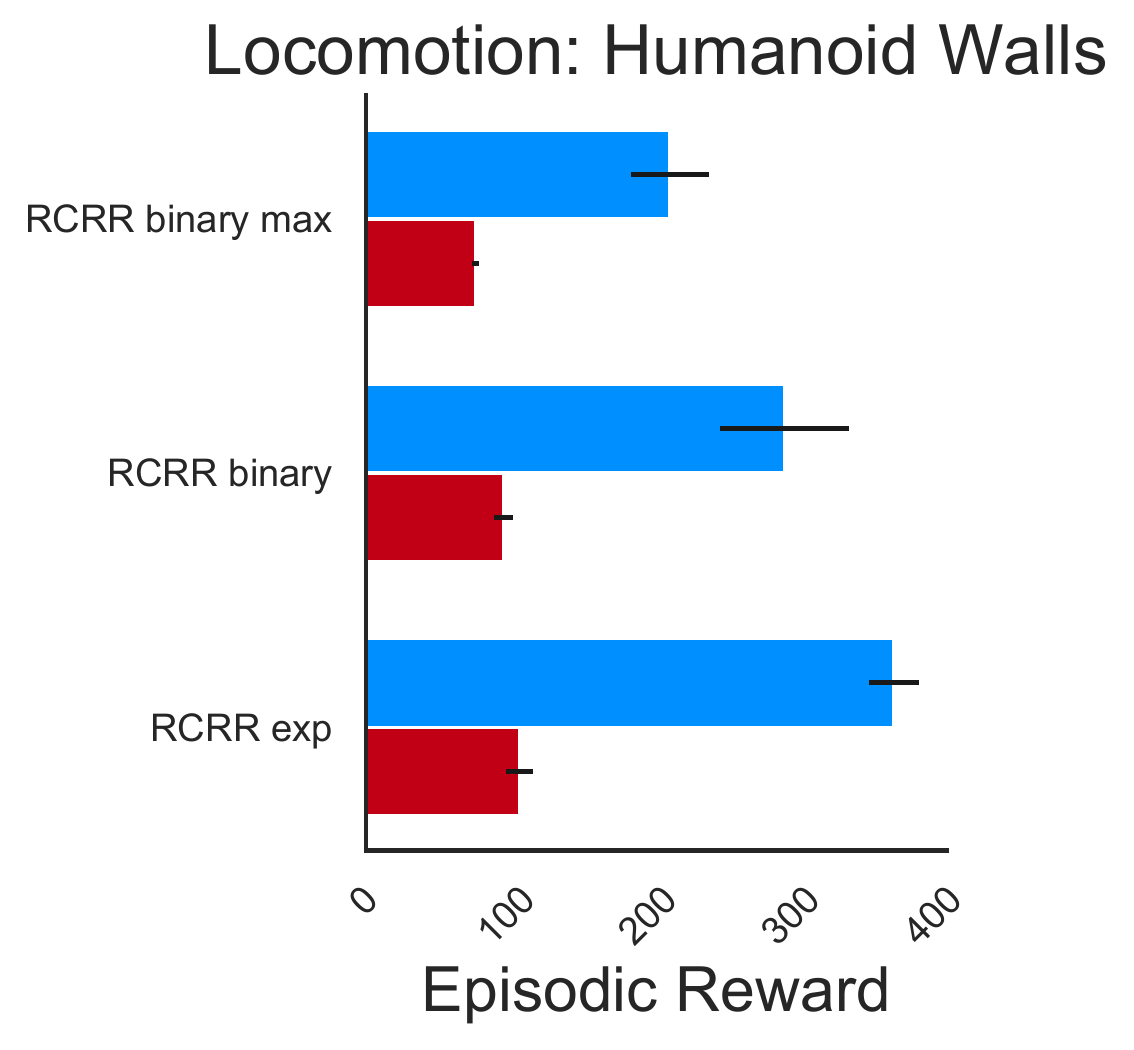} &
\includegraphics[width=0.3\linewidth]{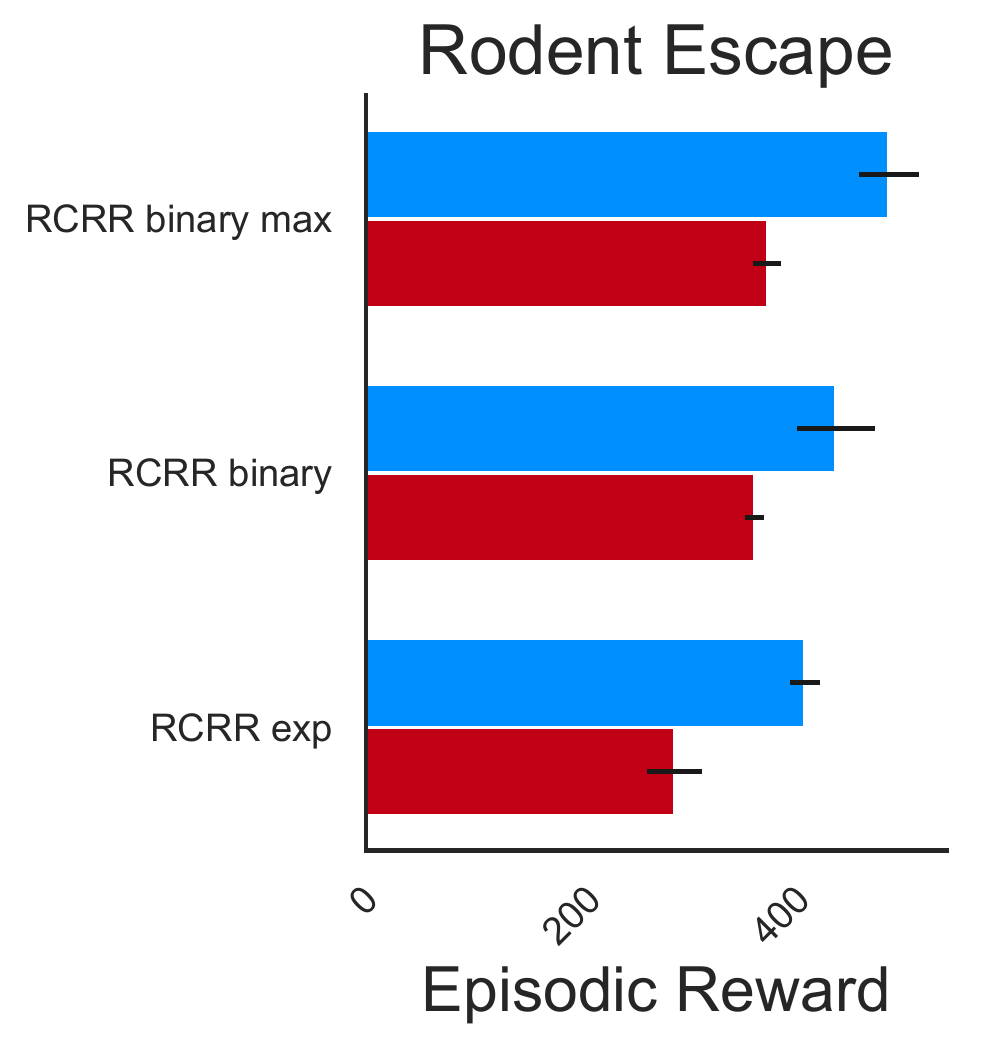}&
\includegraphics[width=0.3\linewidth]{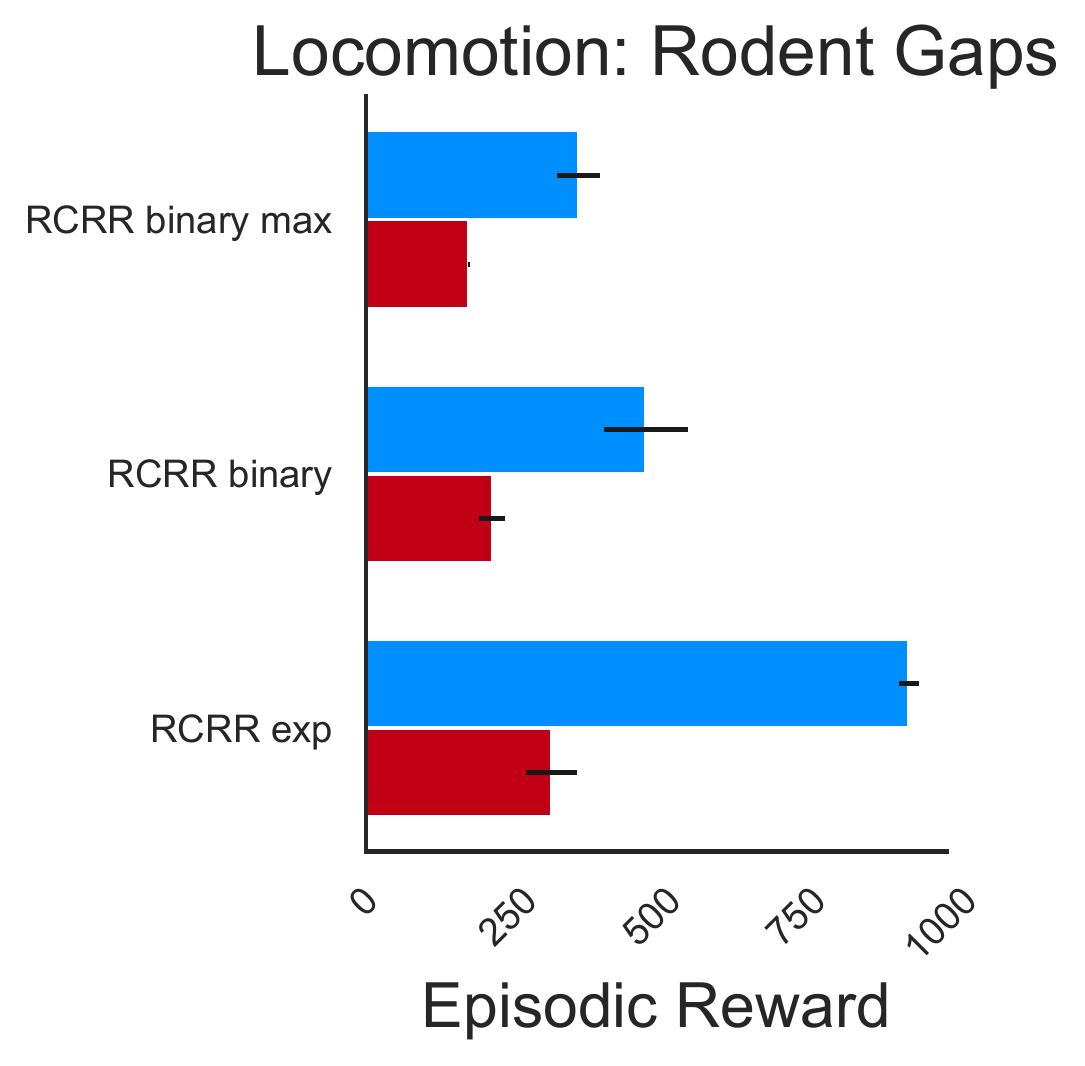} \\
\includegraphics[width=0.3\linewidth]{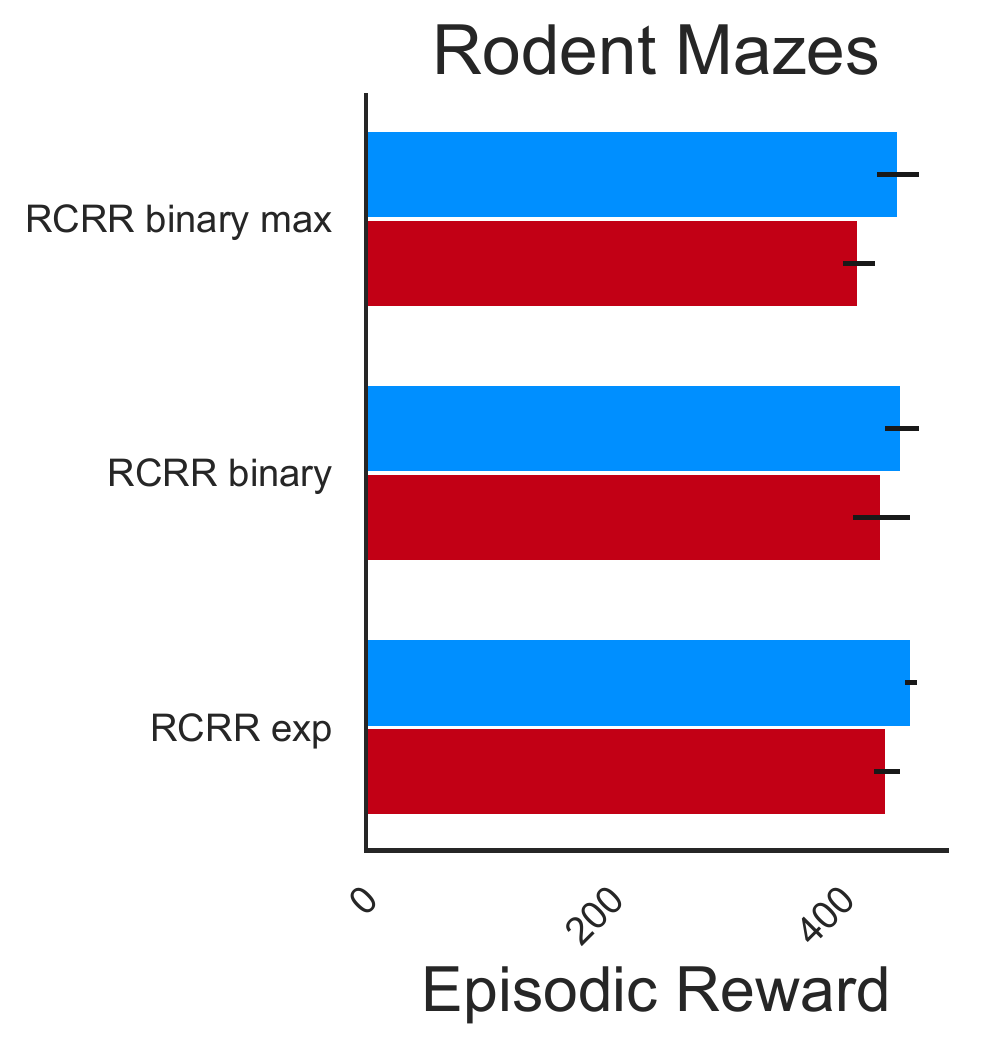}&
\includegraphics[width=0.3\linewidth]{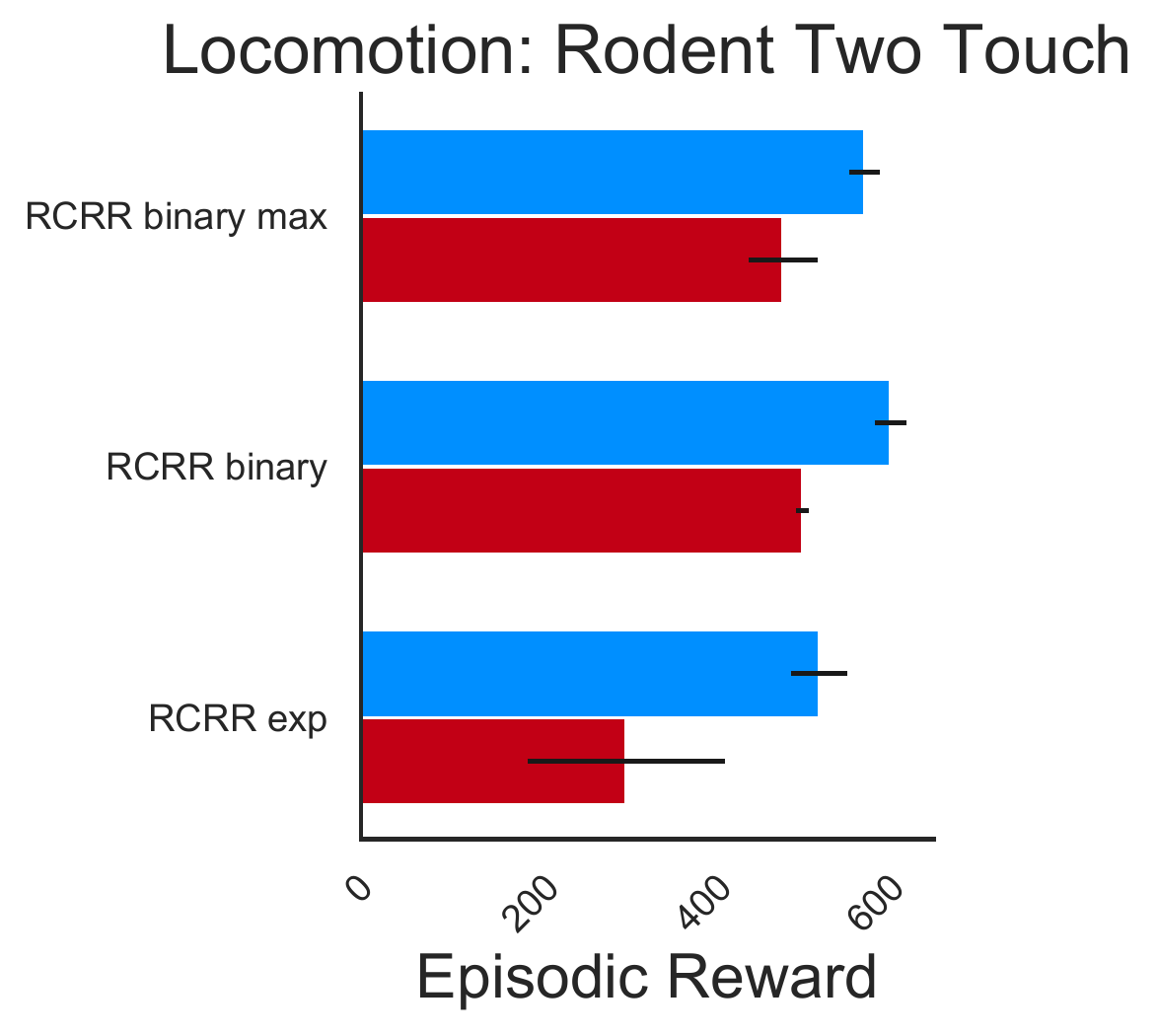} & \\
\end{tabular}
\caption{Locomotion policy noise ablation.\label{fig:loco_noise}}
\end{figure}

\subsection{Effect of noise when running the policy.}
As mentioned in Section~\ref{sec:exp_setup}, we turn off the noise of component Gaussian distributions when evaluating policies. 
In Figure~\ref{fig:cs_noise} and \ref{fig:loco_noise}, we compare the effect of turning off the noise versus not. 
When it comes to environments of low and moderate action dimensionality, having no noise is beneficial, but often its effect is not significant.
For examples, see results for \emph{cartple swingup}, \emph{walker stand}, \emph{cheetah run} in Figure \ref{fig:cs_noise}.

For some environments the effect of noise is pronounced. This is clearly visible in Fig. \ref{fig:loco_noise} for instance. For the humanoid environments in the locomotion suite, the task is terminated when the humanoid falls. This makes it hard to recover from mistakes and may put policies with a high-level of stochasticity at a disadvantage.
Overall, turning off the noise rarely adversely affect the agents' performance.

\subsection{CWP Ablation}
In this section we ablate the effect of CWP.
Notice, when using CWP, we also turn off the component Gaussian distributions' noise as described in the previous section. 
As we use mixture of Gaussians policies, CWP essentially chooses between the mean actions of the component Gaussians.

For the full list of results, please refer to Tables~\ref{tab:full_cs}, \ref{tab:full_mpg}, and \ref{tab:full_loco}. 
In Figure~\ref{fig:cwp}, we pick a few representative environments to illustrate CWP's effect.
As shown in the results, CWP generally helps with its effect especially pronounced when the policies' performance is less than ideal.
When CWP does not help, it also does not lower policy performance.
CWP, however, does require more computation.
We therefore recommend CWP whenever it is not too expensive to execute.

\subsection{Full results table}
Finally, we present all agents's performance in Tables~\ref{tab:full_cs}, \ref{tab:full_mpg}, and \ref{tab:full_loco}.

\begin{landscape}
\begin{table}
\caption{Results on Deepmind Control suite. We divide the Deepmind control suite environments into two rough categories: easy (first 6) and hard (last 3). \label{tab:full_cs}}
\small
\setlength{\tabcolsep}{5pt}
\centering
\begin{tabular}{l|rrrr|rrrrrr}
\toprule
& BC & D4PG & RABM & BCQ & \specialcell{RCRR exp \\ no-CWP} & RCRR exp & \specialcell{RCRR binary \\ no-CWP} & RCRR binary & \specialcell{RCRR binary max \\ no-CWP} & RCRR binary max \\
\midrule
Cartpole Swingup & $386 \pm  6$ & $855 \pm 13$ & $798 \pm 30$ & $444 \pm 15$ & $607 \pm 22$ & $664 \pm 22$ & $859 \pm  9$ & $\mathbf{860 \pm  7}$ & $853 \pm  7$ & $858 \pm 15$\\
Finger Turn Hard & $261 \pm 39$ & $764 \pm 24$ & $566 \pm 25$ & $311 \pm 38$ & $620 \pm 39$ & $714 \pm 38$ & $714 \pm 32$ & $755 \pm 31$ & $805 \pm 23$ & $\mathbf{833 \pm 57}$\\
Walker Stand & $386 \pm  6$ & $\mathbf{929 \pm 46}$ & $689 \pm 13$ & $501 \pm  5$ & $668 \pm 10$ & $797 \pm 30$ & $820 \pm 21$ & $881 \pm 13$ & $908 \pm 17$ & $929 \pm 10$\\
Walker Walk & $417 \pm 33$ & $939 \pm 19$ & $846 \pm 15$ & $748 \pm 24$ & $798 \pm  2$ & $901 \pm 12$ & $920 \pm 18$ & $936 \pm  3$ & $949 \pm  1$ & $\mathbf{951 \pm  7}$\\
Cheetah Run & $407 \pm 56$ & $308 \pm 121$ & $304 \pm 32$ & $368 \pm 129$ & $544 \pm 54$ & $\mathbf{577 \pm 79}$ & $459 \pm 30$ & $453 \pm 20$ & $394 \pm 42$ & $415 \pm 26$\\
Fish Swim & $466 \pm  8$ & $281 \pm 77$ & $527 \pm 19$ & $473 \pm 36$ & $501 \pm 11$ & $517 \pm 21$ & $587 \pm 24$ & $585 \pm 23$ & $\mathbf{599 \pm 21}$ & $596 \pm 11$\\
\midrule
Manipulator Insert Ball & $385 \pm 12$ & $154 \pm 54$ & $409 \pm  4$ & $98 \pm 29$ & $583 \pm 16$ & $625 \pm 24$ & $640 \pm 26$ & $\mathbf{654 \pm 42}$ & $631 \pm 12$ & $636 \pm 43$\\
Manipulator Insert Peg & $324 \pm 31$ & $71 \pm  2$ & $345 \pm 12$ & $194 \pm 117$ & $384 \pm  9$ & $\mathbf{387 \pm 36}$ & $373 \pm 34$ & $365 \pm 28$ & $337 \pm 23$ & $328 \pm 24$\\
Humanoid Run & $382 \pm  2$ & $ 1 \pm  1$ & $302 \pm  6$ & $22 \pm  3$ & $586 \pm  6$ & $\mathbf{586 \pm  6}$ & $417 \pm  6$ & $412 \pm 10$ & $226 \pm  7$ & $226 \pm 11$\\
\bottomrule
\end{tabular}
\end{table}
\end{landscape}

\begin{landscape}
\begin{table}
\caption{Results on manipulation environments. \label{tab:full_mpg}}
\small
\setlength{\tabcolsep}{5pt}
\centering
\begin{tabular}{l|rrr|rrrrrr}
\toprule
& BC & D4PG & RABM & \specialcell{RCRR exp \\ no-CWP} & RCRR exp & \specialcell{RCRR binary \\ no-CWP} & RCRR binary & \specialcell{RCRR binary max \\ no-CWP} & RCRR binary max \\
\midrule
box & $166 \pm  3$ & $ 9 \pm  9$ & $322 \pm  1$ & $340 \pm  4$ & $341 \pm  4$ & $343 \pm  5$ & $351 \pm  3$ & $351 \pm  4$ & $\mathbf{354.4 \pm 4.8}$\\
insertion & $172 \pm  9$ & $ 0 \pm  0$ & $271 \pm  3$ & $308 \pm  7$ & $\mathbf{316.3 \pm 15.0}$ & $305 \pm  3$ & $312 \pm 11$ & $304 \pm 13$ & $309 \pm  7$\\
slide & $91 \pm  9$ & $ 0 \pm  0$ & $262 \pm  4$ & $314 \pm  5$ & $322 \pm 11$ & $317 \pm 12$ & $\mathbf{325.7 \pm 9.4}$ & $321 \pm  3$ & $322 \pm 16$\\
stack banana & $230 \pm  9$ & $ 0 \pm  0$ & $296 \pm 10$ & $320 \pm 10$ & $325 \pm  7$ & $328 \pm  7$ & $\mathbf{341.8 \pm 0.7}$ & $322 \pm 11$ & $328 \pm 15$\\
\bottomrule
\end{tabular}
\end{table}
\end{landscape}

\begin{landscape}
\begin{table}
\caption{Results on locomotion environments. \label{tab:full_loco}}
\small
\setlength{\tabcolsep}{3pt}
\centering
\begin{tabular}{l|rrr|rrrrrr}
\toprule
& BC & D4PG & RABM & \specialcell{RCRR exp \\ no-CWP} & RCRR exp & \specialcell{RCRR binary \\ no-CWP} & RCRR binary & \specialcell{RCRR binary max \\ no-CWP} & RCRR binary max \\
\midrule
Humanoid Corridor & $220 \pm 194$ & $ 4 \pm  4$ & $64 \pm  3$ & $902 \pm 38$ & $\mathbf{918 \pm 14}$ & $384 \pm 185$ & $484 \pm 97$ & $243 \pm 146$ & $245 \pm 180$\\
Humanoid Gaps & $149 \pm  9$ & $ 5 \pm  3$ & $94 \pm  9$ & $537 \pm 55$ & $\mathbf{546 \pm 36}$ & $152 \pm  5$ & $149 \pm 89$ & $29 \pm 17$ & $33 \pm 21$\\
Rodent Gaps & $463 \pm 137$ & $176 \pm  6$ & $420 \pm 70$ & $941 \pm 17$ & $\mathbf{957 \pm 19}$ & $485 \pm 72$ & $492 \pm 117$ & $368 \pm 36$ & $392 \pm  6$\\
\midrule
Humanoid Walls & $138 \pm 77$ & $ 2 \pm  1$ & $131 \pm 25$ & $371 \pm 17$ & $\mathbf{422 \pm 24}$ & $294 \pm 45$ & $289 \pm 72$ & $213 \pm 27$ & $232 \pm 24$\\
Rodent Escape & $388 \pm  3$ & $24 \pm 14$ & $441 \pm 16$ & $420 \pm 14$ & $428 \pm 26$ & $451 \pm 37$ & $444 \pm 45$ & $\mathbf{501 \pm 29}$ & $499 \pm 40$\\
Rodent Mazes & $343 \pm 48$ & $53 \pm  1$ & $\mathbf{478 \pm  7}$ & $471 \pm  5$ & $459 \pm  7$ & $463 \pm 14$ & $464 \pm 12$ & $460 \pm 18$ & $457 \pm 13$\\
Rodent Two Tap & $325 \pm 60$ & $16 \pm  2$ & $598 \pm  2$ & $532 \pm 32$ & $543 \pm 32$ & $615 \pm 18$ & $\mathbf{615 \pm 19}$ & $584 \pm 17$ & $588 \pm 14$\\
\bottomrule
\end{tabular}
\end{table}
\end{landscape}

\end{document}